\newtheorem{theorem}{Theorem}
\newtheorem{lemma}{Lemma}
\newtheorem{example}{Example}
\theoremstyle{definition}
\theoremstyle{remark}
\DeclareMathOperator*{\argmax}{arg\,max}
\newcommand{\E}{{\mathbb E}}
\newcommand{\set}[1]{{\left\{#1\right\}}}
\newcommand{\eps}{\varepsilon}
\newcommand{\dist}{\mathcal D}
\newcommand{\reals}{\mathbb R}
\newcommand{\dd}{\mathrm d}
\title{The Optimal Size of an Epistemic Congress}
\author{%
  Manon Revel \\
  MIT\\
  \texttt{mrevel@mit.edu} \\
  \And
  Tao Lin \\
  Harvard University \\ 
  \texttt{tlin@g.harvard.edu} \\
  \And
  Daniel Halpern \\
  Harvard University \\
  \texttt{dhalpern@g.harvard.edu}
}
\begin{document}
\sloppy

\maketitle

\begin{quote}
    \textit{However small the Republic may be, the Representatives must be raised to a certain number, in order to guard against the cabals of a few; and however large it may be, they must be divided to certain number, in order to guard against the confusion of a multitude. (Federalist Paper No.~10)}
    \hfill -- James Madison
\end{quote}

\begin{abstract}
  We analyze the optimal size of a congress in a representative democracy. We take an \emph{epistemic} view where voters decide on a binary issue with one ground truth outcome, and each voter votes correctly according to their competence levels in $[0, 1]$. Assuming that we can sample the best experts to form an \emph{epistemic congress}, we find that the optimal congress size should be linear in the population size. This result is striking because it holds even when allowing the top representatives to be accurate with arbitrarily high probabilities. We then analyze real world data, finding that the actual sizes of congresses are much smaller than the optimal size our theoretical results suggest. We conclude by analyzing under what conditions congresses of sub-optimal sizes would still outperform direct democracy, in which all voters vote.
\end{abstract}

\section{Introduction}
Modern governments often take the form of a representative democracy, that is, a college of chosen representatives form a congress to make decisions on behalf of the citizenry. Clearly, the performance of the congress depends the number of representatives, and the optimal number of representatives has been subject to great debates (see activists at \url{https://thirty-thousand.org} who advocate for enlarging the congress). In the Federalist Paper No.~56, Madison argues that there shall be \textit{a representative for every thirty thousand inhabitants} and the American congress was actually enlarged every ten years between 1785 and 1913 from 65 to 435, adapting the evolution of the States' population~\citep{szpiro2010numbers}, and remained constant since 1913. 

Quantitative research aiming at rationalizing the optimal congress size dates back to the 1970s. 
\citet{taagepera1972size} concluded that the number of representatives should be the cube-root of the population size. 
These findings are regarded as seminal~\citep{jacobs2015explaining} and have influenced political decisions and referendums, such as the 2020 Italian referendum to reduce the size of both chambers from 945 to 600 parliamentary~\citep{margaritondo2021size, 945}.

Yet, recent work using machinery from physics and economics revisited these claims and showed that, under different assumptions, the optimal number should be larger, at least proportional to the square-root of the population size~\citep[e.g.,][]{auriol2012optimal, margaritondo2021size}. In particular, \citet{magdon-ismail_mathematical_2018} explored an \emph{epistemic} set-up where voters are grouped into pods of size $L$, and one representative is selected from each pod. The authors find that the congress size ought to be linear under this model when voting is cost-less. Adding that the cost of the congress is polynomial in the number of representatives, and the benefit from finding the ground truth is polynomial in the number of voters, the optimal congress size decreases to $O(\log n)$.

Finally, as observed by \citet{magdon-ismail_mathematical_2018}, a congress in the real world resembles an ensemble of classifiers in machine learning: classifiers are ``voters'' who predict a binary value.
To obtain a good ensemble of classifiers, one can measure the accuracy of all classifiers and keep only the most accurate ones.  A key question then is: how many classifiers should we keep? 

\subsection{Our Contribution}
Through novel proofs techniques, we strengthen the pessimistic results of \citet{magdon-ismail_mathematical_2018} for congress under the epistemic approach, finding that even with the ability to identify the most accurate members of society to form a congress, the optimal congress size remains linear in the size of population size. However, we find that all is not lost for congresses of more practical sizes. We follow this up with comparisons of different sizes and identify conditions for smaller congresses to be more accurate than when the entire society votes. 

In the epistemic setting, voters decide on a binary issue and aim at differentiating between the ground truth correct choice, the value $1$, and its alternative, $0$. Each voter has a competence level in $[0, 1]$ representing the probability that the voter votes correctly. Further, the competence levels of the population are drawn according to some distribution. We take the idealized view that given a target size $k$, we can identify the $k$ most competent voters in society to form the congress, who then vote on the issue following the majority opinion. We conclude that, should voters' competence levels be the expected values of the order statistics from uniform distribution $\mathcal{U}(0, 1)$, the optimal size of congress is between $(3 - 2\sqrt{2})n$ and $\frac{n}{2}$. For arbitrary distributions where the maximum competence level is bounded away from $1$ and the inverse cumulative distribution function is Lipschitz continuous, the optimal size is $\Theta(n)$ with more refined bounds depending on the distribution.

We then turn to studying real-world data on the sizes of countries' representative bodies. Here, we notice that congresses in the real world are of order cube-root of the population size, hence much smaller than the optimal size (linear) our theoretical results suggest. We then find under what conditions on the distribution of competence level a smaller congress still outperforms the majority. If the population is unbiased or biased towards 0, a congress composed of experts with expertise level higher that $0.5$ trivially outperforms the majority. We further find that, for a population whose average level of competence is biased above $0.5$,
a relatively small congress can still be better than the majority as long as the bias is small enough, and worse when the bias is large. 
We characterize this threshold for both one-person and $n^r$-person congresses.  

\subsection{Related Work}
\label{sec:related}
The use of an epistemic approach, using voting to aggregate objective opinions, is well studied in computational social choice~\citep{brandt_handbook_2016}. One particularly important result is known as the Condorcet Jury Theorem~\citep{condorcet, grofman1983thirteen}, which shows that in the limit, a majority vote by an increasing number of independent voters biased towards the correct outcome will be correct with probability approaching 1. Subsequent work studied extensions of the Condorcet Jury Theorem in instances where the voters are inhomogenuous, dependent, or strategic, as summarized in a survey paper by \citet{nitzan2017collective}. 

The first work about the optimal size of parliaments focused on maximizing parliament's efficiency~\citep{taagepera1972size}. For them, maximizing efficiency was equivalent to minimizing the communication time spent on discussions with constituents --- the authors ultimately stated that the average time spent talking to the constituents per congress-members should be equal to the time spent talking to the other congress-members. Hence, \citet{taagepera1972size} argued that the optimal congress size should follow a ``cube-root law''. \citet{margaritondo2021size} revisited this work and found a flaw in the original proof, arguing that the optimal size under this model should in fact be $\Theta(\sqrt{n})$. Empirical papers~\citep{taagepera1972size, auriol2007more} that focused on finding the optimal number of representatives used country data to back up the ``square-root law'' result. \citet{jacobs2015explaining}, on the other hand, investigate potential causal effects of different congress sizes. 

The work of \citet{auriol2012optimal} also aims to derive the optimal number of representatives for a society.
However, their model lies in stark contrast to the epistemic one: they assume that voters have preference-based utilities, with an uninformative prior, and the representatives are chosen uniformly at random from society, while we take the best. They reach the conclusion that the optimal size of congress is proportional the square-root of the population size.
Further, \citet{zhao2020allometric} look at the optimal number of representatives as the minimum size of a node set such that all nodes in that set can reach other nodes in at most $m$ steps (where $m=\Theta(\log n)$ is an exogenous threshold). In this set up, they obtain an $O(n^\gamma)$ result with $\frac{1}{3}\leq\gamma\leq\frac{5}{9}$.

Finally, we build upon the work of \citet{magdon-ismail_mathematical_2018}. There, the authors consider a model for representative democracy where agents are grouped in $K$ groups of sizes $L$ and choose one representative per group. Importantly, the competences are drawn from a distribution $\dist$ after the agents are grouped. The authors then derive the group size that maximises the probability that the representatives make the correct decision. They show the optimal group size is constant, so the optimal number of representatives (which is, in the simplest set-up, the population size divided by the number of groups) should then be linear in the population size. The fact that the level of competence is drawn after grouping people imposes a trade-off between how accurate the representatives will be and how many representatives ($n/L$) there are. Indeed, the best agent in each group has competence level that is the top order statistic of the distribution with $L$ draws. With a uniform distribution, the top level of competence is of order $1 - \frac{1}{L+1}$, which is large if $L$ is large, i.e., when $n/L$ is small. The trade-off implied by the model is in favor of large congresses. The results of \citet{magdon-ismail_mathematical_2018} are pessimistic in that it is impractical to have congresses as big as a constant fraction of the population. One could wonder whether the optimal congress size remains linear if one allows the highest competences to become arbitrarily large. This is precisely the gap we fill. 

\section{Model}

Let $n$ be the number of voters in the society. Following the epistemic approach, voters need to choose between two options, $0$ and $1$, where $1$ is assumed to be the ground truth. Each voter $i$ is endowed with a level of expertise (or competence) $p_i \in [0,1]$, which is the probability that she votes ``correctly'' (i.e., votes for option $1$). Depending on the instance, we will sometimes assume that the $p_i$s are sampled from some distribution $\dist$ whose support is contained in $[0, 1]$ and other times assume the $p_i$s are deterministic (perhaps also depending on $n$ which will always be clear from context). 

Given $p_1, \ldots, p_n$, we sort voters by decreasing competence level, denoted by $p_{(1)} \ge \cdots \ge p_{(n)}$, where $p_{(i)}$ is the competence level of the $i^{\text{th}}$ best voter.\footnote{Note that, for notational convenience, this is the reverse of normal order statistics.}
Let $X_{(1)}, \ldots, X_{(n)}$ be Bernoulli random variables denoting their votes, with $X_{(i)}=1$ meaning a correct vote for the $i^{\text{th}}$ best voter and $0$ otherwise; the $X_{(i)}$s are conditionally independent given $p_{(i)}$s, and $\Pr[X_{(i)}=1\mid p_{(i)}] = p_{(i)}$.

A congress of size $k$ is composed of the $k$ best voters in society and makes a correct decision when a strict majority are correct, $\sum_{i=1}^k X_{(i)} > k/2$.\footnote{A strict rather than weak majority here corresponds to tie-breaking in favor of the incorrect outcome. Tie-breaking in the other direction would not asymptotically change our results.} 
One may envision other rules to select the congress members, for example the group representatives analyzed by \citet{magdon-ismail_mathematical_2018}. Here we take the best $k$ voters, and this can be seen as a best-case scenario for accuracy. Strikingly, as we will show, even under this strong assumption, the optimal number of representatives is already very large, which suggests that the optimal number would even be larger in more realistic scenarios.

\section{Optimal Congress Size}\label{sec:problem-uninformed}
In this section, we prove theoretical bounds on the optimal size of congress for several natural distributions. We begin by formally stating our problem.

For fixed voter competencies $p_{(1)} \ge \cdots \ge p_{(n)}$, we define $K^\star$ to be the optimal size of congress, the size $k$ that maximizes the probability that the representatives make a correct decision (for convenience breaking ties in favor of an arbitrary odd $k$\footnote{Note that there must always be an optimal $k$ that is odd, as for any even $k$, due to our strict majority constraint, $k - 1$ must have overall accuracy at least as high.}). Formally, 
\begin{equation*}
 K^\star \in \argmax_{1\le k\le n}\left\{ \Pr\left[\sum_{i=1}^k X_{(i)} > \frac{k}{2} ~\bigg|~ X_{(i)} \sim \mathrm{Bern}(p_{(i)})\right] \right\}.
\end{equation*}
We note that since $K^{\star}$ is a function of the voter competencies, if these competencies are random samples, then $K^{\star}$ is a random variable.
However, we sometimes assume for tractability that the competencies match their expectation, that is, $p_{(i)}$ is exactly equal to the expectation of the $(n + 1 - i)$'th order statistic of $n$ draws from $\dist$. In this case, $K^{\star}$ is a deterministic value for each $n$.

For fixed voter competencies $p_{(1)} \ge \cdots \ge p_{(n)}$, let $\mathcal{E}_{k}^j$ be the event that exactly $j$ of the top experts out of $n$ are correct.
Our characterization of the optimal size $K^\star$ relies on the following key lemma. 

\begin{lemma}
\label{lem:new-core}
For fixed competencies $p_{(1)} \ge \cdots \ge p_{(n)}$, for all odd $k \le n$ with $k = 2\ell + 1$,
\vspace{-0.7em}
\begin{itemize}
    \item If $\frac{\Pr[\mathcal{E}_{k}^{\ell + 1}]}{\Pr[\mathcal{E}_{k}^\ell]} < \frac{p_{(k+1)}p_{(k+2)}}{(1 - p_{(k+1)})(1-p_{(k+2)})}$, then $K^{\star} \ne k$.
\vspace{-0.5em}
    \item If $\frac{\Pr[\mathcal{E}_{k}^{\ell + 1}]}{\Pr[\mathcal{E}_{k}^\ell]} > \frac{p_{(k+1)}p_{(k+2)}}{(1 - p_{(k+1)})(1-p_{(k+2)})}$, then $K^{\star} \ne k + 2$.
\vspace{-0.4em}
\end{itemize}
\end{lemma}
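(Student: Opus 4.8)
The plan is to compare the \emph{accuracy} of a best-$k$ congress with that of a best-$(k+2)$ congress directly, show that their difference has a clean closed form in terms of $\Pr[\mathcal{E}_k^\ell]$, $\Pr[\mathcal{E}_k^{\ell+1}]$ and the two competencies $p_{(k+1)}, p_{(k+2)}$, and then read off both bullet points from the sign of that difference. Write $A_m := \Pr[\sum_{i=1}^m X_{(i)} > m/2]$ for the probability a best-$m$ congress is correct, and set $S := \sum_{i=1}^k X_{(i)}$. Since $k = 2\ell+1$ we have $A_k = \Pr[S \ge \ell+1]$, and since $k+2 = 2(\ell+1)+1$ we have $A_{k+2} = \Pr[S + X_{(k+1)} + X_{(k+2)} \ge \ell+2]$, where I implicitly assume $k+2 \le n$ so that both congresses exist.

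The key step is to condition on the pair of new votes $(X_{(k+1)}, X_{(k+2)})$, which by the conditional independence of the votes is independent of $S$. In the three possible outcomes --- both incorrect (probability $(1-p_{(k+1)})(1-p_{(k+2)})$), exactly one correct, and both correct (probability $p_{(k+1)}p_{(k+2)}$) --- the best-$(k+2)$ congress is correct iff $S \ge \ell+2$, $S \ge \ell+1$, and $S \ge \ell$ respectively. Using that these three probabilities sum to $1$, together with the telescoping identities $\Pr[S \ge \ell+2] = A_k - \Pr[\mathcal{E}_k^{\ell+1}]$ and $\Pr[S \ge \ell] = A_k + \Pr[\mathcal{E}_k^\ell]$, the expression for $A_{k+2}$ collapses to
\[
A_{k+2} - A_k = p_{(k+1)}p_{(k+2)}\,\Pr[\mathcal{E}_k^\ell] - (1-p_{(k+1)})(1-p_{(k+2)})\,\Pr[\mathcal{E}_k^{\ell+1}].
\]

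From here both conclusions are immediate: dividing by the positive quantity $(1-p_{(k+1)})(1-p_{(k+2)})\Pr[\mathcal{E}_k^\ell]$ shows that the sign of $A_{k+2} - A_k$ equals the sign of $\frac{p_{(k+1)}p_{(k+2)}}{(1-p_{(k+1)})(1-p_{(k+2)})} - \frac{\Pr[\mathcal{E}_k^{\ell+1}]}{\Pr[\mathcal{E}_k^\ell]}$, which is exactly the comparison in the hypotheses (well-defined precisely under the non-degeneracy implicit in the statement, i.e.\ $\Pr[\mathcal{E}_k^\ell] > 0$ and $p_{(k+1)}, p_{(k+2)} < 1$). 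Under the first hypothesis $A_{k+2} > A_k$, so $k$ is strictly dominated by $k+2$ and $K^\star \ne k$; under the second hypothesis $A_{k+2} < A_k$, so $k+2$ is strictly dominated by $k$ and $K^\star \ne k+2$, with the strictness making the tie-breaking convention irrelevant. The only place any care is needed is the middle step --- correctly lining up the shifted majority thresholds ($\ell+1$ for size $k$ versus $\ell+2$ for size $k+2$) and verifying the telescoping so that the closed form drops out with no leftover terms; there is no analytic difficulty, only the bookkeeping of the three-way case split.
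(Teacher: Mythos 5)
Your proposal is correct and matches the paper's argument: both compare the accuracy of the best-$k$ and best-$(k+2)$ congresses and reduce to the identity $A_{k+2}-A_k = p_{(k+1)}p_{(k+2)}\Pr[\mathcal{E}_k^\ell] - (1-p_{(k+1)})(1-p_{(k+2)})\Pr[\mathcal{E}_k^{\ell+1}]$, then read off the sign. Your derivation via conditioning on the two new votes and telescoping is just a more explicit bookkeeping of the paper's ``only the flips matter'' argument, so the approaches are essentially the same.
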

The proof of the lemma involves comparing a congress of some specific size $k$ to one of size $k + 2$ (recall that chose $K^{\star}$ to be odd, so we may as well restrict ourselves to odd $k$). Clearly, if the top $k + 2$ experts have a higher chance of being correct than $k$, then $k$ cannot be optimal (and vice-versa). Importantly, this gives us a sufficient condition to rule out certain values of $k$. For example, if we know that for all $k < c$ the first condition of the lemma holds, then that implies $K^{\star} \ge c$.
\vspace{-0.5em}
\begin{proof}[Proof of \Cref{lem:new-core}]
    For any $k \le n$, let $q_k = \sum_{j = \lfloor q^k / 2 \rfloor + 1}^k \Pr[\mathcal{E}_k^j]$ be the probability that a congress of size $k$ will be correct. We have that $K^{\star} \in \argmax_{k \le n} q_k$.
    Fix $p_{(1)} \ge \cdots \ge p_{(n)}$ and a specific $k = 2\ell + 1$.
    We will show that $q_{k + 2} > q_k$ (resp. $<$) is equivalent to $\frac{\Pr[\mathcal{E}_{k}^{\ell + 1}]}{\Pr[\mathcal{E}_{k}^\ell]} < \frac{p_{(k+1)}p_{(k+2)}}{(1 - p_{(k+1)})(1-p_{(k+2)})}$ (resp. $>$). If $q_{k + 2} > q_k$ (resp. $<$), then $K^{\star} \ne k$ (resp. $k + 2$) as that would imply $K^{\star}$ is not optimal. 
    
    Let us now consider $q_{k + 2} - q_k$. The only way the two new experts can change the outcome from incorrect to correct is when exactly $\ell$ of the top $k$ experts were correct (so the majority of $k$ were incorrect), and the two new experts are correct. Conversely, the only scenario in which a correct outcome becomes incorrect is when exactly $\ell + 1$ of the top $k$ experts are correct while the two new experts are incorrect. Since $\mathcal{E}_{k}^j$ is the event that exactly $j$ of the top $k$ experts out of $n$ are correct, we can formally write the above as
	$$
    q_{k + 2} - q_k = \Pr[\mathcal{E}_{k}^\ell] \cdot p_{(k+1)}p_{(k+2)}  - \Pr[\mathcal{E}_{k}^{\ell + 1}] \cdot (1-p_{(k+1)})(1-p_{(k+2)}).
    $$
    Rearranging this yields the two equivalent inequalities previously stated.
\end{proof}
\vspace{-0em}

For a set of representatives $S \subseteq [k]$, let $w(S)=\prod_{i \in S} p_{(i)} \cdot \prod_{i\in [k] \setminus S} (1 - p_{(i)})$
	be the probability that exactly those in $S$ are correct (and those in $[k]\setminus S$ are incorrect). We then have the following.

\begin{lemma}\label{Recursion}
For each $\mathcal{E}_{k}^j$,
$
    \Pr[\mathcal{E}_{k}^j] = \frac{1}{k-j} \sum_{\substack{S \subseteq [k] \\ |S| = j+1}} w(S) \sum_{i \in S} \frac{1 - p_{(i)}}{p_{(i)}}.
$
\end{lemma}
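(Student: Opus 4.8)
The plan is to establish the identity by a reindexing / double-counting argument, starting from the definition $\Pr[\mathcal{E}_{k}^{j}] = \sum_{T \subseteq [k],\, |T| = j} w(T)$, which simply records that the event ``exactly $j$ of the top $k$ are correct'' is the disjoint union, over the choice $T$ of which $j$ experts are correct, of the elementary events of probability $w(T)$.

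The key algebraic observation I would use is that for any $S \subseteq [k]$ and any $i \in S$,
\[
 w(S) \cdot \frac{1 - p_{(i)}}{p_{(i)}} = w(S \setminus \{i\}),
\]
since the ratio $\tfrac{1 - p_{(i)}}{p_{(i)}}$ exactly replaces the factor $p_{(i)}$ (present because $i$ is correct in $S$) by the factor $1 - p_{(i)}$ (present because $i$ is incorrect in $S \setminus \{i\}$), leaving all other factors of $w$ untouched. If one worries about competences equal to $0$, the same content is captured without division by the form $p_{(i)}\, w(S \setminus \{i\}) = (1 - p_{(i)})\, w(S)$, and competences equal to $0$ or $1$ can be handled separately, so we may as well assume $p_{(i)} \in (0,1)$.

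Substituting this into the claimed right-hand side and then reindexing is the whole argument:
\[
 \frac{1}{k - j} \sum_{\substack{S \subseteq [k] \\ |S| = j + 1}} w(S) \sum_{i \in S} \frac{1 - p_{(i)}}{p_{(i)}} = \frac{1}{k - j} \sum_{\substack{S \subseteq [k] \\ |S| = j + 1}} \sum_{i \in S} w(S \setminus \{i\}) = \frac{1}{k-j}\sum_{\substack{T \subseteq [k] \\ |T| = j}}\ \sum_{i \in [k]\setminus T} w(T).
\]
The middle-to-right equality is the bijection $(S,i) \mapsto (S\setminus\{i\}, i)$ between $\{(S,i): |S| = j+1,\ i \in S\}$ and $\{(T,i): |T| = j,\ i \in [k]\setminus T\}$, with inverse $(T,i)\mapsto(T\cup\{i\},i)$. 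Since each size-$j$ set $T$ admits exactly $k - j$ choices of $i \in [k]\setminus T$, the last expression equals $\frac{1}{k-j}\,(k-j)\sum_{|T|=j} w(T) = \Pr[\mathcal{E}_{k}^{j}]$, as desired.

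There is no serious obstacle here; the proof is pure bookkeeping. The only points that need care are (i) verifying the ``flip'' identity for $w$, which is immediate from the product form of $w$, and (ii) getting the multiplicity in the reindexing right, i.e., checking that each size-$j$ set is recovered exactly $k - j$ times (so the prefactor $\tfrac{1}{k-j}$ cancels cleanly) rather than being over- or under-counted. The $\{0,1\}$ competence values are a trivial caveat rather than a real difficulty.
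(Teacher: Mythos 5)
Your proposal is correct and follows essentially the same argument as the paper's proof: the same flip identity $w(S)\tfrac{1-p_{(i)}}{p_{(i)}} = w(S\setminus\{i\})$ combined with the same double-counting of size-$j$ sets via size-$(j+1)$ sets, each counted $k-j$ times. The only differences are cosmetic (you work from the right-hand side and explicitly flag the degenerate $p_{(i)}\in\{0,1\}$ case, which the paper leaves implicit).
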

\vspace{-1.5em}
\begin{proof}
By the definition of $\mathcal{E}_{k}^{j}$, 
$
	    \Pr[\mathcal{E}_{k}^{j}]
	    = \sum_{\substack{S \subseteq [k] \\ |S| = j}} w(S)
$. 
We then note that
	\begin{align*}
	    \sum_{\substack{S \subseteq [k] \\ |S| = j}} w(S) = \frac{1}{k-j} \sum_{\substack{S \subseteq [k] \\ |S| = j+1}} \sum_{i \in S} w(S \setminus \set{i})
	\end{align*}
because when we count the sets $S$ of size $j$ by first selecting a set of size $j+1$ and then removing one of its $j+1$ elements, each set of size $j$ is counted exactly $k-j$ times.
Therefore,
\[
	\Pr[\mathcal{E}_{k}^{j}] = \frac{1}{k-j} \sum_{\substack{S \subseteq [k] \\ |S| = j+1}} \sum_{i \in S} w(S \setminus \set{i}) = \frac{1}{k-j} \sum_{\substack{S \subseteq [k] \\ |S| = j+1}} w(S) \sum_{i \in S} \frac{1 - p_{(i)}}{p_{(i)}}. 
\qedhere \]
\end{proof}

Armed with these lemmas, we can now move to proving bounds on the optimal congress size. 

\subsection{Standard Uniform Distribution}
First, we focus on the case where competence levels are drawn from uniform distribution $\mathcal{U}(0, 1)$. For tractability, as discussed in the problem statement, we assume that the competence levels are exactly equal to their expectation, i.e., $p_{(i)} = \frac{n + 1 - i}{n + 1}$ (see e.g., \citet{orderS}). 
In this case, the competence levels of the top experts approach to $1$ asymptotically. 
Strikingly, we find that even with top experts becoming arbitrarily accurate and with the ability to identify the most accurate members of society, the optimal size of congress still remains a constant fraction of the population.

\begin{theorem}
    \label{LinearBounds}
    Suppose $p_{(i)} = \frac{n + 1 - i}{n + 1}$. Then, $(3 - 2\sqrt{2}) \cdot n - O(1) \le K^{\star} \le \frac{1}{2} \cdot n + O(1).$
\end{theorem}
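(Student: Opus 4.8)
The plan is to plug the competencies $p_{(i)} = \tfrac{n+1-i}{n+1}$ into \Cref{lem:new-core} and rule out every wrong congress size. Since $1-p_{(i)} = \tfrac{i}{n+1}$, the threshold appearing in the lemma becomes
\[
  g(k) \;:=\; \frac{p_{(k+1)}p_{(k+2)}}{(1-p_{(k+1)})(1-p_{(k+2)})} \;=\; \frac{(n-k)(n-k-1)}{(k+1)(k+2)},
\]
a decreasing function of $k$ that equals $1$ precisely at $k = \tfrac{n}{2} - 1$. Writing $k = 2\ell+1$ and $f(k) := \tfrac{\Pr[\mathcal{E}_{k}^{\ell+1}]}{\Pr[\mathcal{E}_{k}^{\ell}]}$, \Cref{lem:new-core} says: every odd $k$ with $f(k) < g(k)$ satisfies $K^\star \ne k$, and every odd $k$ with $f(k) > g(k)$ satisfies $K^\star \ne k+2$. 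So it suffices to estimate $f(k)$ accurately enough to locate where it crosses $g(k)$.

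The main work is to bound $f(k)$, and the key is \Cref{Recursion}. Applying it with $j = \ell$, together with $\Pr[\mathcal{E}_{k}^{\ell+1}] = \sum_{|S|=\ell+1} w(S)$, gives
\[
  f(k) \;=\; (k-\ell)\;\frac{\sum_{|S|=\ell+1} w(S)}{\sum_{|S|=\ell+1} w(S)\sum_{i\in S}\frac{i}{n+1-i}},
\]
where $k-\ell = \ell+1$. Thus $f(k)$ equals $\ell+1$ divided by a $w(\cdot)$-weighted average, over $(\ell+1)$-subsets $S\subseteq[k]$, of the statistic $\sum_{i\in S}\tfrac{i}{n+1-i}$. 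Since $\tfrac{i}{n+1-i}$ increases in $i$, this statistic is at least $\sum_{i=1}^{\ell+1}\tfrac{i}{n+1-i}\ge\tfrac{(\ell+1)(\ell+2)}{2n}$ (minimum at $\{1,\dots,\ell+1\}$, then $n+1-i\le n$) and at most $(\ell+1)\tfrac{k}{n+1-k}$, for every $S$. Substituting these extremes into the weighted average yields the clean two-sided estimate
\[
  \frac{n+1-k}{k}\;\le\; f(k)\;\le\;\frac{2n}{\ell+2}\;=\;\frac{4n}{k+3}.
\]

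For the lower bound on $K^\star$: whenever $\tfrac{4n}{k+3} < g(k)$ we have $f(k) < g(k)$ and hence $K^\star \ne k$. Clearing denominators, this is $4n(k+1)(k+2) < (k+3)(n-k)(n-k-1)$, and for $k = \alpha n$ the gap between the two sides is $\alpha\big((1-\alpha)^2 - 4\alpha\big)n^3 + O(n^2) = \alpha(\alpha^2 - 6\alpha + 1)n^3 + O(n^2)$. The quadratic $\alpha^2 - 6\alpha + 1$ is positive exactly for $\alpha < 3 - 2\sqrt2$, so the inequality holds for every odd $k \le (3-2\sqrt2)n - O(1)$, the $O(1)$ absorbing the lower-order terms. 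No such $k$ is optimal, and since we may take $K^\star$ odd, $K^\star \ge (3-2\sqrt2)n - O(1)$.

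For the upper bound on $K^\star$: for odd $k$ with $\tfrac{n}{2}\le k\le n-2$ we have $g(k) < 1$, while $f(k) \ge \tfrac{n+1-k}{k}$, and the elementary inequality $(n+1-k)(k+1)(k+2) > k(n-k)(n-k-1)$ — equivalently $\tfrac{n+1-k}{k} > g(k)$, which at leading order is just $\tfrac{1-\alpha}{\alpha} \ge \tfrac{(1-\alpha)^2}{\alpha^2}$ for $\alpha\ge\tfrac12$, the $\pm1$'s making it strict near $k=\tfrac n2$ — holds on this whole range. Hence $f(k) > g(k)$, so $K^\star \ne k+2$ for all odd $k\in[\tfrac n2, n-2]$; since $K^\star$ is odd, $K^\star \le \tfrac n2 + O(1)$. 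The obstacle is really only the middle step: recognizing the weighted-average structure so that the index-extremal choices of $S$ suffice, and checking that the crude bounds stay on the correct side of $g(k)$. The relaxation $\tfrac{i}{n+1-i}\ge\tfrac in$ is lossy — which is why we obtain $3-2\sqrt2$ rather than the (presumably larger) true constant — while on the other side it is only just enough to clear $\tfrac n2$; everything else reduces to elementary polynomial inequalities and bookkeeping of $O(1)$ corrections.
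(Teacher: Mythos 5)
Your proposal is correct and follows essentially the same route as the paper: the same combination of \Cref{lem:new-core} and \Cref{Recursion}, the same extremal bounds on $\sum_{i\in S}\frac{i}{n+1-i}$ yielding $f(k)\le \frac{4n}{k+3}$ and $f(k)\ge \frac{n+1-k}{k}$, and the same polynomial comparisons producing the roots $3\pm 2\sqrt{2}$ for the lower bound and the $n/2$ threshold for the upper bound. The only divergence is cosmetic: the paper displays the ratio lower bound as $\frac{k}{n+1-k}$ (an apparent slip, since its own derivation of $\Pr[\mathcal{E}_k^\ell]\le \frac{k}{n+1-k}\Pr[\mathcal{E}_k^{\ell+1}]$ gives $\frac{n+1-k}{k}$), and your cubic inequality $(n+1-k)(k+1)(k+2)>k(n-k)(n-k-1)$ for $k\ge n/2$ is the correctly patched version, which reaches the same conclusion $K^\star\le \frac{n}{2}+O(1)$.
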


\begin{proof}
Recall that we can focus only on odd $k$. Fix some odd $k \le n$ where $k = 2\ell + 1$ for some non-negative integer $\ell$. Our goal will be to compare $\frac{\Pr[\mathcal{E}_{k}^{\ell + 1}]}{\Pr[\mathcal{E}_{k}^\ell]}$ and $ \frac{p_{(k+1)}p_{(k+2)}}{(1 - p_{(k+1)})(1-p_{(k+2)})} = \frac{(n-k)(n-k-1)}{(k+1)(k+2)}$ in order to apply \Cref{lem:new-core}.

By \Cref{Recursion} with $j = \ell$ and using the fact that $k - \ell = \ell + 1$,
\begin{equation}
    \label{eq:ejk}
    \Pr[\mathcal{E}_{k}^{\ell}]
	    = \frac{1}{\ell + 1} \sum_{\substack{S \subseteq [k] \\ |S| = \ell+1}} w(S) \sum_{i \in S} \frac{i}{n + 1 - i}.
\end{equation}

We begin with the lower bound. Let us consider the inner sum of \Cref{eq:ejk}. We have that for all $S$,
$$
\sum_{i\in S} \frac{i}{n+1-i} \ge \sum_{i\in S} \frac{i}{n} = \frac{1}{n} \sum_{i\in S} i \ge \frac{1}{n} \sum_{i=1}^{\ell+1} i = \frac{(\ell+1)(\ell+2)}{2n}
$$
where the first inequality holds because $i \ge 1$ for all $i$ and the second inequality holds because $|S| = \ell + 1$ and $S \subseteq [k]$ hence the minimum it could sum to is that of the smallest $\ell + 1$ positive integers. As this bound is independent of $S$, we can pull it out of the the outer sum to yield
$$
    \Pr[\mathcal{E}_{k}^{\ell}] \ge \frac{\ell + 2}{2n} \sum_{\substack{S \subseteq [k] \\ |S| = \ell+1}} w(S) = \frac{\ell + 2}{2n} \cdot \Pr[\mathcal{E}_{k}^{\ell + 1}] = \frac{k + 3}{4n} \cdot \Pr[\mathcal{E}_{k}^{\ell + 1}]
$$
where the last inequality holds because $\ell + 2 = \frac{k - 1}{2} + 2 = \frac{k + 3}{2}$. This allows us to write
$
    \frac{\Pr[\mathcal{E}_{k}^{\ell + 1}]}{\Pr[\mathcal{E}_{k}^\ell]} \le \frac{4n}{k+3},
$
so in order to invoke the first item of \Cref{lem:new-core} to show a certain value of $k$ is not optimal, we need a sufficient condition for $k$ to guarantee
\begin{equation}
\label{eq:suff-lower}
    \frac{4n}{k+3} < \frac{(n-k)(n-k-1)}{(k+1)(k+2)}.
\end{equation}
Note that \Cref{eq:suff-lower} is implied by
$4n < \frac{(n - k - 1)^2}{k + 1}$ which we can rearrange to $(k + 1)^2 - 6n(k + 1) + n^2 > 0$. The left hand side of the inequality is a quadratic in $(k + 1)$ with roots at $(3 \pm 2\sqrt{2}) \cdot n$. Since the squared term is positive and hence the quadratic is only non-positive between the two roots, as long as $(k + 1) < (3 - 2\sqrt{2}) \cdot n$, the inequality holds. Along with the first item of \Cref{lem:new-core}, this implies the desired $(3 - 2\sqrt{2}) \cdot n - O(1)$ lower bound.

Next, we will show the upper bound. In the inner summand of \Cref{eq:ejk}, $i \in [k]$ so $i \le k$, and hence $\frac{i}{n + 1 - i} \le \frac{k}{n + 1 - k}$. This yields
\begin{align*}
    \Pr[\mathcal{E}_{k}^{\ell}]
	    &\le \frac{1}{\ell + 1} \sum_{\substack{S \subseteq [k] \\ |S| = \ell+1}} w(S) \sum_{i \in S} \frac{k}{n + 1 - k}
	     \le \frac{1}{\ell + 1} \sum_{\substack{S \subseteq [k] \\ |S| = \ell+1}} w(S) \cdot |S|\cdot  \frac{k}{n + 1 - k}\\
	    &\qquad = \frac{k}{n + 1 - k}\sum_{\substack{S \subseteq [k] \\ |S| = \ell+1}} w(S) =  \frac{k}{n + 1 - k} \Pr[\mathcal{E}_{k}^{\ell + 1}].
\end{align*}
Here, we get that
$
\frac{\Pr[\mathcal{E}_{k}^{\ell + 1}]}{\Pr[\mathcal{E}_{k}^\ell]} \ge  \frac{k}{n + 1 - k}.
$
As with the lower bound, to invoke the second item of \Cref{lem:new-core}, we need a sufficient condition for
\begin{equation}
\label{eq:suff-upper}
    \frac{k}{n + 1 - k} > \frac{(n-k)(n-k-1)}{(k+1)(k+2)}.
\end{equation}
\Cref{eq:suff-upper} is equivalent to
$$k(k + 1)(k + 2) > (n - k - 1)(n - k)(n - k + 1).$$
As both sides are the product of three consecutive integers, this will be true as long as $n - k - 1 < k$, or equivalently $k + 2 > \frac{n}{2} + \frac{3}{2}$. Applying \Cref{lem:new-core} yields the desired upper bound.
\end{proof}

Hence, we have proved that for competencies equal to the expectation of $\mathcal{U}[0,1]$ order statistics, a constant fraction of the total population is necessary to maximize the probability the representatives make the correct decision. We conjecture that $K^\star$ is in fact close to $n/4$ in this set up (see simulations in \Cref{app:opt-congress-simulations}). 

\subsection{Distributions Bounded Away From 1}
Next, we consider a broad class of distributions which do not allow for arbitrarily accurate experts.  Unlike in the previous section, we do not fix $p_{(i)}$ to be their expectation; instead, they are random draws from $\mathcal D$.  Under relatively mild conditions, we show that the optimal size $K^\star$ grows linearly in the population size with high probability. 

\begin{theorem}
\label{thm:GeneralizationBoundL}
Let $\mathcal D$ be any continuous distribution supported by $[L, H]$ with cumulative distribution function $F(\cdot)$. If $0 < L < \frac{1}{2} < H < 1$, and $F^{-1}(\cdot)$ is $M$-Lipschitz continuous with $0< M <\infty$,\footnote{This condition is satisfied when the PDF of $\dist$ is lower bounded by $1/M$, which is satisfied by, e.g., uniform, normal, and beta distributions truncated to $[L, H]$ } then, with probability at least $1- 4e^{-2n\eps^2}$ the competency draws will yield an optimal $K^{\star}$ such that 
$$c_H n - O(1) \le K^\star \le c_L n + O(1)$$
for all $n$ and $\eps > 0$, where  $c_H = 1 - F\left(\frac{1}{1+\sqrt{\frac{1-H}{H}}} + M\eps\right)$ and $c_L = 1 - F\left(\frac{1}{1+\sqrt{\frac{1-L}{L}}} - M\eps\right)$.
\end{theorem}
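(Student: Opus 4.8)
The plan is to re-run the proof of \Cref{LinearBounds} with the exact-expectation competencies replaced by genuine samples, absorbing the sampling noise through one concentration step. As before, everything goes through \Cref{lem:new-core}: it suffices to produce a threshold index $k_H \ge c_H n - O(1)$ such that the \emph{first} bullet of \Cref{lem:new-core} holds for every odd $k \le k_H$ (forcing $K^\star > k_H$, hence $K^\star \ge c_H n - O(1)$ since $K^\star$ may be taken odd), together with a threshold $k_L \le c_L n + O(1)$ such that the \emph{second} bullet holds for every odd $k \ge k_L$ (forcing $K^\star \le k_L \le c_L n + O(1)$).

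To control the ratio $\Pr[\mathcal{E}_k^{\ell+1}]/\Pr[\mathcal{E}_k^\ell]$ I would use \Cref{Recursion} exactly as in \Cref{LinearBounds}: writing $g(p) = (1-p)/p$, which is decreasing, and using $k - \ell = \ell + 1$, one gets $\Pr[\mathcal{E}_k^{\ell+1}]/\Pr[\mathcal{E}_k^\ell] = (\ell+1)/A$, where $A$ is a convex combination (over the $(\ell+1)$-subsets $S \subseteq [k]$) of the quantities $\sum_{i \in S} g(p_{(i)})$. Since every relevant $p_{(i)}$ lies in $[L,H]$, each such sum lies in $[(\ell+1)\tfrac{1-H}{H},\,(\ell+1)\tfrac{1-L}{L}]$, giving the distribution-free sandwich $\tfrac{L}{1-L} \le \Pr[\mathcal{E}_k^{\ell+1}]/\Pr[\mathcal{E}_k^\ell] \le \tfrac{H}{1-H}$. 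The threshold in \Cref{lem:new-core} is $\tfrac{p_{(k+1)}p_{(k+2)}}{(1-p_{(k+1)})(1-p_{(k+2)})}$, and since $p_{(k+1)} \ge p_{(k+2)}$ it is bounded below by $\big(\tfrac{p_{(k+2)}}{1-p_{(k+2)}}\big)^2$ and above by $\big(\tfrac{p_{(k+1)}}{1-p_{(k+1)}}\big)^2$. Hence the first bullet is implied by $p_{(k+2)} > \tfrac{1}{1+\sqrt{(1-H)/H}}$ and the second bullet by $p_{(k+1)} < \tfrac{1}{1+\sqrt{(1-L)/L}}$. Because $p_{(k+1)}$ and $p_{(k+2)}$ are nonincreasing in $k$, it is enough to check the former at the single index $k = k_H$ and the latter at $k = k_L$, so no union bound over $k$ is needed.

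It remains to convert these two single order-statistic events into the claimed quantile cut-points. Via the probability-integral transform, write $p_{(i)} = F^{-1}(U_{[i]})$ with $U_{[1]} \ge \cdots \ge U_{[n]}$ the order statistics of $n$ i.i.d. $\mathcal{U}(0,1)$ variables. A Hoeffding bound on the binomial number of samples below a threshold (union-bounded over the $O(1)$ events and tails involved, for total failure probability at most $4e^{-2n\eps^2}$) gives $U_{[k+2]} \ge 1 - (k+1)/n - \eps$ and $U_{[k+1]} \le 1 - k/n + \eps$, whence by monotonicity and $M$-Lipschitzness of $F^{-1}$, $p_{(k+2)} \ge F^{-1}(1-(k+2)/n) - M\eps$ and $p_{(k+1)} \le F^{-1}(1-k/n) + M\eps$. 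Therefore $p_{(k+2)} > \tfrac{1}{1+\sqrt{(1-H)/H}}$ whenever $1 - (k+2)/n > F\big(\tfrac{1}{1+\sqrt{(1-H)/H}} + M\eps\big)$, i.e. $k < c_H n - 2$; and $p_{(k+1)} < \tfrac{1}{1+\sqrt{(1-L)/L}}$ whenever $1 - k/n < F\big(\tfrac{1}{1+\sqrt{(1-L)/L}} - M\eps\big)$, i.e. $k > c_L n$. Taking $k_H$ to be the largest odd integer below $c_H n - 2$ and $k_L$ the smallest odd integer above $c_L n$, and feeding into \Cref{lem:new-core}, gives both bounds.

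The step I expect to require the most care is pinning down the threshold constants: one must be content to bound $\Pr[\mathcal{E}_k^{\ell+1}]/\Pr[\mathcal{E}_k^\ell]$ by the distribution-free constants $\tfrac{L}{1-L}$ and $\tfrac{H}{1-H}$ (rather than something sharper in $p_{(k)}$), so that comparing against the \emph{squared} odds ratio of $p_{(k+1)}, p_{(k+2)}$ produces exactly the cut-points $\tfrac{1}{1+\sqrt{(1-H)/H}}$ and $\tfrac{1}{1+\sqrt{(1-L)/L}}$; and one must verify that the density lower bound $1/M$ coming from Lipschitz $F^{-1}$ is precisely what turns the $\pm\eps$ deviation of the uniform order statistics into a $\pm M\eps$ shift inside the argument of $F$, keeping the failure probability at $4e^{-2n\eps^2}$. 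Everything else is a direct transcription of the \Cref{LinearBounds} argument with these constants in place of the explicit $\mathcal{U}(0,1)$ quantities.
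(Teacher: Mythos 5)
Your proposal is correct and follows essentially the same route as the paper's proof: the same sandwich $\frac{L}{1-L} \le \Pr[\mathcal{E}_{k}^{\ell+1}]/\Pr[\mathcal{E}_{k}^{\ell}] \le \frac{H}{1-H}$ obtained from \Cref{Recursion}, the same cut-points $\frac{1}{1+\sqrt{(1-H)/H}}$ and $\frac{1}{1+\sqrt{(1-L)/L}}$ for $p_{(k+2)}$ and $p_{(k+1)}$ fed into \Cref{lem:new-core}, and the same use of $M$-Lipschitzness of $F^{-1}$ to turn an $\eps$-deviation of the empirical quantiles into a $\pm M\eps$ shift inside $F$. The only cosmetic difference is the concentration step — you use a pointwise binomial/Hoeffding bound at two fixed indices plus monotonicity of the order statistics, where the paper invokes the uniform DKW inequality — and both give the stated $1-4e^{-2n\eps^2}$ guarantee.
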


We remark that $L\ge 0$ is sufficient for the lower bound $c_H n - O(1) \le K^\star$ to hold and vice-versa, $H\le 1$ is sufficient for the upper bound to hold. 
Both of these bounds individually hold with probability at least $1- 2e^{-2n\eps^2}$.

To prove \Cref{thm:GeneralizationBoundL}, we will make use of the following well-known concentration inequality. 
\begin{lemma}[Dvoretzky–Kiefer–Wolfowitz inequality, see e.g., \citealp{massart1990tight}]\label{lem:DKW1}
Let $p_{(1)} \ge \cdots \ge p_{(n)}$ be $n$ sorted i.i.d.~draws from $\mathcal D$.
For every $\varepsilon > 0$, 
$
    \Pr\left[ \forall i\in[n], \left| F(p_{(i)}) - \frac{n -i}{n} \right| \le \varepsilon \right] \ge 1 - 2e^{-2n\varepsilon^2}.  
$
\end{lemma}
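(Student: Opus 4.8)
The plan is to derive this statement directly from the classical two-sided Dvoretzky–Kiefer–Wolfowitz inequality, taking the latter as a black box: writing $\hat F_n(x) = \frac{1}{n}\sum_{k=1}^n \mathbf{1}[X_k \le x]$ for the empirical CDF of the $n$ i.i.d.\ draws $X_1,\dots,X_n \sim \mathcal D$, Massart's sharp form gives $\Pr\big[\sup_{x\in\reals}|\hat F_n(x) - F(x)| > \eps\big] \le 2e^{-2n\eps^2}$, equivalently $\Pr\big[\sup_x |\hat F_n(x)-F(x)| \le \eps\big] \ge 1 - 2e^{-2n\eps^2}$. Reproving this bound with the tight constant $2$ is exactly the content of Massart's theorem and is well beyond what we would redo here; all remaining work is to translate the uniform bound on $|\hat F_n - F|$ into the per-index statement of the lemma, in the paper's descending-order convention.

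The key translation step is to identify the quantity $\frac{n-i}{n}$ with the empirical CDF just below the $i$-th largest sample. Since $\mathcal D$ is continuous, the draws are almost surely distinct, so exactly $i-1$ of them are strictly greater than $p_{(i)}$ and exactly $n-i$ are strictly less. Hence for every $x$ in the open interval $(p_{(i+1)}, p_{(i)})$ (with the convention $p_{(n+1)} = -\infty$ when $i=n$) we have $\hat F_n(x) = \frac{n-i}{n}$, and therefore $\lim_{x \uparrow p_{(i)}} \hat F_n(x) = \frac{n-i}{n}$. Using the strict count here is essential: evaluating $\hat F_n$ exactly at $p_{(i)}$ would give $\frac{n-i+1}{n}$, and it is the left limit that produces the $\frac{n-i}{n}$ appearing in the statement.

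First I would combine these two facts on the good event $\mathcal A = \{\sup_x|\hat F_n(x)-F(x)|\le \eps\}$, which has probability at least $1 - 2e^{-2n\eps^2}$ by DKW. On $\mathcal A$, for every index $i$ and every $x < p_{(i)}$ we have $|\hat F_n(x) - F(x)| \le \eps$; letting $x \uparrow p_{(i)}$ and invoking continuity of $F$ (so $F(x)\to F(p_{(i)})$) together with the left limit above (so $\hat F_n(x)\to \frac{n-i}{n}$) yields $\big|\frac{n-i}{n} - F(p_{(i)})\big| \le \eps$. Because this holds simultaneously for all $i\in[n]$ on the single event $\mathcal A$, we conclude $\Pr[\forall i,\ |F(p_{(i)}) - \frac{n-i}{n}| \le \eps] \ge \Pr[\mathcal A] \ge 1 - 2e^{-2n\eps^2}$, as claimed.

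An equivalent and perhaps cleaner route applies the probability integral transform first: since $F$ is continuous, $U_k := F(X_k)$ are i.i.d.\ $\mathcal U(0,1)$ and $F(p_{(i)})$ is the $(n+1-i)$-th smallest of them, so the claim reduces to the uniform statement $|U_{(j)} - \frac{j-1}{n}| \le \eps$ for all $j$, to which DKW for the uniform empirical process applies directly. Either way, the only genuine subtlety is the off-by-one handling described above, which forces the use of the left limit (the strict count); everything else is the invocation of the classical inequality and the continuity of $F$, so I expect no serious obstacle beyond being careful with this boundary convention.
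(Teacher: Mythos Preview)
Your derivation is correct. The paper itself does not prove this lemma at all: it is stated as a known concentration inequality with a citation to \citet{massart1990tight}, and the specific form with $\frac{n-i}{n}$ in descending-order notation is taken for granted. What you have supplied is precisely the translation from the classical statement $\Pr[\sup_x|\hat F_n(x)-F(x)|>\eps]\le 2e^{-2n\eps^2}$ to the per-index form the paper uses, and your handling of the off-by-one issue via the left limit of $\hat F_n$ at $p_{(i)}$ (so that one obtains $\frac{n-i}{n}$ rather than $\hat F_n(p_{(i)}) = \frac{n-i+1}{n}$) is the correct way to recover the stated constant. The alternative route through the probability integral transform is equally valid and arguably cleaner. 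In short, there is nothing to compare against in the paper; you have written out a correct justification that the authors simply assumed.
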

Lemma~\ref{lem:DKW1} implies that, with probability at least $1 - 2e^{-2n\eps^2}$, for every $i\in [n]$,$\left| F(p_{(i)}) - \frac{n-i}{n} \right| \le \eps.$
Since $F^{-1}$ is assumed to be $M$-Lipschitz continuous,
\begin{equation}
    \left| p_{(i)} - F^{-1}\left(\frac{n-i}{n}\right) \right| \le M \eps.
    \label{BoundG}
\end{equation}
We are now ready to prove \Cref{thm:GeneralizationBoundL}. We show the lower bound here; the proof for the upper bound uses similar techniques and is relegated to \Cref{app:proofGeneralizationU}.
\begin{proof}[Proof of \Cref{thm:GeneralizationBoundL}]
We will show that both the lower bound $c_Hn - O(1) \le K^{\star}$ and the upper bound $K^{\star} \le c_Ln + O(1)$ each occur with probability at least $1 - 2e^{-2n\eps^2}$ which, by a union bound, proves the desired probability. As previously mentioned, we will only prove the lower bound here.
Fix arbitrary odd $k$ and $n$ with $k \le n$ where $k = 2\ell + 1$ for some non-negative integer $\ell$. We will give sufficient conditions as a function of $n$ and $k$ for which we can apply \Cref{lem:new-core}.

First, by \Cref{Recursion} with $j = \ell$,
$\Pr[\mathcal{E}_{k}^{\ell}]
	    = \frac{1}{k-\ell} \sum_{\substack{S \subseteq [k] \\ |S| = \ell+1}} w(S) \sum_{i \in S} \frac{1 - p_{(i)}}{p_{(i)}}$. 
Because the support of $\dist$ is upper-bounded by $H$, $p_{(i)} \le H$ for all $i$ with probability one. So,
$
	    \sum_{i \in S} \frac{1 - p_{(i)}}{p_{(i)}} \ge  (\ell+1) \frac{1-H}{H}.
$
Noting that $\ell + 1 = \frac{k+1}{2} = k - \ell$ and $\Pr[\mathcal{E}_{k}^{\ell+1}]
	    = \sum_{\substack{S \subseteq [k] \\ |S| = \ell+1}} w(S)$, after rearranging we have
$
	    \frac{\Pr[\mathcal{E}_{k}^{\ell + 1}]}{\Pr[\mathcal{E}_{k}^\ell]} \le \frac{H}{1-H}.
$
Further, we note that
$\frac{p_{(k+1)}p_{(k+2)}}{(1-p_{(k+1)})(1-p_{(k+2)})} \ge \frac{p_{(k+2)}^2}{(1-p_{(k+2)})^2}$.

Now, if we want to apply the first item of \Cref{lem:new-core} to show some $k$ is not optimal, it suffices to require that 
\begin{equation}
    \frac{p_{(k+2)}^2}{(1-p_{(k+2)})^2} > \frac{H}{1-H} \iff p_{(k+2)} > \frac{1}{1+\sqrt{\frac{1-H}{H}}}. 
    \label{eq:p-sufficient-conditionLower}
\end{equation}
Relying on \Cref{BoundG}, it holds that 
$ p_{(k+2)} \ge F^{-1}\left(\frac{n-k-2}{n}\right) - M\eps. $
If we require 
\begin{equation} \label{eq:k-sufficient-conditionLower}
F^{-1}\left(\frac{n-k-2}{n}\right) - M\eps > \frac{1}{1+\sqrt{\frac{1-H}{H}}},
\end{equation}
then \Cref{eq:p-sufficient-conditionLower} is satisfied and hence so will the condition of \Cref{lem:new-core}, which implies that such $k$ cannot be optimal. Solving \Cref{eq:k-sufficient-conditionLower} gives $\frac{k}{n} \le 1 - F\left(\frac{1}{1+\sqrt{\frac{1-H}{H}}} + M\eps\right) - \frac{2}{n}$, so
\begin{equation*}
    \frac{K^\star}{n} \ge 1 - F\left(\frac{1}{1+\sqrt{\frac{1-H}{H}}} + M\eps\right) - \frac{2}{n}.
\end{equation*}
Multiplying by $n$ yields the desired lower bound.
\end{proof}
This proves that for competencies drawn from an arbitrary distribution whose support is bounded away from 1, a constant fraction of the total population is needed to maximize the probability that the representatives make the correct decision on behalf of the entire population.

We illustrate \Cref{thm:GeneralizationBoundL} by distribution
$\mathcal D = \mathcal U(0.1, 0.9)$.  Letting $\eps=\sqrt{\frac{\log n}{2n}}$,
it can be checked that
$
0.186n \le K^{\star} \le 0.813n
$
with probability at least $1-\frac{4}{n}$ for all sufficiently large $n$.

\section{Can a Small Congress Outperform Direct Voting?}
\label{sec:outperform}
Our theoretical results from the previous section suggest that the optimal size of a congress should be linear in the size of the population. However, for many scenarios this may not be feasible and there are many other desiderata one must consider in choosing an ``optimal'' size. Hence, we now turn to \emph{comparing} how well different sizes of congresses perform in the epistemic model.

As a baseline, we will compare the accuracy of a congress to the accuracy of \emph{direct democracy} in which all $n$ members of society vote. This is well-motivated by classic results such as the \emph{Condorcet Jury Theorem} and extensions thereof, 
which show that the entire society will converge to the correct answer if and only if the competency distribution is \emph{biased} toward the correct answer, that is, $\E_{p\sim \dist}[p] > 1/2$.
We aim to find bounds on \emph{how} biased this distribution must be in order for congresses of different sizes to outperform the entire society. 

Now we state our problem formally.
We will be interested in how the cutoff of the bias of the competency distribution varies with $n$, hence, we will allow the distribution $\dist$ to depend on $n$ by having a distribution $\dist_n$ for each $n$.  We use $F_n$ and $f_n$ to denote the CDF and PDF of $\dist_n$ respectively.  
Let $\Gamma^{\bm p}_n(k)$ be the gain in probability of correctness by using a congress of size $k$ instead of the entire population, given competence levels $\bm p = (p_{(1)}, \ldots, p_{(n)})$:
\begin{equation*}
    \Gamma^{\bm p}_n(k) =\Pr\left[\sum_{i=1}^k X_{(i)} > \frac{k}{2}~\bigg|~ X_{(i)} \sim \mathrm{Bern}(p_{(i)})\right] - \Pr\left[\sum_{i=1}^n X_{(i)} > \frac{n}{2}~\bigg|~ X_{(i)} \sim \mathrm{Bern}(p_{(i)})\right].
\end{equation*}

Similar to the definition of $K^\star$, $\Gamma^{\bm p}_n(k)$ is a random variable whose randomness comes from the random draws  of $p_i \sim \dist_n$.
We aim at identifying, for certain values of $k$, for what kinds of distributions $\dist_n$ we have $\Gamma^{\bm p}_n(k) > 0$ with high probability as $n$ grows large.

\subsection{Dictatorship}

First, we consider an extreme case: when can a single voter outperform the entire society?  In particular, we identify conditions under which $\Gamma^{\bm p}_n(1) > 0$ or $\Gamma^{\bm p}_n(1) < 0$. 
We show that if the distributions $\dist_n$ put high enough probability mass on competence levels near $1$ and its mean $\E_{\dist_n}[p]$ is not much larger than $1/2$, then $\Gamma^{\bm p}_n(1) > 0$ with high probability as $n$ grows large, and $\Gamma^{\bm p}_n(1) < 0$ on the contrary. The probability mass conditions are satisfied by many natural classes of distributions; we give several examples (e.g., uniform and beta distributions) in \Cref{app:dists-satsifying-dictatorship}.

\begin{theorem} \label{thm:dictatorship}
Let $k=1$. 
\vspace{-0.4em}
\begin{itemize}
    \item Suppose $\E_{\dist_n}[p] \le \frac{1}{2} + a \sqrt{\frac{\log n}{n}}$ and $f_n(x) \ge \underline C (1-x)^{\underline \beta - 1}$ for $x\in [1-\underline \delta, 1]$ for some constants $a, \underline C, \underline \beta, \underline \delta > 0$.  If $a < \sqrt{ \E_{\dist_n}[p(1-p)] \cdot \min\{1, 2/{\underline{\beta}} \} }$, then, with probability at least $1 - n^{-\Omega(1)}$, $\Gamma^{\bm p}_n(1) > 0$.
\vspace{-0.3em}
    \item Suppose $\E_{\dist_n}[p] \ge \frac{1}{2} + a \sqrt{\frac{\log n}{n}}$ and $f_n(x) \le \overline C$ for $x\in [1-\overline \delta, 1]$ for some constants $a, \overline C, \overline \delta > 0$.  If $a > \frac{1}{\sqrt 2}$, then with probability at least $1 - n^{-\Omega(1)}$, $\Gamma^{\bm p}_n(1) < 0$.
\vspace{-0.1em}
\end{itemize}
\end{theorem}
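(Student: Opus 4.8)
The plan is to rewrite $\Gamma^{\bm p}_n(1) = p_{(1)} - \Pr\!\big[S_n > \tfrac n2\big]$ where $S_n := \sum_{i=1}^n X_{(i)}$, so that $\Gamma^{\bm p}_n(1) > 0 \iff 1 - p_{(1)} < \Pr[S_n \le n/2]$ and $\Gamma^{\bm p}_n(1) < 0 \iff 1 - p_{(1)} > \Pr[S_n \le n/2]$. The theorem thus reduces to comparing two small quantities: $1 - p_{(1)}$, the distance of the best expert from certainty, controlled by the behavior of $f_n$ near $1$; and the lower-tail mass $\Pr[S_n \le n/2]$, controlled by how far $\E_{\dist_n}[p]$ sits above $1/2$ relative to the standard deviation of $S_n$. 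As a common first step, I would apply Hoeffding's inequality to the i.i.d.\ samples to record that, on an event of probability $1 - n^{-\Omega(1)}$, the empirical mean $\bar p := \tfrac1n\sum_i p_i$ is within $\eta\sqrt{\log n / n}$ of $\E_{\dist_n}[p]$ and the empirical second-moment proxy $\bar v := \tfrac1n\sum_i p_i(1-p_i)$ is within a small constant $\eta'$ of $\sigma_n^2 := \E_{\dist_n}[p(1-p)]$; note that $\sigma_n^2 = \Omega(1)$ is forced by the hypothesis $a^2 < \sigma_n^2\min\{1,2/\underline\beta\}$. I then condition on this event, treating $S_n$ as a fixed sum of independent Bernoullis with mean $\mu_n = n\bar p$ and variance $V_n = n\bar v = \Theta(n)$.

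For the first item I bound the two quantities as follows. Lower bound on the tail mass: since $\E_{\dist_n}[p] \le \tfrac12 + a\sqrt{\log n/n}$, on the good event $\mu_n - \tfrac n2 \le (a+\eta)\sqrt{n\log n}$ and $V_n \ge n(\sigma_n^2-\eta')$, so the Berry--Esseen theorem for sums of independent (non-identical) summands gives $\Pr[S_n \le n/2] \ge \Phi\!\big(-\tfrac{a+\eta}{\sqrt{\sigma_n^2-\eta'}}\sqrt{\log n}\big) - O(n^{-1/2})$, and the Gaussian tail lower bound $1-\Phi(x) \ge c\,\phi(x)/x$ then yields $\Pr[S_n\le n/2] \ge \tfrac{c'}{\sqrt{\log n}}\,n^{-\gamma_1}$ with $\gamma_1 = \tfrac{(a+\eta)^2}{2(\sigma_n^2-\eta')}$; for small $\eta,\eta'$ we get $\gamma_1 < \min\{\tfrac12,\tfrac1{\underline\beta}\}$, which is exactly where $a^2 < \sigma_n^2\min\{1,2/\underline\beta\}$ enters, and in particular $\gamma_1 < \tfrac12$ makes the main term dominate the $O(n^{-1/2})$ Berry--Esseen error. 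Upper bound on $1-p_{(1)}$: integrating $f_n(x)\ge\underline C(1-x)^{\underline\beta-1}$ gives $\Pr[p > 1-s] \ge \tfrac{\underline C}{\underline\beta}s^{\underline\beta}$ for $s\le\underline\delta$, hence $\Pr[p_{(1)} \le 1-s] \le \big(1-\tfrac{\underline C}{\underline\beta}s^{\underline\beta}\big)^n \le e^{-(\underline C/\underline\beta)\,n s^{\underline\beta}}$, and taking $s_n = \Theta\big((\log n/n)^{1/\underline\beta}\big)$ makes this $\le n^{-1}$, so whp $1-p_{(1)} \le s_n = n^{-1/\underline\beta+o(1)}$. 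Since $\gamma_1 < 1/\underline\beta$, the bound $\tfrac{c'}{\sqrt{\log n}}n^{-\gamma_1}$ dominates $s_n$ for large $n$, giving $1-p_{(1)} < \Pr[S_n\le n/2]$, i.e.\ $\Gamma^{\bm p}_n(1) > 0$, on an event of probability $1 - n^{-\Omega(1)}$.

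For the second item the two inequalities reverse direction. Since now $\E_{\dist_n}[p]\ge\tfrac12 + a\sqrt{\log n/n}$, on the good event $\mu_n - \tfrac n2 \ge (a-\eta)\sqrt{n\log n}$, and Hoeffding's inequality applied directly to $S_n$ gives the clean \emph{upper} bound $\Pr[S_n\le n/2] \le e^{-2(\mu_n-n/2)^2/n} \le n^{-2(a-\eta)^2}$; the hypothesis $a > 1/\sqrt2$ is precisely what forces the exponent above $1$. On the other side, $f_n \le \overline C$ near $1$ gives $\Pr[p > 1-t] \le \overline C t$, hence $\Pr[p_{(1)} \ge 1-t] \le n\overline C t$, and taking $t_n = \tfrac1{\overline C}n^{-1-\rho}$ for a small $\rho$ shows that whp $1-p_{(1)} > n^{-1-\rho}$, which exceeds $n^{-2(a-\eta)^2}$ once $\eta,\rho$ are small and $n$ is large. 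Thus $1-p_{(1)} > \Pr[S_n\le n/2]$, i.e.\ $\Gamma^{\bm p}_n(1) < 0$, with probability $1 - n^{-\Omega(1)}$.

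The main obstacle is the reverse-tail estimate $\Pr[S_n\le n/2] \ge \tfrac{c'}{\sqrt{\log n}}n^{-\gamma_1}$ in the first item: it is not enough that this probability be positive --- its exponent must be essentially the sharp $\tfrac{a^2}{2\sigma_n^2}$, since a crude reverse-Chernoff bound with a suboptimal constant would fail to beat the $n^{-1/\underline\beta}$ scale of $1-p_{(1)}$ under the stated hypothesis. This is why Berry--Esseen (together with the resulting requirement that $\tfrac{a^2}{2\sigma_n^2} < \tfrac12$, responsible for the $\min\{1,\cdot\}$ in the hypothesis) is the right tool, and verifying that its $O(n^{-1/2})$ error does not swamp the main term is the delicate check. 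A secondary nuisance is bookkeeping the $n$-dependence of $\dist_n$ --- quantities such as $\E_{\dist_n}[p(1-p)]$ need a uniform lower bound and the strict inequalities in the hypotheses need uniform slack --- and keeping the small constants $\eta,\eta',\rho$ mutually consistent through the union bounds.
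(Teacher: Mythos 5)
Your proposal is correct and follows essentially the same route as the paper: you decompose $\Gamma^{\bm p}_n(1)$ into a comparison of $1-p_{(1)}$ against $\Pr[S_n\le n/2]$, bound the population tail from below via Berry--Esseen plus the Gaussian tail estimate (and from above via Hoeffding), bound $1-p_{(1)}$ via the density conditions near $1$, and compare polynomial exponents exactly as the paper does in its Lemmas on $\Pr[\sum_i X_{(i)}\le n/2\mid\bm p]$ and $\Pr[X_{(1)}=0\mid\bm p]$. The only cosmetic difference is your union bound for the lower bound on $1-p_{(1)}$ where the paper uses $F_n(1-x)^n$ directly, which is immaterial.
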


We sketch a proof of the theorem; the full proof is in \Cref{app:proof-general-k}. When $\E_{\dist_n}[p] = \frac{1}{2} + O(\sqrt{\frac{\log n}{n}})$, by Hoeffding's inequality, the entire population makes a correct decision with probability $1 - O(n^{-c_1})$ for some constant $c_1$, while by our assumption on $\dist_n$ the top expert is correct with probability $p_{(1)} = 1 - O(n^{-c_2})$.  We identify conditions on $\dist_n$ for which $c_1<c_2$ or $c_1 > c_2$.

\subsection{Real-world and Polynomial-sized Congress}
\label{Simulations}

We turn our attention to more practical congress size. As discussed in the introduction, prior work has suggested that the size of congress should be near the cube-root of the population size. Exploring real-world data for 240 legislatures,\footnote{The data comes from Wikipedia: \url{https://en.wikipedia.org/wiki/List_of_legislatures_by_number_of_members}. We consider the number of representatives to be the total number of representatives in both chambers.}, we re-ran regression analysis of \citet{auriol2012optimal} on the log of the congress sizes of many countries compared to the log of the population size, which yields a slope of $0.36$ (with intercept $-0.65$ and coefficient of determination $R^2 = 0.85$)), suggesting $k = \Theta(n^{0.36})$. See results in \Cref{app:real-world}.

Next, we numerically investigate how congresses of this size perform compared to direct democracy with different levels of bias. We consider $k = n^{0.36}$ and $\dist_n = \mathcal{U}(L+\eps_n, 1 - L)$ such that $\E_{\dist_n}[p] = \frac{1 + \eps_n}{2}$. So the society is slightly biased toward the correct answer. We identify sequences $(\eps_n)_{n=1}^\infty$ such that a congress of size $k$ outperforms direct democracy for sufficiently large $n$.

The simulations were ran on a MacBook Pro as follows: for a given distribution, we sample $n$ competencies and votes associated with these competencies. We perform two majority votes --- with all the voters and with the top $k$ voters. Repeating this operation $1,000$ times, 
we estimate the probabilities that the majority of all voters (Direct Democracy) and $k$ voters (Representative Democracy) are correct. 
\Cref{fig:simT} displays the probabilities and $95\%$ confidence intervals for different population sizes, with $L=0.4$. Additional simulations can be found in \Cref{app:outperforming-figures}.

\begin{figure}[htb]
    \centering
    \includegraphics[width=1\textwidth]{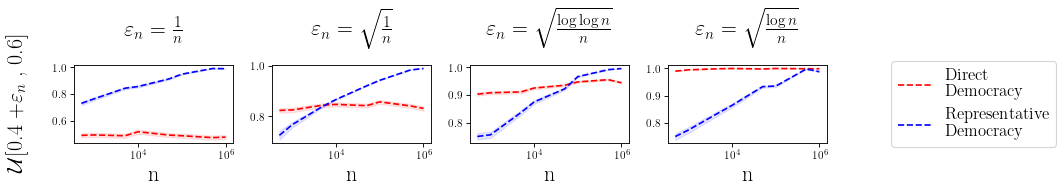}
    \caption{Estimates of $\Pr[\sum_{i=1}^k X_{(i)} > \frac{k}{2}~|~ \bm p]$ (Representative Democracy) and $\Pr[\sum_{i=1}^n X_{(i)} > \frac{n}{2}~|~ \bm p]$ (Direct Democracy) as a function of the population size for different values of $\eps_n$, with $k=n^{0.36}$ and $\dist_n = \mathcal{U}[0.4+\eps_n, 0.6].$ For large 
    $\eps_n$,
    the population size needs to reach a critical mass for the congress to outperform direct democracy.}
    \label{fig:simT}
\end{figure}

Let us now formalize and prove this result for general distributions. If the average competence level of the population, $\E_{\dist_n}[p]$, is larger than $\frac{1}{2}$ by a constant margin, then both the entire population and a congress of size $n^r$ will be correct with probabilities that are exponentially close to $1$.  Hence, again, to make things more interesting, we are concerned with the case where $\E_{\dist_n}[p] = \frac{1}{2} + \eps_n$ with $0 < \eps_n < o(1)$.  We identify conditions on $\eps_n$, $n$ and $\dist_n$ under which $\Gamma^{\bm p}_n(k) > 0$ or $\Gamma^{\bm p}_n(k) < 0$. The following result is proved in \Cref{app:proof-general-k}.
\begin{theorem}\label{thm:general-k-comparison}
Let $k = n^r$ for some constant $0<r<1$. 
\vspace{-0.2em}
\begin{itemize} 
    \item Suppose $\E_{\dist_n}[p] \le \frac{1}{2}  + a \sqrt{\frac{\log n}{n}}$, and $1 - F_n(\frac{1}{2} + \alpha \sqrt{\frac{\log k}{k}}) \ge \frac{k}{n} + \Omega(\sqrt{\frac{\log n}{n}})$ for some constants $a, \alpha > 0$.  If $a<\sqrt{\E_{\dist_n}[p(1-p)]}$ and $\alpha > \frac{a}{2\sqrt{r\cdot \E_{\dist_n}[p(1-p)]}}$, then, with probability at least $1-n^{-\Omega(1)}$, $\Gamma^{\bm p}_n(k) > 0$. 
\vspace{-0.2em}
    \item Suppose $\E_{\dist_n}[p] \ge \frac{1}{2} + a \sqrt{\frac{\log n}{n}}$ and $1-F_n(\frac{1}{2} + \alpha\sqrt{\frac{\log k} k }) \le \frac{1}{n^{1+\Omega(1)}}$ for some constants $a, \alpha>0$.  If $\alpha < \frac 1 2$ and $a > \sqrt{r} \alpha$, then, with probability at least $1-n^{-\Omega(1)}$, $\Gamma^{\bm p}_n(k) < 0$. 
\vspace{-0.3em}
\end{itemize}
\end{theorem}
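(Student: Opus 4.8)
The plan is to reduce the claims $\Gamma^{\bm p}_n(k)>0$ and $\Gamma^{\bm p}_n(k)<0$ to a comparison of two conditional failure probabilities. Write $A(\bm p)=\Pr[\sum_{i=1}^{k}X_{(i)}\le k/2\mid\bm p]$ and $B(\bm p)=\Pr[\sum_{i=1}^{n}X_{(i)}\le n/2\mid\bm p]$, so that $\Gamma^{\bm p}_n(k)=B(\bm p)-A(\bm p)$; it then suffices to show that, with probability $1-n^{-\Omega(1)}$ over the competence draws, $A(\bm p)$ is provably smaller than $B(\bm p)$ (first bullet) or provably larger (second bullet). In both cases I would condition on a good event of probability $1-n^{-\Omega(1)}$ on which: (i) the DKW inequality (\Cref{lem:DKW1}) places every $F_n(p_{(i)})$ within $O(\sqrt{\log n/n})$ of $(n-i)/n$; (ii) the order-independent empirical averages $\frac1n\sum_i p_{(i)}$ and $\frac1n\sum_i p_{(i)}(1-p_{(i)})$ lie within $O(\sqrt{\log n/n})$, resp.\ $o(1)$, of $\E_{\dist_n}[p]$ and $\E_{\dist_n}[p(1-p)]$ (Hoeffding, since the summands are i.i.d.\ and bounded); and (iii) --- crucially, combining (i) with the hypothesis on $1-F_n(\tfrac12+\alpha\sqrt{\log k/k})$ --- every congress member has competence $\ge\tfrac12+\alpha\sqrt{\log k/k}$ in the first bullet, while \emph{every} voter in society has competence $\le\tfrac12+\alpha\sqrt{\log k/k}$ in the second.

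\emph{First bullet} ($\Gamma^{\bm p}_n(k)>0$). On the good event $\sum_{i=1}^{k}X_{(i)}$ stochastically dominates $\mathrm{Bin}(k,\tfrac12+\alpha\sqrt{\log k/k})$, so Hoeffding gives $A(\bm p)\le e^{-2\alpha^2\log k}=n^{-2r\alpha^2}$. For $B(\bm p)$ I would use anti-concentration: conditionally, $\sum_{i=1}^n X_{(i)}$ is a sum of independent Bernoullis whose mean $\sum_i p_{(i)}$ exceeds $n/2$ by at most $(a+\delta)\sqrt{n\log n}$ (any fixed $\delta>0$, on a sub-event of probability $1-n^{-\Omega(1)}$) and whose variance is $(1-o(1))\,n\,\E_{\dist_n}[p(1-p)]$, so by the Berry--Esseen theorem (or Slud's inequality), with $\Phi$ the standard normal CDF, $B(\bm p)\ge\Phi\!\left(-\frac{(a+\delta)\sqrt{\log n}}{\sqrt{(1-o(1))\,\E_{\dist_n}[p(1-p)]}}\right)-O(n^{-1/2})$, which equals $n^{-\frac{(a+\delta)^2}{2\E_{\dist_n}[p(1-p)]}-o(1)}$. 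The hypothesis $a<\sqrt{\E_{\dist_n}[p(1-p)]}$ is exactly what makes this exponent strictly below $1/2$ (for $\delta$ small), so the $O(n^{-1/2})$ Berry--Esseen correction is dominated. Comparing exponents, $A(\bm p)<B(\bm p)$ for all large $n$ once $2r\alpha^2>\frac{(a+\delta)^2}{2\E_{\dist_n}[p(1-p)]}$, which --- since $\alpha>\frac{a}{2\sqrt{r\,\E_{\dist_n}[p(1-p)]}}$ holds strictly --- is true for $\delta$ small.

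\emph{Second bullet} ($\Gamma^{\bm p}_n(k)<0$). Now the population is the easy side: on the good event, Hoeffding and $\E_{\dist_n}[p]\ge\tfrac12+a\sqrt{\log n/n}$ give $B(\bm p)\le e^{-2(\sum_i p_{(i)}-n/2)^2/n}\le n^{-2(a-\delta)^2}$ for any fixed $\delta>0$. For the congress I again use anti-concentration, now from above: since every voter has competence $\le\tfrac12+\alpha\sqrt{\log k/k}$, $\sum_{i=1}^{k}X_{(i)}$ is stochastically dominated by $\mathrm{Bin}(k,\tfrac12+\alpha\sqrt{\log k/k})$, a binomial of variance $(1-o(1))k/4$, so Berry--Esseen gives $A(\bm p)\ge\Pr[\mathrm{Bin}(k,\tfrac12+\alpha\sqrt{\log k/k})\le k/2]\ge\Phi(-(2+o(1))\alpha\sqrt{\log k})-O(k^{-1/2})\ge n^{-2r\alpha^2-o(1)}$; the assumption $\alpha<1/2$ keeps $2r\alpha^2<r/2$, so the $O(k^{-1/2})=O(n^{-r/2})$ error term is dominated. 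The two bounds give $A(\bm p)>B(\bm p)$ for all large $n$ once $2r\alpha^2<2(a-\delta)^2$, which follows from $a>\sqrt r\,\alpha$ for $\delta$ small. A union bound over the finitely many failure events completes both parts.

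I expect the pair of anti-concentration steps, and the bookkeeping of the lower-order terms around them, to be the main obstacle: each ``hard-direction'' probability must be shown to be polynomially small with an exponent strictly below $1/2$ (resp.\ below $r/2$), so that the Berry--Esseen third-moment correction and the unavoidable $\Theta(\sqrt{n\log n})$ slack in the conditional mean (coming from DKW/Hoeffding at confidence $n^{-\Omega(1)}$) are both genuinely dominated by the main term --- this is precisely where $a<\sqrt{\E_{\dist_n}[p(1-p)]}$ and $\alpha<1/2$ are used --- and threading the estimates through so that the final exponent comparison reproduces the stated thresholds $\alpha>\frac{a}{2\sqrt{r\,\E_{\dist_n}[p(1-p)]}}$ and $a>\sqrt r\,\alpha$, with the strict inequalities leaving just enough room for the error terms, takes care. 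The companion statement \Cref{thm:dictatorship} ($k=1$) follows from the same argument with $A(\bm p)$ replaced by $1-p_{(1)}$, whose size is controlled by the lower bound on $f_n$ near $1$ through $\Pr[p_{(1)}\le 1-t]=F_n(1-t)^n\le e^{-n(1-F_n(1-t))}$.
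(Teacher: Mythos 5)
Your proposal is correct and follows the paper's argument in all essentials: the same decomposition $\Gamma^{\bm p}_n(k)=\Pr[\sum_{i\le n}X_{(i)}\le n/2\mid\bm p]-\Pr[\sum_{i\le k}X_{(i)}\le k/2\mid\bm p]$, Hoeffding on the ``easy'' side after localizing the relevant competencies (DKW for the top $k$ in the first bullet, matching \Cref{lem:Pr-k-upper-bound} and \Cref{lem:Pr-n-upper-bound}), a Berry--Esseen/normal-tail lower bound on the ``hard'' side (matching \Cref{lem:Pr-n-lower-bound} and \Cref{lem:Pr-k-lowerbound}), and the same exponent comparisons $2r\alpha^2>\frac{a^2}{2\E_{\dist_n}[p(1-p)]}$ and $2r\alpha^2<\min\{2a^2,\,r/2\}$ with the strictness of the hypotheses absorbing the $\delta$-slack and the $O(n^{-1/2})$, $O(k^{-1/2})$ corrections. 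The one genuine (local) difference is in the second bullet: you lower-bound the congress failure probability by stochastically dominating $\sum_{i\le k}X_{(i)}$ with $\mathrm{Bin}\bigl(k,\tfrac12+\alpha\sqrt{\log k/k}\bigr)$ and applying the normal approximation to that binomial, whereas the paper applies Berry--Esseen to the heterogeneous conditional sum and must separately prove $\sum_{i\le k}p_{(i)}(1-p_{(i)})\ge k(\tfrac14-o(1))$, a step that uses the mean hypothesis $\E_{\dist_n}[p]\ge\tfrac12+a\sqrt{\log n/n}$ via $\frac1n\sum_i p_i\le\frac1k\sum_{i\le k}p_{(i)}$; your dominance trick gets the variance for free and is a modest simplification. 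One small imprecision: in the second bullet the event that \emph{every} $p_i\le\tfrac12+\alpha\sqrt{\log k/k}$ does not follow from DKW at resolution $\Theta(\sqrt{\log n/n})$ (your item (i)); it is obtained by a direct union bound over the $n$ i.i.d.\ draws, which is exactly what the hypothesis $1-F_n(\tfrac12+\alpha\sqrt{\log k/k})\le n^{-(1+\Omega(1))}$ is tailored for --- an easy fix that the paper makes explicitly.
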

Intuitively, in the first item above, the condition on the CDF, $1 - F_n(\frac{1}{2} + \alpha \sqrt{\frac{\log k}{k}}) \ge \frac{k}{n} + \Omega(\sqrt{\frac{\log n}{n}})$, and the condition on $\alpha$ imply that $\dist_n$ assigns large enough probability to high competence levels $p > \frac{1}{2} + \alpha \sqrt{\frac{\log k}{k}}$, so a congress of size $n^r$ will be composed of competent enough experts and hence will beat the entire population.  The conditions in the second item are in the opposite direction. 

We remark that the above conditions on the relation between $a$ and $\alpha$ are sharp: for distributions $\dist_n$ that are concentrated around $1/2$, we have $\E_{\dist_n}[p(1-p)] \approx 1/4$, so the first condition becomes $\alpha > \frac{a}{2\sqrt{r\cdot 1/4}} = \frac{a}{\sqrt r}$, or equivalently $a < \sqrt{r} \alpha$, while the second condition is the opposite: $a > \sqrt{r} \alpha$. 

Finally, we note that the conditions in \Cref{thm:general-k-comparison} on the distributions $\dist_n$ are satisfied by many natural classes of distributions, e.g., beta distributions and normal distributions truncated to $[0, 1]$. We identify more examples in \Cref{app:dists-satsifying-general-k}.
\section{Discussion}
\label{Discussion}
We have proved that under mild conditions, through the lens of an epistemic approach, current congresses are run with a sub-optimal size. However, despite this, it seems that these smaller congresses can still be cogent by at least beating majority under appropriate conditions. 

Current debates about the number of representatives in democracies tend to be about reducing their size, not increasing.\footnote{A 2020 Italian referendum approved reducing congress' size from 945 to 600~\citep{945}.} Indeed, even under the assumption that a larger congress would lead to a ``correct'' answer more often, this is clearly not the only desiderata to consider. Even under the strong assumption that the congress-members' votes reflect those of the top experts in society, congress-members are costly for the taxpayers. Beyond this, the legitimacy and representativeness of the institution are constantly under scrutiny. Designing political institutions relying solely on mathematical insights could yield unforeseen negative externalities (did Madison not warn against the \textit{confusion of the multitude}?). Cognitive, sociological and economical knowledge should be coupled with mathematical analyses to reach a reasonable trade-off, rather than optimizing a single factor. 

Further research could study the range of $k$ such that the probability that $k$ experts are right is close to or approximates the maximal probability. Incorporating a cost analysis, similar to \citet{magdon-ismail_mathematical_2018}, also seems particularly relevant to quantify the trade-off between the congress accuracy and its costs for the constituents.

Finally, this work supports, to some extent, propositions to constitute assemblies of citizens under fluid democracy\footnote{Fluid democracy relies on letting citizens nominating someone to represent themselves directly or to self-select to participate in the assembly with a weight equals to the number of votes she transitively gathered. }~\citep{miller, blum2016liquid, green, christoff, kahng, golz} that would vote on behalf of the entire population. Indeed, fluid democracy could yield very large citizen assemblies deemed desirable by our findings. Further research on the accuracy of such citizen assemblies could discuss the influence of the voters' weight in the weighted majority's performance. 
\newpage
\bibliographystyle{plainnat}
\bibliography{abb,bibfile}

\newpage
\appendix
\section*{Appendix}

\section{Missing Proofs}\label{app:missing-proof}

\subsection{Missing Portion of Proof of \Cref{thm:GeneralizationBoundL}}
\label{app:proofGeneralizationU}
Symmetric to the lower bound, we have that
$$
	\frac{\Pr[\mathcal{E}_{k}^{\ell + 1}]}{\Pr[\mathcal{E}_{k}^\ell]} \ge \frac{L}{1-L}.
	$$
Further, 	
$$\frac{p_{(k+1)}p_{(k+2)}}{(1-p_{(k+1)})(1-p_{(k+2)})} \le \frac{p_{(k+1)}^2}{(1-p_{(k+1)})^2}.
	$$
	
Hence, to prove a certain value $k + 2$ is not optimal using \Cref{lem:new-core}, it suffices that
\begin{equation*}
    \frac{p_{(k+1)}^2}{(1-p_{(k+1)})^2} < \frac{1-L}{L}, 
\end{equation*} which is equivalent to
\begin{equation}
\label{eq:p-sufficient-conditionUpper}
    p_{(k+1)} < \frac{1}{1+\sqrt{\frac{L}{1-L}}}
\end{equation}
Now, relying on \Cref{BoundG}, it holds that 
\[  p_{(k+1)} \le F^{-1}(\frac{n-k-1}{n}) + M\eps. \]
If we require 
\begin{equation}
\label{eq:k-sufficient-conditionUpper}
F^{-1}(\frac{n-k-1}{n}) + M\eps < \frac{1}{1+\sqrt{\frac{1-L}{L}}},
\end{equation}
then \Cref{eq:p-sufficient-conditionUpper} is satisfied and hence $p_{n}^{k+2} - p_n^k < 0$, which implies that such $k$ cannot be optimal.  Solving  \Cref{eq:k-sufficient-conditionUpper} gives $\frac{k}{n} > 1 - F\left(\frac{1}{1+\sqrt{\frac{1-L}{L}}} - M\eps\right) - \frac{1}{n}$. 

Hence, as long as $\frac{k}{n} > 1 - F\left(\frac{1}{1+\sqrt{\frac{1-L}{L}}} - M\eps\right) - \frac{1}{n}$, the condition of \Cref{lem:new-core} will be satisfied. Multiplying through by $n$ yields the desired upper bound. \qed

\subsection{Proof of \Cref{thm:dictatorship}}\label{app:proof-dictatorship-new}

For the proof, we will need the following lemmas, the first and third are well-known concentration inequalities and the second is a standard bound on the standard normal CDF which we prove here for completeness.

\begin{lemma}[Berry-Esseen Theorem]\label{lem:BE}
Let $X_1, \ldots, X_n$ be independent random variables with $\E[X_i] = 0$, $\E[X_i^2] = \sigma_i^2 > 0$, and $\E[|X_i|^3] = \rho_i < \infty$.  Let $F_{S_n}$ be the CDF of $S_n = \frac{\sum_{i=1}^n X_i}{\sqrt{\sum_{i=1}^n \sigma_i^2}}$ and $\Phi$ be the CDF of the standard normal distribution.  Then, there exists an absolute constant $C_1$ such that 
\[  | F_{S_n}(x) - \Phi(x) | \le \frac{C_1}{\sqrt{\sum_{i=1}^n \sigma_i^2}} \max_{1\le i\le n} \frac{\rho_i}{\sigma_i^2}, ~~~\forall x\in \reals \]
\end{lemma}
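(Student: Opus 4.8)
The plan is not to reprove this classical result from first principles but to derive the stated inequality from the textbook Berry--Esseen bound together with one elementary estimate. First I would recall the standard (Lyapunov-ratio) form of the theorem: there is an absolute constant $C_1$ such that
\[
\sup_{x\in\reals} |F_{S_n}(x) - \Phi(x)| \le C_1 \, \frac{\sum_{i=1}^n \rho_i}{\left(\sum_{i=1}^n \sigma_i^2\right)^{3/2}}.
\]
Then the stated bound follows immediately from the elementary inequality
$
\sum_{i=1}^n \rho_i = \sum_{i=1}^n \sigma_i^2 \cdot \frac{\rho_i}{\sigma_i^2} \le \left(\sum_{i=1}^n \sigma_i^2\right) \max_{1\le i\le n} \frac{\rho_i}{\sigma_i^2},
$
which converts one factor of $\sum_i \sigma_i^2$ in the denominator into the maximum ratio and leaves $\left(\sum_i \sigma_i^2\right)^{1/2} = \sqrt{\sum_i \sigma_i^2}$ in the denominator, exactly as claimed (with the same absolute constant).

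\textbf{Sketch of the underlying classical argument (for completeness).} Write $B_n^2 = \sum_{i=1}^n \sigma_i^2$ and let $\varphi(t) = \E[e^{itS_n}]$ be the characteristic function of $S_n$. The first step is Esseen's smoothing inequality: for every $T>0$,
\[
\sup_x |F_{S_n}(x) - \Phi(x)| \le \frac{1}{\pi}\int_{-T}^{T} \left|\frac{\varphi(t) - e^{-t^2/2}}{t}\right| \dd t + \frac{C}{T},
\]
the last term using that the standard normal density is bounded. The second step bounds the integrand: by independence, $\varphi(t) = \prod_{i=1}^n \varphi_i(t/B_n)$ where $\varphi_i$ is the characteristic function of $X_i$; a third-order Taylor expansion, using $\E[X_i]=0$ and $\E[X_i^2]=\sigma_i^2$, gives $\left|\varphi_i(t/B_n) - \left(1 - \tfrac{\sigma_i^2 t^2}{2B_n^2}\right)\right| \le \tfrac{\rho_i |t|^3}{6 B_n^3}$, and comparing $\prod_i \varphi_i(t/B_n)$ first to $\prod_i\left(1 - \tfrac{\sigma_i^2 t^2}{2 B_n^2}\right)$ and then to $e^{-t^2/2}$ via the telescoping bound $|\prod_i a_i - \prod_i b_i| \le \sum_i |a_i - b_i|$ (valid when all $|a_i|,|b_i|\le 1$) yields, for $t$ in a suitable range, $|\varphi(t) - e^{-t^2/2}| \le C' \tfrac{\sum_i \rho_i}{B_n^3} |t|^3 e^{-t^2/4}$. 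The final step is to take $T$ equal to a constant multiple of $B_n^3/\sum_i \rho_i$, substitute into the smoothing inequality, and use $\int_{\reals} |t|^2 e^{-t^2/4}\dd t = O(1)$; both resulting terms are $O\!\left(\sum_i \rho_i / B_n^3\right)$, which is the Lyapunov form.

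\textbf{Main obstacle.} The technical heart, and the reason this theorem is almost always cited rather than reproved, is the Esseen smoothing inequality itself: passing from an $L^\infty$ bound on the difference of two CDFs to an $L^1$-type bound on the difference of their characteristic functions over a truncated frequency window is a genuinely Fourier-analytic step, not a routine manipulation. The Taylor expansion and product-telescoping are mechanical once one is careful that these estimates are only useful while $\sigma_i^2 t^2/B_n^2$ (and hence $t$) stays small, which is precisely what forces the cutoff $T \asymp B_n^3/\sum_i\rho_i$ and requires separate bookkeeping for larger $t$. Since the paper only uses this inequality as a black box, I would state it as a known result, cite a standard reference (e.g., \citet{massart1990tight}-style sources on DKW/Berry--Esseen), and record the one-line reduction above to obtain the $\max_i \rho_i/\sigma_i^2$ formulation actually needed.
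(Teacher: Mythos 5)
Your proposal is correct, and it matches the paper's treatment: the paper gives no proof of this lemma at all, invoking it as a standard known result, and your one-line reduction $\sum_{i}\rho_i \le \left(\sum_{i}\sigma_i^2\right)\max_{1\le i\le n}\rho_i/\sigma_i^2$ correctly shows that the stated $\max$ form follows from the textbook Lyapunov-ratio bound with the same absolute constant. The smoothing-inequality sketch is a faithful outline of the classical characteristic-function argument, but it is not needed here since the lemma is used only as a black box.
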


\begin{lemma}[Bounds on standard normal CDF]\label{lem:normal-CDF}
Let $\Phi(x) = \int_{-\infty}^x \frac{1}{\sqrt{2\pi}}e^{-t^2/2} \dd t$ be the CDF of the standard normal distribution.  Then we have for any $x>0$, 
\[ \frac{1}{\sqrt{2\pi}} \frac{x}{x^2 + 1} e^{-x^2/2} \le \Phi(-x) = 1 - \Phi(x) \le \frac{1}{\sqrt{2\pi}}\frac{1}{x} e^{-x^2/2}.  \]
\end{lemma}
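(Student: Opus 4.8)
The plan is to prove the two inequalities separately, disposing of the upper bound by a one-line integral comparison and the lower bound by a monotonicity (differential-inequality) argument.

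For the upper bound, I would write $1 - \Phi(x) = \frac{1}{\sqrt{2\pi}}\int_x^\infty e^{-t^2/2}\,\dd t$ and use that on the domain of integration $t \ge x > 0$, so $1 \le t/x$. Hence $\int_x^\infty e^{-t^2/2}\,\dd t \le \frac{1}{x}\int_x^\infty t\, e^{-t^2/2}\,\dd t = \frac{1}{x}\bigl[-e^{-t^2/2}\bigr]_x^\infty = \frac{1}{x}e^{-x^2/2}$, which is exactly the claimed upper bound after multiplying by $\frac{1}{\sqrt{2\pi}}$.

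For the lower bound, I would set $\phi(t) = \frac{1}{\sqrt{2\pi}}e^{-t^2/2}$ and introduce the auxiliary function $h(x) = \bigl(1 - \Phi(x)\bigr) - \frac{x}{x^2+1}\phi(x)$ on $(0,\infty)$. First note that $h(x)\to 0$ as $x\to\infty$, since both terms vanish. The key computation is $h'(x)$: using $\Phi'(x)=\phi(x)$, $\phi'(x) = -x\phi(x)$, and $\bigl(\tfrac{x}{x^2+1}\bigr)' = \tfrac{1-x^2}{(x^2+1)^2}$, a short simplification gives $h'(x) = \phi(x)\Bigl(-1 - \tfrac{1-x^2}{(x^2+1)^2} + \tfrac{x^2}{x^2+1}\Bigr) = -\tfrac{2}{(x^2+1)^2}\phi(x) < 0$. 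Thus $h$ is strictly decreasing on $(0,\infty)$ with limit $0$ at $+\infty$, so $h(x) > 0$ for every $x > 0$, which is precisely the claimed lower bound.

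The argument is entirely elementary; the only place that needs care is the algebraic simplification of $h'(x)$, where one must check that the three terms collapse to the single, manifestly negative term $-\tfrac{2}{(x^2+1)^2}\phi(x)$. Getting exactly this cancellation (rather than, say, the weaker estimate $\tfrac{x^2-1}{x^3}\phi(x)$ that two rounds of integration by parts would produce, which does not reach the stated bound since $\tfrac{x^2-1}{x^3} < \tfrac{x}{x^2+1}$) is what forces the precise constant $\tfrac{x}{x^2+1}$ in the statement.
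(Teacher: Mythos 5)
Your proposal is correct. The upper bound is word-for-word the paper's argument (insert $t/x\ge 1$ and integrate $t e^{-t^2/2}$ exactly). For the lower bound the paper instead bounds the integrand pointwise, $e^{-t^2/2}\ge \frac{(t^2+1)^2-2}{(t^2+1)^2}e^{-t^2/2}$, and recognizes the right-hand side as the exact derivative of $-\frac{t}{t^2+1}e^{-t^2/2}$, so it can integrate in closed form; your argument is the differentiated version of the same identity: you form $h(x)=(1-\Phi(x))-\frac{x}{x^2+1}\phi(x)$, compute $h'(x)=-\frac{2}{(x^2+1)^2}\phi(x)<0$ (your algebra checks out: $-1+\frac{x^2}{x^2+1}=-\frac{1}{x^2+1}$ and combining with $-\frac{1-x^2}{(x^2+1)^2}$ gives $-\frac{2}{(x^2+1)^2}$), and conclude from $h\to 0$ at infinity that $h>0$. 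The two routes rest on exactly the same cancellation, so neither is stronger; the paper's version is slightly more compact once the antiderivative is guessed, while yours replaces the guess of a pointwise integrand bound with a routine monotonicity check, at the small cost of also verifying the boundary behavior at $+\infty$. Your closing remark is also accurate: the two-step integration-by-parts bound $\frac{x^2-1}{x^3}\phi(x)$ is strictly weaker since $(x^2-1)(x^2+1)=x^4-1<x^4$, so it could not replace the stated constant $\frac{x}{x^2+1}$.
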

\begin{proof}
The right inequality is because 
\begin{align*}
 1 - \Phi(x) & = \int_{x}^{\infty} \frac{1}{\sqrt{2\pi}} e^{-\frac{t^2}{2}} \dd t \\
    & \le \int_{x}^{\infty} \frac{1}{\sqrt{2\pi}} \frac{t}{x} e^{-\frac{t^2}{2}} \dd t = \frac{1}{\sqrt{2\pi}} \frac{1}{x} \left(- e^{-\frac{t^2}{2}}\right)\bigg|_{t=x}^{\infty} \\
    & = \frac{1}{\sqrt{2\pi}} \frac{1}{x} e^{-\frac{x^2}{2}}. 
\end{align*}

The left inequality is because 
\begin{align*}
    1-\Phi(x) & = \int_{x}^{\infty} \frac{1}{\sqrt{2\pi}} e^{-\frac{t^2}{2}} \dd t \\
    & \ge \int_{x}^{\infty} \frac{1}{\sqrt{2\pi}} \frac{(t^2+1)^2 - 2}{(t^2+1)^2} e^{-\frac{t^2}{2}} \dd t = \frac{1}{\sqrt{2\pi}}\left(-\frac{t}{t^2+1} e^{-\frac{t^2}{2}}\right)\bigg|_{t=x}^{\infty} \\
    & = \frac{1}{\sqrt{2\pi}} \frac{x}{x^2 + 1} e^{-\frac{x^2}{2}}.\qedhere
\end{align*}
\end{proof}

\begin{lemma}[Hoeffding's Inequality]\label{lem:hoeffding}
Let $X_1, \ldots, X_n$ be independent random variables bounded by $0\le X_i \le 1$.  Then
\[  \Pr\left[ \sum_{i=1}^n X_i  \ge \E[\sum_{i=1}^n X_i] + t\right] \le \exp(-\frac{2t^2}{n}),  \]
for any $t > 0$.  The other direction also holds: 
\[  \Pr\left[ \sum_{i=1}^n X_i \le \E[\sum_{i=1}^n X_i] -t \right] \le \exp(-\frac{2t^2}{n}).  \]
\end{lemma}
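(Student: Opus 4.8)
The plan is to prove the upper-tail bound via the Chernoff (exponential-moment) method and then recover the lower tail by symmetry. Write $Y_i = X_i - \E[X_i]$, so each $Y_i$ is centered and, because $0 \le X_i \le 1$, lies in an interval of length exactly $1$. Set $S = \sum_{i=1}^n Y_i = \sum_{i=1}^n X_i - \E[\sum_{i=1}^n X_i]$. For any $s > 0$, Markov's inequality applied to the nonnegative random variable $e^{sS}$ gives $\Pr[S \ge t] \le e^{-st}\,\E[e^{sS}]$, and independence factorizes the moment generating function as $\E[e^{sS}] = \prod_{i=1}^n \E[e^{sY_i}]$.

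The technical heart is Hoeffding's lemma: for a centered random variable $Y$ supported in an interval $[a,b]$, one has $\E[e^{sY}] \le \exp(s^2(b-a)^2/8)$ for all $s \in \reals$. I would prove this by setting $\psi(s) = \log \E[e^{sY}]$ and checking that $\psi(0) = 0$, $\psi'(0) = \E[Y] = 0$, and $\psi''(s) \le (b-a)^2/4$. The last bound is the crux: $\psi''(s)$ equals the variance of $Y$ under the exponentially tilted law with density proportional to $e^{sy}$, and since that tilted law is still supported in $[a,b]$, Popoviciu's inequality bounds its variance by $(b-a)^2/4$. A second-order Taylor expansion with remainder then yields $\psi(s) \le \tfrac{s^2}{2}\cdot\tfrac{(b-a)^2}{4} = s^2(b-a)^2/8$.

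Applying the lemma with $b - a = 1$ to each factor gives $\E[e^{sY_i}] \le e^{s^2/8}$, hence $\E[e^{sS}] \le e^{ns^2/8}$ and $\Pr[S \ge t] \le \exp(-st + ns^2/8)$. Minimizing the exponent over $s > 0$ (the optimum is $s = 4t/n$) produces exactly $\exp(-2t^2/n)$, which is the first claimed inequality. For the lower tail, I would apply the upper-tail bound to the variables $X_i' = 1 - X_i$, which also satisfy $0 \le X_i' \le 1$; since $\sum_i X_i' = n - \sum_i X_i$, the event $\{\sum_i X_i \le \E[\sum_i X_i] - t\}$ becomes $\{\sum_i X_i' \ge \E[\sum_i X_i'] + t\}$, so the same bound applies verbatim.

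I expect the main obstacle to be the convexity argument inside Hoeffding's lemma — in particular the clean justification that $\psi''(s)$ is a tilted variance bounded by $(b-a)^2/4$ — rather than the final optimization over $s$, which is an elementary quadratic minimization.
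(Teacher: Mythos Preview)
Your proof is correct and is the standard textbook argument for Hoeffding's inequality. The paper, however, does not prove this lemma at all: it is stated as a well-known concentration inequality and used as a black box in the subsequent arguments, with no proof sketch or reference beyond the name. So there is nothing to compare against; your Chernoff-method derivation via Hoeffding's lemma (with the tilted-variance bound $\psi''(s)\le(b-a)^2/4$) and the $X_i' = 1-X_i$ symmetry for the lower tail would serve perfectly well as a self-contained justification that the paper simply omits.
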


Now we prove \Cref{thm:dictatorship}.

\begin{proof}[Proof of \Cref{thm:dictatorship}]
To simplify notations we write $\Pr\left[\sum_{i=1}^k X_{(i)} > \frac{k}{2} \mid X_{(i)} \sim \mathrm{Bern}(p_{(i)})\right]$ as $\Pr\left[\sum_{i=1}^k X_{(i)} > \frac{k}{2} \mid \bm p\right]$.
Recalling the definition of $\Gamma^{\bm p}_n(k)$, since $\Pr\left[\sum_{i=1}^k X_{(i)} > \frac{k}{2} \mid \bm p\right] = 1 - \Pr\left[\sum_{i=1}^k X_{(i)} \le \frac{k}{2} \mid \bm p\right]$, $\Gamma^{\bm p}_n(k)$ can be equivalently written as 
\[ \Gamma^{\bm p}_n(k) = \Pr\left[\sum_{i=1}^n X_{(i)} \le \frac{n}{2} ~\bigg|~ \bm p\right] - \Pr\left[\sum_{i=1}^k X_{(i)} \le \frac{k}{2} ~\bigg|~ \bm p\right]. \]
To show either $\Gamma^{\bm p}_n(k) > 0$ or $\Gamma^{\bm p}_n(k) < 0$, we will compare $\Pr\left[\sum_{i=1}^n X_{(i)} \le \frac{n}{2} ~|~ \bm p\right]$ with $\Pr\left[\sum_{i=1}^k X_{(i)} \le \frac{k}{2} ~|~ \bm p\right]$.  To do this, we prove the following lemmas: 

\begin{lemma}\label{lem:Pr-n-lower-bound}
Suppose $\E_{p\sim \dist_n}[p] \le \frac{1}{2} + \eps_n$ where $\eps_n = a \sqrt{\frac{\log n}{n}}$ for some constant $a>0$.  Let $\eps = b \sqrt{\frac{\log n}{n}}$ for some constant $b>0$.  Suppose $\E_{p\sim \dist_n}[p(1-p)] > \eps$.  
Let $c = \frac{a + b}{\sqrt{\E_{p\sim \dist_n}[p(1-p)] - \eps}}$.  Then we have: with probability at least $1 - 2n^{-2b^2}$ (over the random draw of $\bm p \sim \dist_n$), 
\[ \Pr\left[\sum_{i=1}^n X_{(i)} \le \frac{n}{2} ~\bigg|~ \bm p\right] \ge \frac{1}{\sqrt{2\pi}} \cdot \frac{c \sqrt{\log n}}{(c^2\log n + 1)} \cdot \frac{1}{n^{c^2/2}} - \frac{C_1}{\sqrt{\E_{p\sim \dist_n}[p(1-p)] - \eps}}\frac{1}{\sqrt{n}}, \]
where $C_1$ is the constant in Berry-Esseen theorem (\Cref{lem:BE}). 
\end{lemma}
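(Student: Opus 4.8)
The plan is to reduce $\Pr\!\left[\sum_{i=1}^n X_{(i)} \le n/2 \mid \bm p\right]$ to a Gaussian tail via Berry--Esseen, and then to show that the relevant random parameters --- the conditional mean $\mu := \sum_{i=1}^n p_i$ and conditional variance $v := \sum_{i=1}^n p_i(1-p_i)$ of $S_n := \sum_{i=1}^n X_{(i)}$ --- concentrate enough that both the Gaussian argument and the approximation error are controlled. Note that the ordering of the voters is irrelevant for this sum, so given $\bm p$, $S_n$ is simply a sum of independent $\mathrm{Bern}(p_i)$'s.

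\textbf{Step 1 (normal approximation).} First I would apply \Cref{lem:BE} to the centered summands $X_i - p_i$, which have $\sigma_i^2 = p_i(1-p_i)$ and $\rho_i = p_i(1-p_i)\big(p_i^2 + (1-p_i)^2\big)$, so that $\max_i \rho_i/\sigma_i^2 = \max_i\big(p_i^2 + (1-p_i)^2\big) \le 1$. Provided $v>0$, this gives, conditioned on $\bm p$,
\[
\Pr\!\left[S_n \le \tfrac n2 ~\Big|~ \bm p\right] \;=\; \Pr\!\left[\frac{S_n - \mu}{\sqrt v} \le \frac{n/2 - \mu}{\sqrt v}\right] \;\ge\; \Phi\!\left(\frac{n/2 - \mu}{\sqrt v}\right) - \frac{C_1}{\sqrt v}.
\]

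\textbf{Step 2 (concentration of $\mu$ and $v$).} Since $p_i, p_i(1-p_i) \in [0,1]$, \Cref{lem:hoeffding} with deviation $t = \eps n = b\sqrt{n\log n}$ gives, each with probability at least $1 - e^{-2\eps^2 n} = 1 - n^{-2b^2}$, the bounds $\mu \le n\,\E_{\dist_n}[p] + \eps n \le \tfrac n2 + (a+b)\sqrt{n\log n}$ and $v \ge n\,\E_{\dist_n}[p(1-p)] - \eps n = n\big(\E_{\dist_n}[p(1-p)] - \eps\big)$, the latter strictly positive by hypothesis. A union bound makes both hold with probability at least $1 - 2n^{-2b^2}$; call this event $\mathcal G$.

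\textbf{Step 3 (combine).} On $\mathcal G$ I would bound $n/2 - \mu \ge -(a+b)\sqrt{n\log n}$ and $\sqrt v \ge \sqrt{n\big(\E_{\dist_n}[p(1-p)] - \eps\big)}$, so that $\frac{n/2 - \mu}{\sqrt v} \ge -c\sqrt{\log n}$ with $c = (a+b)/\sqrt{\E_{\dist_n}[p(1-p)] - \eps}$ (this holds whatever the sign of $n/2 - \mu$). By monotonicity of $\Phi$ and the left inequality of \Cref{lem:normal-CDF} at $x = c\sqrt{\log n}$,
\[
\Phi\!\left(\frac{n/2 - \mu}{\sqrt v}\right) \ge \Phi\!\left(-c\sqrt{\log n}\right) \ge \frac{1}{\sqrt{2\pi}}\cdot\frac{c\sqrt{\log n}}{c^2 \log n + 1}\cdot n^{-c^2/2},
\]
while the same lower bound on $v$ gives $\frac{C_1}{\sqrt v} \le \frac{C_1}{\sqrt{\E_{\dist_n}[p(1-p)] - \eps}}\cdot\frac{1}{\sqrt n}$. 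Substituting both into the inequality of Step 1 yields exactly the claimed bound. \textbf{Main obstacle.} None of the steps is hard, but the argument hinges on one point worth stating carefully: the lower bound on the \emph{random} variance $v$ must be used twice in the same direction --- it both pushes $\Phi$'s argument up (in the regime $n/2 - \mu < 0$, the interesting one since the society is only slightly biased) and shrinks the Berry--Esseen error. Lining up the constants forces the Hoeffding slack to be taken at scale $\eps n = b\sqrt{n\log n}$, matching the bias scale $\eps_n n = a\sqrt{n\log n}$, which is what produces the combined numerator $a+b$ in $c$. A minor technicality is that continuity of $\dist_n$ ensures $p_i \in (0,1)$ almost surely, so all $\sigma_i^2 > 0$ and \Cref{lem:BE} applies verbatim.
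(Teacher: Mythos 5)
Your proof is correct and follows essentially the same route as the paper's: Berry--Esseen applied to the centered Bernoulli variables, Hoeffding concentration of $\sum_i p_i$ and $\sum_i p_i(1-p_i)$ at scale $\eps n = b\sqrt{n\log n}$ with a union bound, and then the Gaussian tail lower bound of \Cref{lem:normal-CDF} at $x = c\sqrt{\log n}$. The only difference is cosmetic: you make explicit the bounding of the Berry--Esseen error $C_1/\sqrt{v}$ by $C_1/\bigl(\sqrt{\E_{\dist_n}[p(1-p)]-\eps}\,\sqrt{n}\bigr)$ via the same variance lower bound, which the paper leaves implicit.
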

\begin{proof}
Given $\bm p = (p_{(1)}, \ldots, p_{(n)})$, each $X_{(i)}$ independently follows $\text{Bern}(p_{(i)})$.  We use Berry-Esseen theorem (\Cref{lem:BE}) for $Y_i = X_{(i)} - p_{(i)}$, $i=1, \ldots, n$.  Noticing that $\E[Y_i] = 0$, $\sigma_i^2 = \E[Y_i^2] = p_{(i)}(1-p_{(i)})$, and $\rho_i = \E[|Y_i|^3] = p_{(i)}(1-p_{(i)})[(1 - p_{(i)})^2 + p_{(i)}^2 \le \sigma_i^2$, the theorem implies 
\[ \left| \Pr\left[ \frac{\sum_{i=1}^n Y_i}{\sum_{i=1}^n \sigma_i^2} \le x \right] - \Phi(x) \right| \le \frac{C_1}{\sqrt{\sum_{i=1}^n \sigma_i^2}} \max_{1\le i \le n}\frac{\rho_i}{\sigma_i^2} \le \frac{C_1}{\sqrt{\sum_{i=1}^n \sigma_i^2}} \stackrel{\text{def}}{=} \Delta_1  \]
for any $x\in \reals$, where $\Phi(x)$
is CDF of the standard normal distribution.  
Therefore, 
\begin{align}
    \Pr\left[\sum_{i=1}^n X_{(i)} \le \frac{n}{2} ~\bigg|~ \bm p\right] & = \Pr\left[ \sum_{i=1}^n X_{(i)} - \sum_{i=1}^n p_{(i)} \le \frac{n}{2} - \sum_{i=1}^n p_{(i)} ~\bigg|~ \bm p\right] \nonumber \\
    & = \Pr\left[ \frac{\sum_{i=1}^n Y_i}{\sqrt{\sum_{i=1}^n \sigma_i^2}} \le \frac{\frac{n}{2} - \sum_{i=1}^n p_{(i)}}{\sqrt{\sum_{i=1}^n \sigma_i^2}} \right] \nonumber \\
    & \ge \Phi\left(\frac{\frac{n}{2} - \sum_{i=1}^n p_{(i)}}{\sqrt{\sum_{i=1}^n \sigma_i^2}}\right) - \Delta_1  \label{eq:probability-p-sigma-phi}
\end{align}
We note that $\sum_{i=1}^n p_{(i)} = \sum_{i=1}^n p_i$ is the sum of $n$ i.i.d.~draws from distribution $\dist_n$, with mean $\E[\sum_{i=1}^n p_i] = n \E_{p\sim \dist_n}[p]$.  By Hoeffding's inequality (\Cref{lem:hoeffding}), letting $t = n\eps$, we have
\begin{equation}\label{eq:p-mean}
    \sum_{i=1}^n p_i  \le n \E_{p\sim \dist_n}[p] + n\eps 
\end{equation}
with probability at least $1 - \exp(-\frac{2(n\eps)^2}{n}) = 1 - n^{-2b^2}$.  Also, $\sum_{i=1}^n \sigma_i^2 = \sum_{i=1}^n p_i(1-p_i)$ is the sum of $n$ i.i.d.~draws from a distribution, with mean $n \E_{p\sim \dist_n}[p(1-p)]$, so 
\begin{equation}\label{eq:p-sigma}
    \sum_{i=1}^n \sigma_i^2  \ge n \E_{p\sim \dist_n}[p(1-p)] - n\eps 
\end{equation}
also with probability at least $1 - \exp(-\frac{2(n\eps)^2}{n}) = 1 - n^{2b^2}$.  By a union bound, we have with probability at least $1 - 2n^{-2b^2}$, both \Cref{eq:p-mean} and \Cref{eq:p-sigma} hold, which imply
\begin{align*}
    \Pr\left[\sum_{i=1}^n X_{(i)} \le \frac{n}{2} ~\bigg|~ \bm p\right] & \ge \Phi\left(\frac{\frac{n}{2} - \sum_{i=1}^n p_{(i)}}{\sqrt{\sum_{i=1}^n \sigma_i^2}}\right) - \Delta_1 \\
    & \ge \Phi\left(\frac{\frac{n}{2} - n \E_{p\sim \dist_n}[p] - n\eps}{\sqrt{\sum_{i=1}^n \sigma_i^2}}\right) - \Delta_1 \\
    & \ge \Phi\left(\frac{- n\eps_n - n\eps}{\sqrt{\sum_{i=1}^n \sigma_i^2}}\right) - \Delta_1 \\
    & \ge \Phi\left(\frac{- n\eps_n - n\eps}{\sqrt{n \E_{p\sim \dist_n}[p(1-p)] - n\eps}}\right) - \Delta_1 \\
    & = \Phi\left(- \sqrt n \frac{\eps_n + \eps}{\sqrt{\E_{p\sim \dist_n}[p(1-p)] - \eps}}\right) - \Delta_1 \\
    & =  \Phi\left(- \sqrt n \frac{(a+b)\sqrt{\frac{\log n}{n}}}{\sqrt{\E_{p\sim \dist_n}[p(1-p)] - \eps}}\right) - \Delta_1 \\
    & =  \Phi\left(- c \sqrt{\log n}\right) - \Delta_1.
\end{align*}
Using \Cref{lem:normal-CDF} with $x = c \sqrt{\log n}$, we get
\[\Pr\left[\sum_{i=1}^n X_{(i)} \le \frac{n}{2} ~\bigg|~ \bm p\right] \ge \Phi\left(- c \sqrt{\log n}\right) - \Delta_1 \ge \frac{1}{\sqrt{2\pi}} \frac{c\sqrt{\log n}}{c^2\log n + 1} \frac{1}{n^{c^2/2}} - \Delta_1, \]
concluding the proof. 
\end{proof}

\begin{lemma}\label{lem:Pr-n-upper-bound}
Suppose $\E_{p\sim \dist_n}[p] \ge \frac{1}{2} + \eps_n$ where $\eps_n = a \sqrt{\frac{\log n}{n}}$ for some constant $a>0$.
Let $b$ be a constant with $0 < b < a$. 
Then we have: with probability at least $1 - n^{-2b^2}$ (over the random draw of $\bm p \sim \dist_n$), 
\[ \Pr\left[\sum_{i=1}^n X_{(i)} \le \frac{n}{2} ~\bigg|~ \bm p\right] \le \frac{1}{n^{2(a-b)^2}}.  \]
\end{lemma}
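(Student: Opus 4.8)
The plan is a two-level application of Hoeffding's inequality (\Cref{lem:hoeffding}): first at the level of the i.i.d.\ competences $p_1, \dots, p_n$, then at the level of the conditionally independent votes $X_{(1)}, \dots, X_{(n)}$ given $\bm p$.

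First I would set $\eps = b\sqrt{\frac{\log n}{n}}$; note $\eps_n - \eps = (a - b)\sqrt{\frac{\log n}{n}} > 0$ since $b < a$, so all the tail parameters below are positive. Since $\sum_{i=1}^n p_i$ is a sum of $n$ i.i.d.\ draws from $\dist_n$ bounded in $[0,1]$ with mean $n\,\E_{\dist_n}[p] \ge \frac{n}{2} + n\eps_n$, the lower-tail form of \Cref{lem:hoeffding} with $t = n\eps$ gives that, with probability at least $1 - \exp(-\frac{2(n\eps)^2}{n}) = 1 - n^{-2b^2}$,
\[
\sum_{i=1}^n p_i \;\ge\; n\,\E_{\dist_n}[p] - n\eps \;\ge\; \frac{n}{2} + n(\eps_n - \eps) \;=\; \frac{n}{2} + (a-b)\sqrt{n\log n}.
\]
Call this event $\mathcal G$ (a property of the draw $\bm p$).

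Next I would condition on any fixed $\bm p \in \mathcal G$. Given $\bm p$, the $X_{(i)}$ are independent with $\E\big[\sum_{i=1}^n X_{(i)} \,\big|\, \bm p\big] = \sum_{i=1}^n p_{(i)} = \sum_{i=1}^n p_i \ge \frac{n}{2} + (a-b)\sqrt{n\log n}$. Writing $t = \E[\sum_i X_{(i)} \mid \bm p] - \frac n2 \ge (a-b)\sqrt{n\log n}$, the lower-tail form of \Cref{lem:hoeffding} again yields
\[
\Pr\!\left[\sum_{i=1}^n X_{(i)} \le \frac n2 ~\bigg|~ \bm p\right] \le \Pr\!\left[\sum_{i=1}^n X_{(i)} \le \E\Big[\textstyle\sum_i X_{(i)}\,\Big|\,\bm p\Big] - t ~\bigg|~ \bm p\right] \le \exp\!\left(-\frac{2t^2}{n}\right) \le \exp\!\big(-2(a-b)^2\log n\big) = n^{-2(a-b)^2}.
\]

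Finally I would combine the two: the displayed bound on $\Pr[\sum_i X_{(i)} \le n/2 \mid \bm p]$ holds for every $\bm p \in \mathcal G$, and $\Pr[\bm p \in \mathcal G] \ge 1 - n^{-2b^2}$, which is exactly the claim. There is no genuine obstacle here beyond careful constant-tracking; the only points needing a sentence of care are that $b < a$ makes the intermediate gap $\eps_n - \eps$ (and hence $t$) strictly positive so Hoeffding applies, and that $\sum_i p_{(i)} = \sum_i p_i$ so the first-level concentration transfers directly to the conditional mean of the vote count.
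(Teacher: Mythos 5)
Your proposal is correct and mirrors the paper's argument exactly: a first application of Hoeffding's inequality to the i.i.d.\ draws $p_1,\dots,p_n$ with $\eps = b\sqrt{\frac{\log n}{n}}$, followed by a second application to the conditionally independent votes given $\bm p$, using $\sum_i p_{(i)} = \sum_i p_i$ to transfer the bound. No differences worth noting.
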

\begin{proof}
We note that $\sum_{i=1}^n p_{(i)} = \sum_{i=1}^n p_i$ is the sum of $n$ i.i.d.~draws from distribution $\dist_n$, with mean $\E[\sum_{i=1}^n p_i] = n \E_{p\sim \dist_n}[p]$.  Let $\eps = b\sqrt{\frac{\log n}{n}} < \eps_n$.  By Hoeffding's inequality (\Cref{lem:hoeffding}), with probability at least $1 - \exp(-\frac{2(n\eps)^2}{n}) = 1 - n^{-2b^2}$, it holds that 
\begin{equation*}
    \sum_{i=1}^n p_i \ge  n \E_{p\sim \dist_n}[p] -  n\eps \ge \frac{n}{2} + n\eps_n - n\eps > \frac{n}{2}. 
\end{equation*}
Assuming $\sum_{i=1}^n p_i \ge  n \E_{p\sim \dist_n}[p] -  n\eps$ holds,
we consider the conditional probability  $\Pr[\sum_{i=1}^n X_{(i)} \le \frac{n}{2} ~|~ \bm p]$.  Given $\bm p$, $X_{(i)}$'s are independent Bernoulli random variables with means $\E[X_{(i)}] = p_{(i)}$. Hence, by Hoeffding's inequality (\Cref{lem:hoeffding}), 
\begin{align*}
\Pr\left[\sum_{i=1}^n X_{(i)} \le \frac{n}{2} ~\bigg|~ \bm p\right]
& \le \exp\left(-\frac{2(\sum_{i=1}^n p_{(i)} - \frac{n}{2})^2}{n}\right) \\
& \le \exp\left(-\frac{2(n\eps_n - n\eps)^2}{n}\right) = \exp\left(-2n(\eps_n-\eps)^2\right) = \frac{1}{n^{2(a-b)^2}}. \qedhere
\end{align*}
\end{proof}

\begin{lemma}\label{lem:Pr-1-upper-bound}
Suppose the PDF of $\dist_n$ satisfies $f_n(x) \ge \underline C (1-x)^{\underline \beta - 1}$ for $x\in [1-\underline \delta, 1]$ for some constants $\underline C, \underline \beta, \underline \delta > 0$.  Then, for sufficiently large $n$, with probability at least $1 - n^{-d}$ over the random draw of $\bm p \sim \dist_n$, 
\[ \Pr[X_{(1)} = 0 \mid \bm p] \le \left( \frac{\underline \beta d \log n}{\underline C n} \right)^{1/{\underline \beta}}. \]
\end{lemma}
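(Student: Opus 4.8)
The plan is to observe that $\Pr[X_{(1)} = 0 \mid \bm p] = 1 - p_{(1)}$, where $p_{(1)} = \max_i p_i$ is the maximum of $n$ i.i.d.\ draws from $\dist_n$. Thus the claim is equivalent to a lower bound on the top order statistic: setting $t_n = \left(\frac{\underline\beta d \log n}{\underline C n}\right)^{1/\underline\beta}$, it suffices to show that $p_{(1)} \ge 1 - t_n$ with probability at least $1 - n^{-d}$, since on that event $1 - p_{(1)} \le t_n$, which is exactly the bound claimed.

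First I would note that $t_n \to 0$ as $n\to\infty$, so for all sufficiently large $n$ we have $t_n \le \underline\delta$; consequently the density lower bound $f_n(x) \ge \underline C (1-x)^{\underline\beta - 1}$ is valid on the entire interval $[1 - t_n, 1] \subseteq [1-\underline\delta, 1]$. I would then lower-bound the upper-tail mass:
\[
1 - F_n(1 - t_n) = \int_{1-t_n}^1 f_n(x)\,\dd x \ge \int_{1-t_n}^1 \underline C (1-x)^{\underline\beta - 1}\,\dd x = \frac{\underline C}{\underline\beta}\, t_n^{\underline\beta},
\]
where the last equality comes from the substitution $u = 1-x$. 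By the choice of $t_n$, $\frac{\underline C}{\underline\beta}\, t_n^{\underline\beta} = \frac{\underline C}{\underline\beta}\cdot\frac{\underline\beta d \log n}{\underline C n} = \frac{d\log n}{n}$, so $1 - F_n(1-t_n) \ge \frac{d\log n}{n}$.

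Next I would use independence of the draws: $\Pr[p_{(1)} < 1 - t_n] = \prod_{i=1}^n \Pr[p_i < 1 - t_n] = \bigl(F_n(1-t_n)\bigr)^n = \bigl(1 - (1 - F_n(1-t_n))\bigr)^n \le \left(1 - \frac{d\log n}{n}\right)^n$. Since $1 - \frac{d\log n}{n} \in (0,1)$ for large $n$, the elementary inequality $1 - x \le e^{-x}$ gives $\left(1 - \frac{d\log n}{n}\right)^n \le e^{-d\log n} = n^{-d}$, which is the desired failure probability; on the complementary event, $\Pr[X_{(1)}=0\mid\bm p] = 1 - p_{(1)} \le t_n$.

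I do not expect a real obstacle here. The only points requiring care are (i) making the "sufficiently large $n$" quantifier explicit — it is precisely the condition $t_n \le \underline\delta$, i.e.\ $n \gtrsim \underline\beta d \log n/(\underline C \underline\delta^{\underline\beta})$, which holds eventually because $\log n = o(n)$ — and (ii) invoking the assumed density lower bound only on the subinterval $[1-t_n,1]$ where it is guaranteed to hold, rather than on all of $[0,1]$.
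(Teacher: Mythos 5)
Your proof is correct and follows essentially the same route as the paper: reduce to a tail bound on the top order statistic $p_{(1)}$, use the density lower bound near $1$ to show the upper-tail mass above $1-t_n$ is at least $\frac{d\log n}{n}$, and conclude via $F_n(1-t_n)^n \le (1-\tfrac{d\log n}{n})^n \le n^{-d}$. The only (cosmetic) difference is that you plug in the target threshold $t_n$ directly rather than working through the quantile $1-F_n^{-1}(1-\tfrac{d\log n}{n})$ and inverting as the paper does, which slightly streamlines the argument without changing its substance.
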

\begin{proof}
We note that $\Pr[X_{(1)} = 0 \mid \bm p] = 1 - p_{(1)}$, so for any $x \in [0, 1]$, 
\begin{align*}
    \Pr[X_{(1)} = 0 \mid \bm p] \le x] = \Pr[1 - p_{(1)} \le x]  = \Pr[ p_{(1)} \ge 1 - x]
    & = 1 - \Pr[ p_{(1)} < 1 - x] \\
    & = 1 - \Pr[ \max_{1\le i\le n} p_i < 1 - x] \\
    & = 1 - F_n(1-x)^n. 
\end{align*}
We let $x$ be such that $F_n(1 - x) = 1 - \frac{d\log n}{n}$, i.e., $x = 1 - F^{-1}_n(1 - \frac{d\log n}{n})$, then $F_n(1-x)^n = (1 - \frac{d\log n}{n})^n \le e^{-d\log n} = n^{-d}$. So, with probability at least $1 - F_n(1-x)^n \ge 1 - n^{-d}$, we have
\[ \Pr[X_{(1)} = 0 \mid \bm p] \le x = 1 - F^{-1}_n\left( 1 - \frac{d\log n}{n}\right). \]
We then show that $1 - F^{-1}_n\left( 1 - \frac{d\log n}{n}\right) \le \left( \frac{\underline \beta d \log n}{\underline C n} \right)^{1/{\underline \beta}}$.  Define $G(t) = 1 - F_n(1-t)$ for $t\in[0, 1]$.  This implies
\[1 - F^{-1}_n(1-y) = G^{-1}(y)\]
for any $y \in [0, 1]$. 
We note that for $t$ sufficiently close to $1$, $f_n(x) \ge \underline C(1-x)^{\underline \beta-1}$ for any $x\in[1-t, 1]$, implying
\begin{align*}
    G(t) = 1 - F_n(1-t) = \int_{1-t}^1 f_n(x) \dd x \ge  \int_{1-t}^1 \underline C(1-x)^{\underline \beta-1} \dd x = \int_{0}^t \underline Cu^{\underline \beta-1} \dd u = \frac{\underline C}{\underline \beta} t^{\underline \beta}. 
\end{align*}
Let $\underline{G}(t) = \frac{\underline C}{\underline \beta} t^{\underline \beta}$.  We have $G(t) \ge \underline{G}(t)$ and $\underline{G}^{-1}(y) = (\frac{\underline \beta}{\underline C} y)^{1/{\underline \beta}}$.  Since $G(t) \ge \underline{G}(t)$ and $\underline{G}^{-1}(y)$ is increasing in $y$, we have
\[ G(t) \ge \underline{G}(t)  \implies \underline{G}^{-1}(G(t)) \ge t \implies \underline{G}^{-1}(y) \ge G^{-1}(y).   \]
Therefore,
\[ 1 - F^{-1}_n(1-y) = G^{-1}(y) \le \underline{G}^{-1}(y) = (\frac{\underline \beta}{\underline C} y)^{1/{\underline \beta}}. \]
Letting $y = \frac{d\log n}{n}$, we conclude that 
\[ \Pr[X_{(1)} = 0 \mid \bm p] \le 1 - F^{-1}\left( 1 - \frac{d\log n}{n}\right) \le \left(\frac{\underline \beta d\log n}{\underline Cn} \right)^{1/{\underline \beta}}. \qedhere \] 
\end{proof}

\begin{lemma}\label{lem:Pr-1-lower-bound}
Suppose the PDF of $\dist_n$ satisfies $f_n(x) \le \overline C$ for $x\in [1-\overline \delta, 1]$ for some constants $\overline C, \overline \delta > 0$.  Then, for sufficiently large $n$, with probability at least $1 - n^{-d}$ over the random draw of $\bm p \sim \dist_n$, 
\[ \Pr[X_{(1)} = 0 \mid \bm p] \ge \frac{1}{\overline C n^{d+1}}. \]
\end{lemma}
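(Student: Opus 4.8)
\textbf{Proof proposal for \Cref{lem:Pr-1-lower-bound}.} The plan is to mirror the argument used for \Cref{lem:Pr-1-upper-bound}, now working with the \emph{upper} bound on the density $f_n$ near $1$ rather than the lower bound. As before, the key identity is $\Pr[X_{(1)} = 0 \mid \bm p] = 1 - p_{(1)}$, where $p_{(1)} = \max_{1\le i \le n} p_i$ is the maximum of $n$ i.i.d.\ draws from $\dist_n$, so that for any $y \in [0,1]$ we have $\Pr[p_{(1)} \le y] = F_n(y)^n$ and hence $\Pr[1 - p_{(1)} \ge x] = \Pr[p_{(1)} \le 1-x] = F_n(1-x)^n$.

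First I would fix the target value $x = \frac{1}{\overline C n^{d+1}}$. For $n$ sufficiently large we have $x \le \overline\delta$, so the hypothesis $f_n \le \overline C$ applies throughout $[1-x, 1]$, giving
\[
1 - F_n(1-x) = \int_{1-x}^{1} f_n(t)\,\dd t \le \overline C x = \frac{1}{n^{d+1}},
\]
i.e.\ $F_n(1-x) \ge 1 - n^{-(d+1)}$. Next I would raise this to the $n$-th power and apply Bernoulli's inequality $(1-y)^n \ge 1 - ny$ valid for $y \in [0,1]$:
\[
\Pr[1 - p_{(1)} \ge x] = F_n(1-x)^n \ge \left(1 - \frac{1}{n^{d+1}}\right)^n \ge 1 - \frac{n}{n^{d+1}} = 1 - n^{-d}.
\]
Therefore, with probability at least $1 - n^{-d}$ over the draw of $\bm p$, we have $1 - p_{(1)} \ge x$, which is exactly $\Pr[X_{(1)} = 0 \mid \bm p] \ge \frac{1}{\overline C n^{d+1}}$, completing the proof.

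I do not expect any real obstacle here: the argument is elementary and entirely symmetric to \Cref{lem:Pr-1-upper-bound}, the only subtlety being the invocation of ``sufficiently large $n$'' to ensure $x \le \overline\delta$ so that the density bound is applicable on the relevant interval. One minor point to handle cleanly is that the bound $f_n \le \overline C$ is only assumed on $[1-\overline\delta, 1]$, so the integral bound above must be taken over $[1-x,1] \subseteq [1-\overline\delta,1]$, which is why the large-$n$ condition is invoked; the Bernoulli-inequality step is the other routine ingredient and requires nothing beyond $n^{-(d+1)} \le 1$.
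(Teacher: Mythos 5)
Your proof is correct and follows essentially the same route as the paper's: express the event via $F_n(1-x)^n$ with $x = \frac{1}{\overline C n^{d+1}}$, use the density upper bound on $[1-x,1]$ (valid for large $n$), and apply Bernoulli's inequality $(1-n^{-(d+1)})^n \ge 1-n^{-d}$, which is the same elementary inequality the paper invokes. No gaps; your handling of the "sufficiently large $n$" condition is in fact stated more cleanly than in the paper, which has a small typo at that step.
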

\begin{proof}
We note that $\Pr[X_{(1)} = 0 \mid \bm p] = 1 - p_{(1)}$, so for any $x \in [0, 1]$, 
\begin{align*}
    \Pr[\Pr[X_{(1)} = 0 \mid \bm p] \ge x] = \Pr[1 - p_{(1)} \ge x]  = \Pr[ p_{(1)} \le 1 - x]
    & = \Pr[ \max_{1\le i\le n} p_i < 1 - x] \\
    & = F_n(1-x)^n. 
\end{align*}
We let $x = \frac{1}{\overline C n^{d+1}}$.  Then for sufficiently large $n$, $x \ge 1-\overline \delta$, and hence $f_n(t) \le \overline C$ for $t\in[1-x, 1]$, which implies 
\[ 1 - F_n(1-x) = \int_{1-x}^1 f_n(t)\dd t \le \int_{1-x}^1 \overline C\dd t = x\overline C = \frac{1}{n^{d+1}}, \]
or equivalently
\[ F_n(1-x) \ge 1 - \frac{1}{n^{d+1}}. \]
Using inequality $(1-\frac{x}{n})^n \ge 1-x$ (for $n\ge 1, 0\le x\le n$), we get
\[ F_n(1-x)^n \ge \left(1 - \frac{1}{n^{d+1}}\right)^n \ge 1 - \frac{1}{n^d}. \]
Therefore, with probability at least $1 - \frac{1}{n^d}$, $\Pr[X_{(1)} = 0\mid \bm p] \ge x = \frac{1}{\overline C n^{d+1}}$ holds. 
\end{proof}

To prove $\Gamma^{\bm p}_n(1) > 0$, we use \Cref{lem:Pr-n-lower-bound} and \Cref{lem:Pr-1-upper-bound} to get
\begin{align*}
\Gamma^{\bm p}_n(1) & = \Pr\left[\sum_{i=1}^n X_{(i)} \le \frac{n}{2} ~\bigg|~ \bm p\right] - \Pr\left[X_{(1)} = 0 ~\bigg|~ \bm p\right] \\
& \ge \frac{1}{\sqrt{2\pi}} \frac{c \sqrt{\log n}}{(c^2\log n + 1)} \frac{1}{n^{c^2/2}} - \frac{C_1}{\sqrt{\E_{p\sim \dist_n}[p(1-p)] - \eps}}\frac{1}{\sqrt{n}} - \left( \frac{\underline \beta d \log n}{\underline C n} \right)^{1/{\underline \beta}}
\end{align*}
with probability at least $1 - 2n^{-2b^2} - n^{-d}$, where $c = \frac{a + b}{\sqrt{\E_{p\sim \dist_n}[p(1-p)] - \eps}}$, $\E_{p\sim \dist_n}[p] \le \frac{1}{2} + \eps_n$ with $\eps_n = a \sqrt{\frac{\log n}{n}}$ for some $a>0$, and $\eps = b \sqrt{\frac{\log n}{n}}$ for some $b>0$, and $\underline C$ and $\underline \beta$ are constants.  If $c^2/2$ is a constant such that 
\[ c^2/2 < \min\left\{ 1/2, 1/{\underline{\beta}} \right\}, \]
then $\Gamma^{\bm p}_n(1) =  O(\frac{1}{n^{c^2/2}}) > 0$ for sufficiently large $n$.  Requiring $c^2/2 < \min\left\{ 1/2, 1/{\underline{\beta}} \right\}$ is equivalent to requiring
\[ a + b < \sqrt{ ( \E_{p\sim \dist_n}[p(1-p)] - \eps) \cdot \min\{1, 2/{\underline{\beta}} \} }, \]
which can be satisfied when $a$ and $b$ are constants such that $a < \sqrt{ \E_{p\sim \dist_n}[p(1-p)] \cdot \min\{1, 2/{\underline{\beta}} \} }$, $0 < b < \sqrt{ \E_{p\sim \dist_n}[p(1-p)] \cdot \min\{1, 2/{\underline{\beta}} \} } -a$, and $n$ is sufficiently large (so $\eps=b\sqrt{\frac{\log n} n}$ is sufficiently small). 

To prove $\Gamma^{\bm p}_n(1) < 0$, we use \Cref{lem:Pr-n-upper-bound} and \Cref{lem:Pr-1-lower-bound} to get
\begin{align*}
\Gamma^{\bm p}_n(1) & = \Pr\left[\sum_{i=1}^n X_{(i)} \le \frac{n}{2} ~\bigg|~ \bm p\right] - \Pr\left[X_{(1)} = 0 ~\bigg|~ \bm p\right] \\
& \le \frac{1}{n^{2(a-b)^2}} - \frac{1}{\overline C n^{d+1}}
\end{align*}
with probability at least $1 - n^{-2b^2} - n^{-d}$, where $\E_{p\sim \dist_n}[p] \ge \frac{1}{2} + \eps_n$ with $\eps_n = a \sqrt{\frac{\log n}{n}}$ for some constant $a>0$, with any $b < a$, and $\overline C$ is a constant.  When
\[ 2(a-b)^2 > d+1, \]
we have $\Gamma^{\bm p}_n(1) = - O(\frac{1}{n^{d+1}}) < 0$ for sufficiently large $n$.  The inequality $2(a-b)^2 > d+1$ is satisfied when $a > \frac{1}{\sqrt 2}$ and $b, d$ are sufficiently close to $0$.  
\end{proof}

\subsection{Proof of \Cref{thm:general-k-comparison}}
\label{app:proof-general-k}
Similar to the proof of \Cref{thm:dictatorship} (in \Cref{app:proof-dictatorship-new}), we write $\Gamma^{\bm p}_n(k)$ as 
\[ \Gamma^{\bm p}_n(k) = \Pr\left[\sum_{i=1}^n X_{(i)} \le \frac{n}{2} ~\bigg|~ \bm p\right] - \Pr\left[\sum_{i=1}^k X_{(i)} \le \frac{k}{2} ~\bigg|~ \bm p\right]. \]
To show either $\Gamma^{\bm p}_n(k) > 0$ or $\Gamma^{\bm p}_n(k) < 0$, we will compare $\Pr\left[\sum_{i=1}^n X_{(i)} \le \frac{n}{2} ~|~ \bm p\right]$ with $\Pr\left[\sum_{i=1}^k X_{(i)} \le \frac{k}{2} ~|~ \bm p\right]$. 

\begin{lemma}\label{lem:Pr-k-upper-bound}
Suppose $1 - F_n(\frac{1}{2} + \alpha \sqrt{\frac{\log k}{k}}) \ge \frac{k}{n} + \eps$ where $\eps = b\sqrt{\frac{\log n}{n}}$ for some constants $\alpha, b > 0$.  Then, with probability at least $1 - 2n^{-2b^2}$ (over the random draw of $\bm p\sim \dist_n$),
\[ \Pr\left[\sum_{i=1}^k X_{(i)} \le \frac{k}{2} ~\bigg|~ \bm p\right] \le \frac{1}{k^{2\alpha^2}}. \]
\end{lemma}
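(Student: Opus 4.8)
The plan is to first show that, with high probability over the draw of $\bm p$, all of the top $k$ experts have competence above the threshold $\tau := \frac{1}{2} + \alpha\sqrt{\frac{\log k}{k}}$, and then apply Hoeffding's inequality (\Cref{lem:hoeffding}) conditionally on such a $\bm p$. The key point is that the hypothesis $1 - F_n(\tau) \ge \frac{k}{n} + \eps$ says exactly that a single draw from $\dist_n$ lands above $\tau$ with probability at least $\frac{k}{n} + \eps$, which controls the \emph{number} of high-competence draws among the $n$ samples.

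Concretely, I would let $N = \sum_{i=1}^n \mathbf 1[p_i > \tau]$, a sum of i.i.d.\ Bernoulli variables with $\E[N] = n(1 - F_n(\tau)) \ge k + n\eps$. Applying the lower-tail form of Hoeffding's inequality (\Cref{lem:hoeffding}) with $t = n\eps$, and using $\eps = b\sqrt{\frac{\log n}{n}}$ so that $n\eps^2 = b^2\log n$, gives $\Pr[N \le \E[N] - n\eps] \le \exp(-2n\eps^2) = n^{-2b^2}$. Hence with probability at least $1 - n^{-2b^2} \ge 1 - 2n^{-2b^2}$ we have $N \ge \E[N] - n\eps \ge k$, i.e.\ at least $k$ of the sampled competencies exceed $\tau$, so the $k$ largest order statistics satisfy $p_{(i)} \ge p_{(k)} > \tau$ for every $i \le k$.

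Then I would condition on any such ``good'' $\bm p$. Given $\bm p$, the votes $X_{(1)},\dots,X_{(k)}$ are independent with $\E\big[\sum_{i=1}^k X_{(i)} \mid \bm p\big] = \sum_{i=1}^k p_{(i)} > k\tau = \frac{k}{2} + \alpha\sqrt{k\log k}$. Applying the lower-tail Hoeffding bound again, now with $t = \sum_{i=1}^k p_{(i)} - \frac{k}{2} > \alpha\sqrt{k\log k}$, yields
\[ \Pr\left[\sum_{i=1}^k X_{(i)} \le \frac{k}{2} ~\big|~ \bm p\right] \le \exp\left(-\frac{2t^2}{k}\right) < \exp\left(-2\alpha^2\log k\right) = \frac{1}{k^{2\alpha^2}}. \]
Since this holds for every $\bm p$ in the good event, which has probability at least $1 - 2n^{-2b^2}$, the lemma follows.

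There is essentially no hard step here: the only thing to get right is that ``$N \ge k$'' is the correct event to track and that it follows from a single one-sided Hoeffding estimate on a sum of indicators (rather than needing a two-sided empirical-CDF deviation bound like DKW). The remaining work is two routine applications of Hoeffding's inequality, and in fact the argument delivers the slightly stronger bound $1 - n^{-2b^2}$, the stated $1 - 2n^{-2b^2}$ being loose.
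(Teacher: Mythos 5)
Your proof is correct, and it follows the same two-step skeleton as the paper's: first establish the high-probability event that all of the top $k$ competencies exceed the threshold $\tau = \frac{1}{2} + \alpha\sqrt{\frac{\log k}{k}}$, then apply Hoeffding's inequality conditionally on $\bm p$ to get the $k^{-2\alpha^2}$ bound (that second step is identical to the paper's). Where you differ is in how the first step is certified: the paper invokes the two-sided DKW inequality (\Cref{lem:DKW1}) to get $|F_n(p_{(i)}) - \frac{n-i}{n}| \le \eps$ uniformly, then reads off $1 - F_n(p_{(i)}) \le \frac{k}{n} + \eps \le 1 - F_n(\tau)$ for $i \le k$, whereas you track the single count $N = \sum_{i=1}^n \mathbf{1}[p_i > \tau]$ and apply a one-sided Hoeffding bound to conclude $N \ge k$. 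Your route is marginally more elementary (no uniform empirical-CDF bound is needed for this lemma, only deviation at the one point $\tau$) and gives the slightly sharper failure probability $n^{-2b^2}$ instead of $2n^{-2b^2}$; the paper's choice of DKW is mainly a matter of reusing machinery already deployed in \Cref{thm:GeneralizationBoundL} and elsewhere, and the constant-factor difference in the probability is immaterial to how the lemma is used in \Cref{thm:general-k-comparison}.
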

\begin{proof}
By DKW inequality (Lemma 2.5), with probability at least $1 - 2e^{-2n\eps^2} = 1 - 2n^{-2b^2}$ over the random draw of $\bm p\sim \dist_n$, it holds that $| F_n(p_{(i)}) - \frac{n-i}{n} | \le \eps$ for every $i\in [n]$.  In particular, for $i=1, \ldots, k$, we have
\[ F_n(p_{(i)}) \ge  \frac{n-i}{n} - \eps \ge \frac{n-k}{n} -\eps = 1 - \frac{k}{n} - \eps, \]
This implies 
\[ 1 - F_n(p_{(i)}) \le  \frac{k}{n} + \eps \le 1 - F_n(\frac{1}{2} + \alpha \sqrt{\frac{\log k}{k}})\]
and hence 
\[ p_{(i)} \ge \frac{1}{2} + \alpha \sqrt{\frac{\log k}{k}}. \]
Assuming the above inequalities hold, we consider the conditional probability $\Pr\left[\sum_{i=1}^k X_{(i)} \le \frac{k}{2} ~|~ \bm p\right]$.  Given $\bm p$, the $X_{(i)}$'s are independent draws from $\text{Bern}(p_{(i)})$ distributions, with means $\E[X_{(i)}] = p_{(i)}$, hence, by Hoeffding's inequality (\Cref{lem:hoeffding}), 
\begin{align*}
    \Pr\left[\sum_{i=1}^k X_{(i)} \le \frac{k}{2} ~\bigg|~ \bm p\right] \le \exp\left(-\frac{2(\sum_{i=1}^k p_{(i)} - \frac{k}{2})^2}{k} \right).
\end{align*}
Plugging in $p_{(i)} \ge \frac{1}{2} + \alpha \sqrt{\frac{\log k}{k}}$, we get
\begin{align*}
    \Pr\left[\sum_{i=1}^k X_{(i)} \le \frac{k}{2} ~\bigg|~ \bm p\right] \le \exp\left(-\frac{2( \frac{k}{2} + \alpha \sqrt{k \log k} - \frac{k}{2})^2}{k} \right) = \frac{1}{k^{2\alpha^2}}. 
\end{align*}
\qedhere
\end{proof}

\begin{proof}[Proof of the first item of \Cref{thm:general-k-comparison}]
By Lemma~\ref{lem:Pr-n-lower-bound} and Lemma~\ref{lem:Pr-k-upper-bound}, we have
\begin{align*}
\Gamma^{\bm p}_n(k) & = \Pr\left[\sum_{i=1}^n X_{(i)} \le \frac{n}{2} ~\bigg|~ \bm p\right] - \Pr\left[\sum_{i=1}^k X_{(i)} \le \frac{k}{2} ~\bigg|~ \bm p\right] \\
& \ge \frac{1}{\sqrt{2\pi}} \frac{c \sqrt{\log n}}{(c^2\log n + 1)} \frac{1}{n^{c^2/2}} - \frac{C_1}{\sqrt{\E_{p\sim \dist_n}[p(1-p)] - \eps}}\frac{1}{\sqrt{n}} - \frac{1}{k^{2\alpha^2}}
\end{align*}
with probability at least $1 - 4n^{-2b^2}$, where $c = \frac{a + b}{\sqrt{\E_{p\sim \dist_n}[p(1-p)] - \eps}}$, $\E_{p\sim \dist_n}[p] \le \frac{1}{2} + \eps_n$ with $\eps_n = a \sqrt{\frac{\log n}{n}}$ for some $a>0$, and $\eps = b \sqrt{\frac{\log n}{n}}$ for some $b>0$, and $\alpha$ is a constant.  Since $k=n^r$,
\begin{align*}
\Gamma^{\bm p}_n(k) & \ge \frac{1}{\sqrt{2\pi}} \frac{c \sqrt{\log n}}{(c^2\log n + 1)} \frac{1}{n^{c^2/2}} - \frac{C_1}{\sqrt{\E_{p\sim \dist_n}[p(1-p)] - \eps}}\frac{1}{\sqrt{n}} - \frac{1}{n^{2r\alpha^2}}
\end{align*}
When $c^2/2 < 1/2$ and $c^2/2 < 2r\alpha^2$, we have $\Gamma^{\bm p}_n(k) = O(\frac{1}{n^{c^2/2}}) > 0$ for sufficiently large $n$.  The latter requirement $c^2/2 < 2r\alpha^2$ is satisfied when $\alpha > \frac{c}{2\sqrt r}$. The former requirement $c^2/2 < 1/2$ is equivalent to $a + b < \sqrt{ \E_{p\sim \dist_n}[p(1-p)] -\eps\} }$, which is satisfied when constants $a < \sqrt{ \E_{p\sim \dist_n}[p(1-p)]\} }$, $0 < b < \sqrt{ \E_{p\sim \dist_n}[p(1-p)]\} } - a$, and $n$ is sufficiently large. 
\end{proof}

\begin{lemma}\label{lem:Pr-k-lowerbound}
Suppose $1-F_n(\frac{1}{2}+\alpha\sqrt{\frac{\log k} k }) \le \frac{1}{n^{1+\Omega(1)}}$ for some constant $\alpha>0$, and suppose $\E_{p\sim \dist_n}[p] \ge \frac{1}{2} + \eps_n$ with $\eps_n = a\sqrt{\frac{\log n} n}$ for some constant $a > 0$.  Then, with probability at least $1-\frac{1}{n^{\Omega(1)}}$ (over the random draw of $\bm p\sim \dist_n$), 
\[ \Pr\left[\sum_{i=1}^k X_{(i)} \le \frac{k}{2} ~\bigg|~ \bm p\right] \ge \frac{1}{\sqrt{2\pi}}\cdot  \frac{1-o(1)}{2\alpha \sqrt{\log k}} \cdot \frac{1}{k^{\frac{2\alpha^2}{1-o(1)}}} - \frac{2C_1}{(1-o(1))\sqrt k},   \]
where $C_1$ is the constant in Berry-Esseen theorem (\Cref{lem:BE}). 
\end{lemma}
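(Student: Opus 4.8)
The plan is to replay the argument of \Cref{lem:Pr-n-lower-bound} with the top $k$ experts in place of the whole population, using the two hypotheses to keep $p_{(1)}, \ldots, p_{(k)}$ pinned close to $1/2$ from above. First I would control $\sum_{i=1}^k p_{(i)}$ from both sides. For the upper bound, the hypothesis $1 - F_n\big(\tfrac12 + \alpha\sqrt{\log k/k}\big) \le n^{-1-\Omega(1)}$ means a single draw exceeds $\tfrac12 + \alpha\sqrt{\log k/k}$ with probability at most $n^{-1-\Omega(1)}$; a union bound over the $n$ i.i.d.\ draws shows that with probability at least $1 - n^{-\Omega(1)}$ \emph{every} $p_i$, hence $p_{(1)} \ge \cdots \ge p_{(k)}$, is at most $\tfrac12 + \alpha\sqrt{\log k/k}$, so $\sum_{i=1}^k p_{(i)} \le \tfrac k2 + \alpha\sqrt{k\log k}$. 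For the lower bound, since the $k$ largest order statistics average at least the overall average, $\sum_{i=1}^k p_{(i)} \ge \tfrac kn \sum_{i=1}^n p_i$, and Hoeffding's inequality (\Cref{lem:hoeffding}) applied to $\sum_{i=1}^n p_i$ with $\eps = b\sqrt{\log n/n}$ for a small constant $0 < b < a$ gives $\sum_{i=1}^n p_i \ge n(\E_{\dist_n}[p] - \eps) \ge \tfrac n2$ with probability at least $1 - n^{-2b^2}$, whence $\sum_{i=1}^k p_{(i)} \ge \tfrac k2$.

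On this event of probability $1 - n^{-\Omega(1)}$, I would next lower bound the total variance $\sum_{i=1}^k \sigma_i^2$ with $\sigma_i^2 = p_{(i)}(1-p_{(i)})$. Writing $\sum_i \sigma_i^2 = \sum_i p_{(i)} - \sum_i p_{(i)}^2 \ge \big(\sum_i p_{(i)}\big)\big(1 - \max_{i\le k} p_{(i)}\big) \ge \tfrac k2\big(\tfrac12 - \alpha\sqrt{\log k/k}\big) = \tfrac k4(1-o(1))$. Exactly as in \Cref{lem:Pr-n-lower-bound}, one also has $\rho_i := \E\big[|X_{(i)}-p_{(i)}|^3 \,\big|\, \bm p\big] \le \sigma_i^2$, so the Berry--Esseen error term is $\Delta_1 := \tfrac{C_1}{\sqrt{\sum_i \sigma_i^2}} \le \tfrac{2C_1}{(1-o(1))\sqrt k}$.

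I would then conclude via Berry--Esseen (\Cref{lem:BE}) applied, conditionally on $\bm p$, to $Y_i = X_{(i)} - p_{(i)}$ for $i = 1,\dots,k$:
\[ \Pr\Big[\sum_{i=1}^k X_{(i)} \le \tfrac k2 ~\Big|~ \bm p\Big] = \Pr\Big[\tfrac{\sum_i Y_i}{\sqrt{\sum_i \sigma_i^2}} \le \tfrac{k/2 - \sum_i p_{(i)}}{\sqrt{\sum_i \sigma_i^2}}\Big] \ge \Phi\Big(\tfrac{k/2 - \sum_i p_{(i)}}{\sqrt{\sum_i \sigma_i^2}}\Big) - \Delta_1. \]
By the bounds of the first two paragraphs the argument of $\Phi$ is at least $-\tfrac{\alpha\sqrt{k\log k}}{\sqrt{(k/4)(1-o(1))}} = -x$ with $x := \tfrac{2\alpha\sqrt{\log k}}{\sqrt{1-o(1)}}$, so monotonicity of $\Phi$ gives $\Phi(\cdot) \ge \Phi(-x)$, and \Cref{lem:normal-CDF} gives $\Phi(-x) \ge \tfrac{1}{\sqrt{2\pi}}\tfrac{x}{x^2+1}e^{-x^2/2}$. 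Substituting $x$ yields $e^{-x^2/2} = k^{-2\alpha^2/(1-o(1))}$ and $\tfrac{x}{x^2+1} = \tfrac{1-o(1)}{2\alpha\sqrt{\log k}}$ (since $x\to\infty$), which assembled with the $\Delta_1$ term is precisely the claimed inequality, holding on the $1-n^{-\Omega(1)}$ event.

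The delicate step is the variance bound: a priori the top $k$ order statistics could be pushed toward $1$, which would make $\sum_i \sigma_i^2$ much smaller than $k/4$ and destroy the stated bound. The hypothesis $1 - F_n(\tfrac12 + \alpha\sqrt{\log k/k}) \le n^{-1-\Omega(1)}$ is exactly what is needed --- and crucially it is applied through a union bound over all $n$ draws, not just the $k$ chosen ones --- to impose the ceiling $\max_{i\le k} p_{(i)} \le \tfrac12 + o(1)$; pairing this ceiling with the Hoeffding lower bound on $\sum_i p_{(i)}$ is what keeps $\sum_i \sigma_i^2$ at $\tfrac k4(1-o(1))$ and makes the $(1-o(1))$ exponent and prefactor in the statement line up.
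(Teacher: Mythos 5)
Your proposal is correct and follows essentially the same route as the paper's proof: conditional Berry--Esseen, a union bound over all $n$ draws to cap every $p_i$ at $\tfrac12+\alpha\sqrt{\log k/k}$, Hoeffding plus the order-statistic averaging argument to lower bound $\sum_{i\le k}p_{(i)}$, and the Gaussian-tail lower bound of \Cref{lem:normal-CDF}. The only (harmless) deviation is your factored variance bound $\sum_i\sigma_i^2\ge\bigl(\sum_i p_{(i)}\bigr)\bigl(1-\max_i p_{(i)}\bigr)$ in place of the paper's separate estimates of $\sum_i p_{(i)}$ and $\sum_i p_{(i)}^2$, which yields the same $\tfrac k4(1-o(1))$.
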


\begin{proof}
Given $p_{(1)}, \ldots, p_{(k)}$, each $X_{(i)}$ independently follows $\text{Bern}(p_{(i)})$.  We use Berry-Esseen theorem (\Cref{lem:BE}) for $Y_i = X_{(i)} - p_{(i)}$, $i=1, \ldots, k$.  Noticing that $\E[Y_i] = p_{(i)}$, $\sigma_i^2 = \E[Y_i^2] = p_{(i)}(1-p_{(i)})$, and $\rho_i = \E[|Y_i|^3] = p_{(i)}(1-p_{(i)})[(1 - p_{(i)})^2 + p_{(i)}^2] \le \sigma_i^2$, the theorem implies 
\[ \left| \Pr\left[ \frac{\sum_{i=1}^k Y_i}{\sum_{i=1}^k \sigma_i^2} \le x \right] - \Phi(x) \right| \le \frac{C_1}{\sqrt{\sum_{i=1}^k \sigma_i^2}} \max_{1\le i \le k}\frac{\rho_i}{\sigma_i^2} \le \frac{C_1}{\sqrt{\sum_{i=1}^k \sigma_i^2}}  \]
for any $x\in \reals$, where $\Phi(x)$ is CDF of the standard normal distribution.  
Therefore, 
\begin{align*}
    \Pr\left[\sum_{i=1}^k X_{(i)} \le \frac{k}{2} ~\bigg|~ \bm p\right] & = \Pr\left[ \sum_{i=1}^k X_{(i)} - \sum_{i=1}^k p_{(i)} \le \frac{k}{2} - \sum_{i=1}^k p_{(i)} ~\bigg|~ \bm p\right] \nonumber \\
    & = \Pr\left[ \frac{\sum_{i=1}^k Y_i}{\sqrt{\sum_{i=1}^k \sigma_i^2}} \le \frac{\frac{k}{2} - \sum_{i=1}^k p_{(i)}}{\sqrt{\sum_{i=1}^k \sigma_i^2}} \right] \nonumber \\
    & \ge \Phi\left(\frac{\frac{k}{2} - \sum_{i=1}^k p_{(i)}}{\sqrt{\sum_{i=1}^k \sigma_i^2}}\right) - \frac{C_1}{\sqrt{\sum_{i=1}^k \sigma_i^2}}.   \label{eq:probability-p-sigma-phi}
\end{align*}

We consider $\sum_{i=1}^k p_{(i)}$. 
By the assumption that $1 - F_n(\frac{1}{2} + \alpha\sqrt{\frac{\log k}k}) = \Pr_{p_i\sim \dist_n}[p_i > \frac{1}{2} + \alpha\sqrt{\frac{\log k}k}] = \frac{1}{n^{1+\Omega(1)}}$, using a union bound we have with probability at least $1 - n \frac{1}{n^{1+\Omega(1)}} = 1- \frac{1}{n^{\Omega(1)}}$, all $p_i$'s (for $i=1, \ldots, n$) satisfy $p_i \le \frac{1}{2} + \alpha\sqrt{\frac{\log k} k}$.  Hence, 
\[ \sum_{i=1}^k p_{(i)} \le k(\frac{1}{2} + \alpha\sqrt{\frac{\log k} k}) = \frac{k}{2} + \alpha \sqrt{k\log k},  \]
which implies 
\begin{align}
    \Pr\left[\sum_{i=1}^k X_{(i)} \le \frac{k}{2} ~\bigg|~ \bm p\right] & \ge \Phi\left(\frac{\frac{k}{2} - (\frac{k}{2} + \alpha \sqrt{k\log k})}{\sqrt{\sum_{i=1}^k \sigma_i^2}}\right) - \frac{C_1}{\sqrt{\sum_{i=1}^k \sigma_i^2}} \nonumber \\
    & = \Phi\left(\frac{-\alpha \sqrt{k\log k}}{\sqrt{\sum_{i=1}^k \sigma_i^2}}\right) - \frac{C_1}{\sqrt{\sum_{i=1}^k \sigma_i^2}} \label{eq:Pr-sigma}
\end{align}

We then consider $\sum_{i=1}^k \sigma_i^2 = \sum_{i=1}^k p_{(i)}(1-p_{(i)}) = \sum_{i=1}^k p_{(i)} - \sum_{i=1}^k p_{(i)}^2$. 
We note that the $p_i$'s (for $i=1, \ldots, n$) are $n$ i.i.d.~random draws from distribution $\dist_n$ whose mean is $\E_{p\sim\dist_n}[p] \ge \frac{1}{2} + \eps_n$, by Hoeffding's inequality, their average satisfies
\[ \frac{1}{n} \sum_{i=1}^n p_i \ge \E_{p\sim \dist_n}[p] - \eps \ge \frac{1}{2} + \eps_n - \eps, \]
with probability at least $1 - \exp(-2n\eps^2)$.  We choose $\eps = O(\sqrt{\frac{\log n} n})$ so the probability is $1 - \frac{1}{n^{\Omega(1)}}$.   
We also note that $\frac{1}{n} \sum_{i=1}^n p_i \le \frac{1}{k} \sum_{i=1}^k p_{(i)}$ because $p_{(1)}, \ldots, p_{(k)}$ are the $k$ largest values in $p_1, \ldots, p_n$.  Therefore,  
\[ \sum_{i=1}^k p_{(i)} \ge \frac{k}{n} \sum_{i=1}^n p_i  \ge k(\frac{1}{2} + \eps_n - \eps). \]
Moreover, since previously we had $p_i \le \frac{1}{2} + \alpha \sqrt{\frac{\log n} n}$ for all $i=1, \ldots, n$, it holds that 
\[ \sum_{i=1}^k p_{(i)}^2 \le k \left(\frac{1}{2} + \alpha \sqrt{\frac{\log n} n}\right)^2 = k(\frac{1}{4} + o(1)). \]
Therefore, 
\[\sum_{i=1}^k \sigma_i^2  = \sum_{i=1}^k p_{(i)} - \sum_{i=1}^k p_{(i)}^2 \ge k(\frac{1}{2} + \eps_n - \eps) - k(\frac{1}{4} + o(1)) = k(\frac{1}{4} - o(1)).\]

Plugging $\sum_{i=1}^k \sigma_i^2  \ge k(\frac{1}{4} - o(1))$ into \Cref{eq:Pr-sigma}, we get
\begin{align*}
    \Pr\left[\sum_{i=1}^k X_{(i)} \le \frac{k}{2} ~\bigg|~ \bm p\right] & \ge \Phi\left(\frac{-\alpha \sqrt{k\log k}}{\sqrt{k(\frac{1}{4} - o(1))}}\right) - \frac{C_1}{\sqrt{k(\frac{1}{4} - o(1))}} \\
    & = \Phi\left(\frac{- 2\alpha \sqrt{\log k}}{1 - o(1)}\right) - \frac{2C_1}{(1 - o(1))\sqrt{k}}.
\end{align*}
Using \Cref{lem:normal-CDF} with $x = \frac{2\alpha \sqrt{\log k}}{1 - o(1)}$, we have
\[ \Phi\left(\frac{- 2\alpha \sqrt{\log k}}{1 - o(1)}\right) \ge \frac{1}{\sqrt{2\pi}} \frac{2\alpha\sqrt{\log k}(1-o(1))}{4\alpha^2 \log k + 1} e^{-\frac{4\alpha^2 \log k}{2(1-o(1))}} = \frac{1}{\sqrt{2\pi}} \frac{1-o(1)}{2\alpha \sqrt{\log k}} \frac{1}{k^{\frac{2\alpha^2}{1-o(1)}}}   \]
which implies 
\[ \Pr\left[\sum_{i=1}^k X_{(i)} \le \frac{k}{2} ~\bigg|~ \bm p\right] \ge  \frac{1}{\sqrt{2\pi}} \frac{1-o(1)}{2\alpha \sqrt{\log k}} \frac{1}{k^{\frac{2\alpha^2}{1-o(1)}}}  - \frac{2C_1}{(1 - o(1))\sqrt{k}},\]
concluding the proof. 
\end{proof}

\begin{proof}[Proof of the second item of \Cref{thm:general-k-comparison}]
To prove $\Gamma^{\bm p}_n(k) < 0$, we use \Cref{lem:Pr-n-upper-bound} and \Cref{lem:Pr-k-lowerbound} to get
\begin{align*}
\Gamma^{\bm p}_n(k) & = \Pr\left[\sum_{i=1}^n X_{(i)} \le \frac{n}{2} ~\bigg|~ \bm p\right] - \Pr\left[\sum_{i=1}^k X_{(i)} \le \frac{k}{2} ~\bigg|~ \bm p\right] \\
& \le \frac{1}{n^{2(a-b)^2}} - \frac{1}{\sqrt{2\pi}}  \frac{1-o(1)}{2\alpha \sqrt{\log k}} \frac{1}{k^{\frac{2\alpha^2}{1-o(1)}}} + \frac{2C_1}{(1-o(1))\sqrt k},
\end{align*}
with probability at least $1 - n^{-2b^2} - n^{-\Omega(1)} = 1 - n^{-\Omega(1)}$, where $\E_{p\sim \dist_n}[p] \ge \frac{1}{2} + \eps_n$ with $\eps_n = a \sqrt{\frac{\log n}{n}}$ for some $a>0$,  $0 < b < a$, $1-F_n(1+\alpha\sqrt{\frac{\log k} k }) = \frac{1}{n^{1+\Omega(1)}}$ for some $\alpha>0$, and $C_1$ is some constant.  Since $k = n^r$, or $n=k^{\frac 1 r}$, 
\begin{align*}
\Gamma^{\bm p}_n(k)
& \le \frac{1}{k^{\frac{2(a-b)^2}{r}}} - \frac{1}{\sqrt{2\pi}}\frac{1-o(1)}{2\alpha \sqrt{\log k}} \frac{1}{k^{\frac{2\alpha^2}{1-o(1)}}} + \frac{2C_1}{(1-o(1))\sqrt k},
\end{align*}
When inequalities $\frac{2\alpha^2}{1-o(1)} < \frac{2(a-b)^2}{r}$ and $\frac{2\alpha^2}{1-o(1)} < \frac{1}{2}$ are satisfied, we have $\Gamma^{\bm p}_n(k) = -O\left(\frac{1}{\sqrt{\log k}}\frac{1}{k^{\frac{2\alpha^2}{1-o(1)}}}\right) < 0$ for sufficiently large $n$.   The former is satisfied when $a > \sqrt{r}\alpha$ and $b$ is sufficiently close to $0$. The latter is satisfied when $\alpha < \frac{1}{2}$. 
\end{proof}

\newpage

\section{Figures}
\label{app:littleOmegaFigures}

\subsection{Optimal Congress Size}
\label{app:opt-congress-simulations}
\begin{figure}[htb]
    \centering
    \includegraphics[width=.99\textwidth]{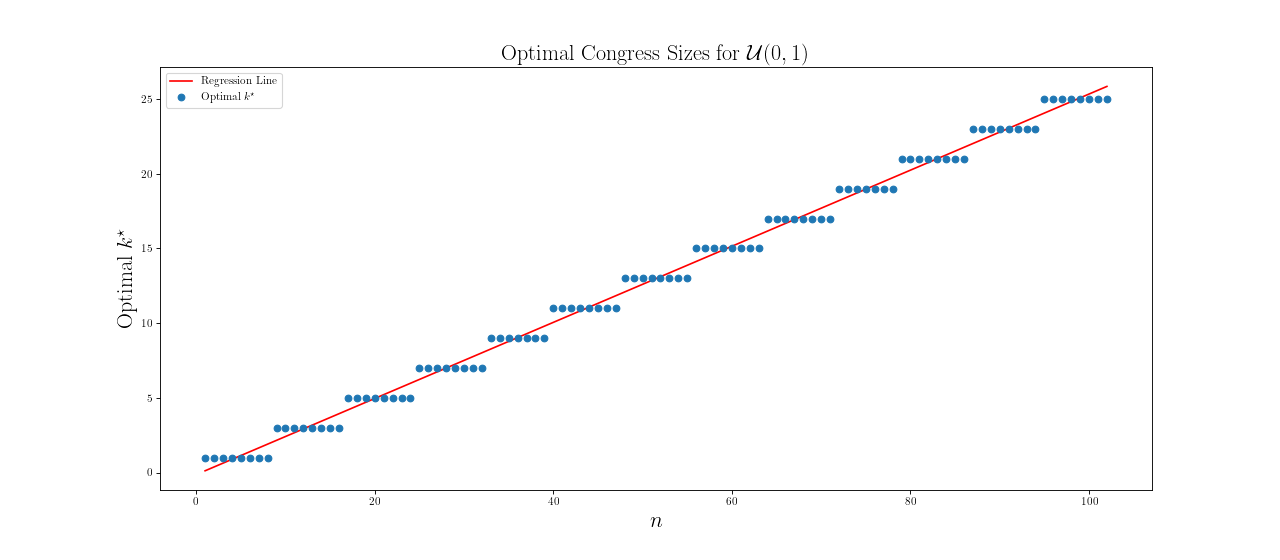}
    \caption{Optimal value of $k$ for $\mathcal{U}(0, 1)$ competence levels following their expectation. The line of best fit is very close to $n/4$.}
    \label{fig:optimal-uniform}
\end{figure}
\newpage
\subsection{Real-world congress sizes}
\label{app:real-world}
\begin{figure}[!hbt]
    \centering
    \includegraphics[width=.99\textwidth]{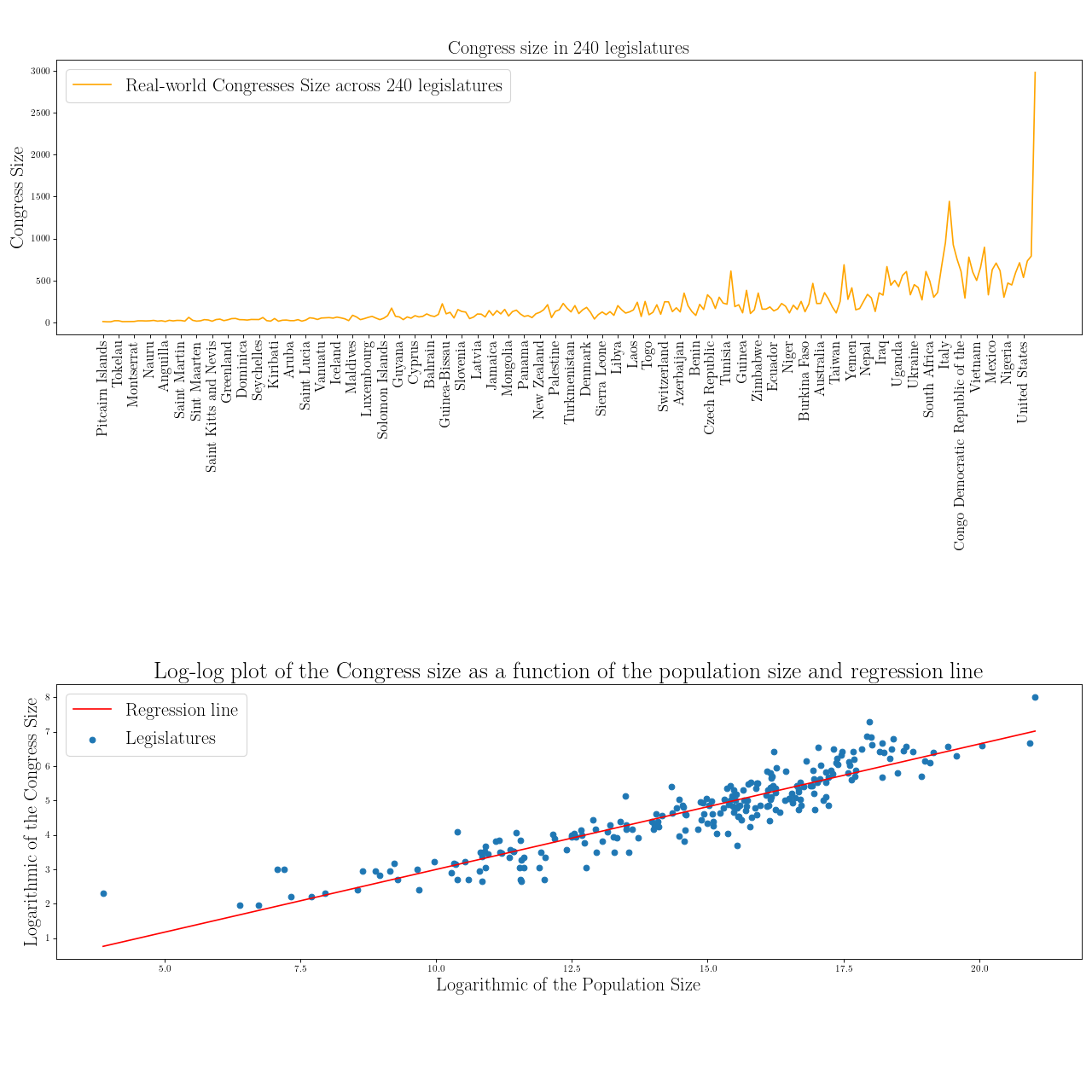}
    \caption{Congress sizes in 240 legislatures (top) and log-log plot of the Congress size as a function of the Population size. The regression line yields $\log k = 0.36 \log n - 0.65$, or $k = c n^{0.36}$, with a coefficient of determination $R^2 = 0.85$.}
    \label{fig:optCongr}
\end{figure}

\subsection{Small congresses outperform majority voting}
\label{app:outperforming-figures}

\begin{figure}[H]
    \centering
    \includegraphics[width=.99\textwidth]{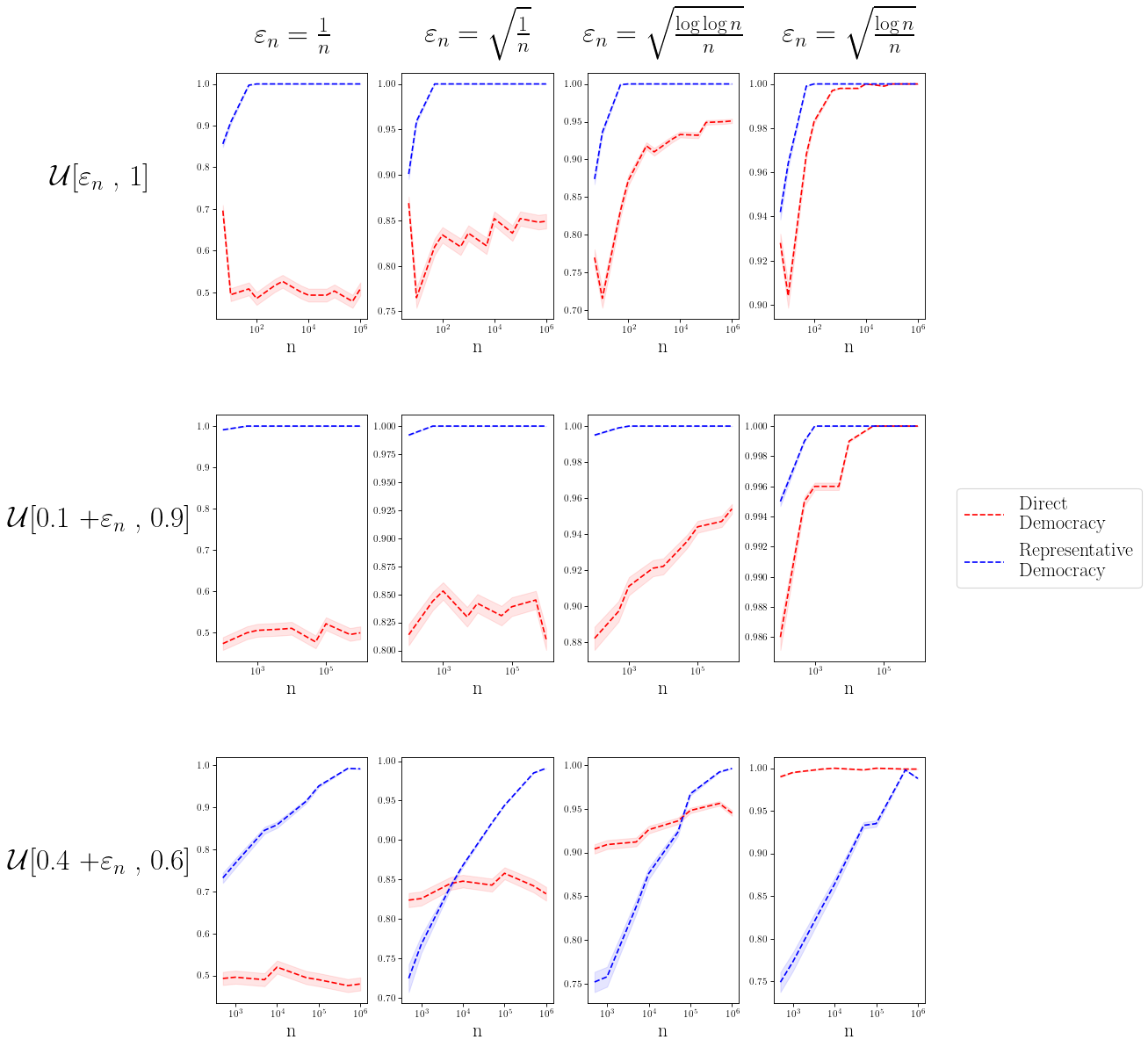}
    \caption{Estimates of $\Pr[\sum_{i=1}^k X_{(i)} > \frac{k}{2}~|~ \bm p]$ (Representative Democracy) and $\Pr[\sum_{i=1}^n X_{(i)} > \frac{n}{2}~|~ \bm p]$ (Direct Democracy) with $95\%$ confidence intervals as a function of the population size for different values of $\eps_n$, with $k=n^{0.36}$ and $\dist_n = \mathcal{U}[0.4+\eps_n, 0.6].$ For large society biases, the population size needs to reach a critical mass for the congress to outperform direct democracy.
    Note that $\mathbb{E}[p_i]=\frac{1 + \eps_n}{2}$ so $\eps_n$ can be thought of as the bias of society towards the correct answer. The top image is for $L=0$, the middle one is for $L=0.1$ and the bottom one for $L=0.4.$
}
    \label{fig:sim}
\end{figure}
Unsurprisingly, the larger the bias, the smaller the gain. 
For $L \leq 0.1$ and a bias of order $\sqrt{\log n / n}$, there is a no gain from relying on the congress, while if the bias is of order $\sqrt{\log\log n / n}$, there is positive gain. Yet, for $L = 0.4$, a bias of order $\sqrt{\log n / n}$ systematically yields a strictly negative gain for $n\leq 10^6$. 

\newpage

\newpage

\section{Distribution Examples}

\subsection{Distributions satisfying \Cref{thm:dictatorship}}
\label{app:dists-satsifying-dictatorship}
We recall the conditions on competency distributions $\dist_n$ under which $\Gamma^{\bm p}_n(1)>0$ or $\Gamma^{\bm p}_n(1)<0$ in \Cref{thm:dictatorship}: 
for $\Gamma^{\bm p}_n(1)>0$, we require $\E_{\dist_n}[p] \le \frac{1}{2} + a \sqrt{\frac{\log n}{n}}$ and $f_n(x) \ge \underline C (1-x)^{\underline \beta - 1}$ for $x\in [1-\underline \delta, 1]$ with constants $a, \underline C, \underline \beta, \underline \delta > 0$ such that $a < \sqrt{ \E_{\dist_n}[p(1-p)] \cdot \min\{1, 2/{\underline{\beta}} \} }$;
for $\Gamma^{\bm p}_n(1)<0$, we require $\E_{\dist_n}[p] \ge \frac{1}{2} + a \sqrt{\frac{\log n}{n}}$ and $f_n(x) \le \overline C$ for $x\in [1-\overline \delta, 1]$ with constants $a, \overline C, \overline \delta > 0$ such that $a > \frac{1}{\sqrt 2}$.  We give examples of beta distributions and uniform distributions satisfying those conditions: 

\begin{example}
\hfill 
\begin{itemize}
    \item Beta distributions: Consider $\dist_n = \mathrm{Beta}(\beta+\eps_n, \beta)$, where $\E_{\dist_n}[p] = \frac{\beta + \eps_n}{2\beta+\eps_n} = \frac{1}{2} + \frac{\eps_n}{4\beta+2\eps_n}$ and $f_n(x) = \frac{1}{\mathrm{B}(\beta + \eps_n, \beta)}x^{\beta+\eps_n-1}(1-x)^{\beta-1}$ where $\mathrm B(\alpha, \beta) = \frac{\Gamma(\alpha) \Gamma(\beta)}{\Gamma(\alpha+\beta)}$.
    Let $\beta$ be a constant and suppose $\eps_n = 4\beta a\sqrt{\frac{\log n}{n}}$.  Since $\eps_n\approx 0$, we have $\E_{\dist_n}[p(1-p)] \approx \frac{1}{4} - \frac{1}{8\beta + 4}$.  
    \begin{itemize}
        \item For $\Gamma^{\bm p}_n(1) > 0$: First, we have $f_n(x) \ge \underline{C}(1-x)^{\beta-1}$ because $\mathrm B(\beta+\eps_n, \beta)$ is upper bounded and $x^{\beta+\eps_n-1}$ is lower bounded for $x$ close to $1$. 
        In addition, $\E_{\dist_n}[p] \le \frac{1}{2} + \frac{\eps_n}{4\beta} = \frac{1}{2} + a\sqrt{\frac{\log n}{n}}$.  When $a < \sqrt{ \E_{\dist_n}[p(1-p)] \cdot \min\{1, 2/{\beta} \} } \approx \sqrt{ (\frac{1}{4} - \frac{1}{8\beta + 4}) \cdot \min\{1, 2/{\beta} \} }$, the condition is satisfied. 
        \item For $\Gamma^{\bm p}_n(1) < 0$: Clearly, $f_n(x) \le \frac{1}{\mathrm{B}(\beta + \eps_n, \beta)} \le \overline{C}<\infty$.
        In addition, $\E_{\dist_n}[p] \approx \frac{1}{2} + \frac{\eps_n}{4\beta} = \frac{1}{2} + a\sqrt{\frac{\log n}{n}}$.  When $a > \frac{1}{\sqrt 2}$, the condition is satisfied. 
    \end{itemize}
    
    \item Uniform distributions: Let $\dist_n=\mathcal U(2\eps_n, 1)$, where $\E_{\dist_n}[p] = \frac{1}{2} + \eps_n$ and $f_n(x) = \frac{1}{1 - 2\eps_n}$.  Let $\eps_n = a\sqrt{\frac{\log n}{n}}$.  Since $\eps_n\approx 0$, we have $\underline{C} = 1 \le f_n(x) \le 2 = \overline{C}$.  Then 
    \begin{itemize}
        \item For $\Gamma^{\bm p}_n(1) > 0$:  the condition is satisfied when $a < \sqrt{ \E_{\dist_n}[p(1-p)] \cdot \min\{1, 2/{\underline{\beta}} \} } \approx \sqrt{ \frac{1}{6} }$ (here $\underline{\beta} = 1$). 
        \item For $\Gamma^{\bm p}_n(1) < 0$: the condition is satisfied when $a > \frac{1}{\sqrt 2}$. 
    \end{itemize}
\end{itemize}
\end{example}

\subsection{Distributions satisfying \Cref{thm:general-k-comparison}}
\label{app:dists-satsifying-general-k}
We recall the conditions on competency distribution $\dist_n$ under which $\Gamma^{\bm p}_n(k) > 0$ or $\Gamma^{\bm p}_n(k) < 0$ in \Cref{thm:general-k-comparison}: for $\Gamma^{\bm p}_n(k) > 0$, we require that its mean satisfies $\E_{\dist_n}[p] \le \frac{1}{2}  + a \sqrt{\frac{\log n}{n}}$ and CDF satisfies $1 - F_n(\frac{1}{2} + \alpha \sqrt{\frac{\log k}{k}}) \ge \frac{k}{n} + \Omega(\sqrt{\frac{\log n}{n}})$ for constants $a, \alpha > 0$ such that $a<\sqrt{\E_{\dist_n}[p(1-p)]}$ and $\alpha > \frac{a}{2\sqrt{r\cdot \E_{\dist_n}[p(1-p)]}}$; for $\Gamma^{\bm p}_n(k) < 0$, we require that its mean satisfies $\E_{\dist_n}[p] \ge \frac{1}{2} + a \sqrt{\frac{\log n}{n}}$ and CDF satisfies $1-F_n(\frac{1}{2} + \alpha\sqrt{\frac{\log k} k }) \le \frac{1}{n^{1+\Omega(1)}}$ for constants $a, \alpha>0$ such that $\alpha < \frac 1 2$ and $a > \sqrt{r} \alpha$.  We give examples of normal distributions and beta distributions satisfying those conditions.  
\begin{example}
Recall that $k=n^r$ for some constant $0<r<1$. 
In this example, we show that distributions with large variance are more likely to satisfy the condition for $\Gamma^{\bm p}_n(k) > 0$ while distributions with small variance satisfy the condition for $\Gamma^{\bm p}_n(k) < 0$.  We consider normal and beta distributions.
\begin{itemize}
    \item Normal distributions: Let $\dist_n$ be the distribution of $p \sim \mathcal N(\mu_n=\frac{1}{2} + a\sqrt{\frac{\log n}{n}}, \sigma_n^2 = \frac{\sigma^2}{k})$ conditioning on $p\in[0, 1]$, where $\sigma^2$ is a constant to be chosen.  We note that for large $k$ (or large $n$), the variance $\sigma_n^2 = \frac{\sigma^2}{k}$ is small, so $p$ is centered around $\mu_n\approx \frac{1}{2}$, thus $\E_{\dist_n}[p(1-p)] \approx \frac{1}{4}$. 
    \begin{itemize}
        \item For $\Gamma^{\bm p}_n(k) > 0$: Let $a, \alpha$ be any constants such that $a < \sqrt{E_{\dist_n}[p(1-p)]} \approx \frac{1}{4}$, $\alpha > \frac{a}{2\sqrt{r\cdot \E_{\dist_n}[p(1-p)]}} \approx \frac{a}{\sqrt r}$. We claim that the CDF condition $1 - F_n(\frac{1}{2} + \alpha \sqrt{\frac{\log k}{k}}) \ge \frac{k}{n} + \Omega(\sqrt{\frac{\log n}{n}})$ is satisfied when $\sigma^2 > \frac{r\alpha^2}{2\min\{1-r, 1/2\}}$. (A proof is given below). 
        \item For $\Gamma^{\bm p}_n(k) < 0$: Let $a, \alpha$ be any constants such that $\alpha < \frac 1 2$ and $a > \sqrt{r} \alpha$. We claim that the CDF condition $1-F_n(\frac{1}{2} + \alpha\sqrt{\frac{\log k} k }) \le \frac{1}{n^{1+\Omega(1)}}$ is satisfied when $\sigma^2 < \frac{r\alpha^2}{2(1+\Omega(1))}$. 
    \end{itemize}
    \item Beta distributions: Let $\dist_n=\mathrm{Beta}(\beta + 4\beta\eps_n, \beta)$ where $\beta = \gamma k$ for some constant $\gamma$ to be chosen, and $\eps_n = a\sqrt{\frac{\log n}n}$.  For simplicity we suppose $r<\frac{1}{2}$, so $\beta\eps_n = \gamma k a \sqrt{\frac{\log n}n} =  \gamma a \frac{\sqrt{\log n}}{n^{1/2-r}} \to 0$ as $n$ grows.  Then the mean satisfies $\E_{\dist_n}[p] = \frac{\beta + 4\beta\eps_n}{2\beta+4\beta\eps_n} = \frac{1}{2} + \frac{\eps_n}{1+2\beta\eps_n} \approx \frac{1}{2} + \eps_n = \frac{1}{2} + a\sqrt{\frac{\log n} n}$.  The variance of $\dist_n=\mathrm{Beta}(\beta + 4\beta\eps_n, \beta)$ is of the order $\frac{1}{8\beta} = \frac{1}{8\gamma k}$, which is larger when $\gamma$ is smaller.  Since the variance is small when $k$ is large, $p\sim\dist_n$ is centered around $\frac{1}{2}$ and hence $\E_{\dist_n}[p(1-p)] \approx \frac{1}{4}$. 
    \begin{itemize}
        \item For $\Gamma^{\bm p}_n(k) > 0$: Let $a, \alpha$ be any constants such that $a < \sqrt{E_{\dist_n}[p(1-p)]} \approx \frac{1}{4}$, $\alpha > \frac{a}{2\sqrt{r\cdot \E_{\dist_n}[p(1-p)]}} \approx \frac{a}{\sqrt r}$. We claim that the CDF condition $1 - F_n(\frac{1}{2} + \alpha \sqrt{\frac{\log k}{k}}) \ge \frac{k}{n} + \Omega(\sqrt{\frac{\log n}{n}})$ is satisfied when $\gamma < \frac{1}{4\alpha^2}\left(\frac{1}{2r} - 1\right)$. 
        \item For $\Gamma^{\bm p}_n(k) < 0$: Let $a, \alpha$ be any constants such that $\alpha < \frac 1 2$ and $a > \sqrt{r} \alpha$. We claim that the CDF condition $1-F_n(\frac{1}{2} + \alpha\sqrt{\frac{\log k} k }) \le \frac{1}{n^{1+\Omega(1)}}$ is satisfied when $\gamma > \frac{1}{4\alpha^2}\left(\frac{1+\Omega(1)}r + 1\right)$. 
    \end{itemize}
\end{itemize}
\end{example}

The rest of this section proves the above claims. 
\begin{proof}[Proof for normal distributions]
Since the random variable $p \sim \mathcal N(\mu_n=\frac{1}{2} + a\sqrt{\frac{\log n}{n}}, \sigma_n^2 = \frac{\sigma^2}{k})$ is below $0$ or above $1$ with exponentially small probability, we can approximate the PDF or CDF of $\dist_n$ by the PDF and CDF of $\mathcal N(\mu_n=\frac{1}{2} + a\sqrt{\frac{\log n}{n}}, \sigma_n^2 = \frac{\sigma^2}{k})$, so
\begin{align*}
    1 - F_n(\frac{1}{2} + \alpha \sqrt{\frac{\log k}{k}}) \approx \int_{\alpha \sqrt{\frac{\log k}{k}}}^\infty \frac{1}{\sqrt{2\pi}\sigma_n} e^{-\frac{(x - \mu_n)^2}{2\sigma_n^2}} \dd x & = \int_{\frac{\alpha \sqrt{\frac{\log k}{k}} - \mu_n}{\sigma_n}}^\infty \frac{1}{\sqrt{2\pi}} e^{-\frac{t^2}{2}} \dd t \\
    & = \int_{\frac{\sqrt k(\alpha \sqrt{\frac{\log k}{k}} - a\sqrt{\frac{\log n}n})}{\sigma}}^\infty \frac{1}{\sqrt{2\pi}} e^{-\frac{t^2}{2}} \dd t \\
    & = \int_{\frac{\alpha}{\sigma} \sqrt{\log k}(1-o(1))}^\infty \frac{1}{\sqrt{2\pi}} e^{-\frac{t^2}{2}} \dd t
\end{align*}
Using $\frac{1}{\sqrt{2\pi}} \frac{x}{x^2+1} e^{-\frac{x^2}{2}} \le \int_x^\infty \frac{1}{\sqrt{2\pi}} e^{-\frac{t^2}{2}}\dd t \le \frac{1}{\sqrt{2\pi}}\frac{1}{x} e^{-\frac{x^2}{2}}$ (\Cref{lem:normal-CDF}), we get
\[
\frac{1}{\sqrt{2\pi}} \frac{\frac{\alpha}{\sigma} \sqrt{\log k}}{(\frac{\alpha}{\sigma} \sqrt{\log k})^2 + 1} e^{-\frac{(\frac{\alpha}{\sigma} \sqrt{\log k})^2}{2}} \le 1 - F_n(\frac{1}{2} + \alpha \sqrt{\frac{\log k}{k}}) \le \frac{1}{\sqrt{2\pi}} \frac{1}{\frac{\alpha}{\sigma} \sqrt{\log k}(1-o(1))} e^{-\frac{(\frac{\alpha}{\sigma} \sqrt{\log k}(1-o(1)))^2}{2}}, 
\]
or asymptotically 
\[
\Omega\left( \frac{1}{\sqrt{\log k}} k^{-\frac{\alpha^2}{2\sigma^2}} \right)\le 1 - F_n(\frac{1}{2} + \alpha \sqrt{\frac{\log k}{k}}) \le O\left( \frac{1}{\sqrt{\log k}} k^{-\frac{\alpha^2}{2\sigma^2}(1-o(1))} \right). 
\]
Plugging in $k=n^r$, 
\[
\Omega\left( \frac{1}{\sqrt{\log n}} \frac{1}{n^{r\frac{\alpha^2}{2\sigma^2}}} \right)\le 1 - F_n(\frac{1}{2} + \alpha \sqrt{\frac{\log k}{k}}) \le O\left( \frac{1}{\sqrt{\log n}} \frac{1}{n^{r\frac{\alpha^2}{2\sigma^2}(1-o(1))}} \right). 
\]

To satisfy the condition for $\Gamma^{\bm p}_n(k) > 0$, it suffices to require
\[ 1- F_n(\frac{1}{2} + \alpha \sqrt{\frac{\log k}{k}}) \ge \Omega\left( \frac{1}{\sqrt{\log n}} \frac{1}{n^{r\frac{\alpha^2}{2\sigma^2}}} \right) \ge \frac{k}{n} + \Omega\left(\sqrt{\frac{\log n} n}\right) = \frac{1}{n^{1-r}} + \Omega\left(\frac{\sqrt{\log n}}{n^{1/2}}\right),\]
which is satisfied when
\[ r\frac{\alpha^2}{2\sigma^2} < \min\{1-r, 1/2\},\]
i.e., $\sigma^2 > \frac{r\alpha^2}{2\min\{1-r, 1/2\}}$.

For $\Gamma^{\bm p}_n(k) < 0$, it suffices to require
\[1 - F_n(\frac{1}{2} + \alpha \sqrt{\frac{\log k}{k}}) \le O\left( \frac{1}{\sqrt{\log n}} \frac{1}{n^{r\frac{\alpha^2}{2\sigma^2}(1-o(1))}} \right) \le \frac{1}{n^{1 + \Omega(1)}},\]
which is satisfied when 
\[ r\frac{\alpha^2}{2\sigma^2} > 1 + \Omega(1),\]
i.e., $\sigma^2 < \frac{r\alpha^2}{2(1+\Omega(1))}$. 
\end{proof}

We then prove the claims for beta distributions. 
\begin{proof}[Proof for beta distributions]
For $\dist_n = \mathrm{Beta}(\beta+4\beta\eps_n, \beta)$, we have
\begin{align}\label{eq:beta-integral}
    1 - F_n(\frac{1}{2} + \alpha \sqrt{\frac{\log k}k}) & = \int_{\frac{1}{2} + \alpha \sqrt{\frac{\log k}k}}^1 \frac{1}{\mathrm B(\beta+4\beta\eps_n, \beta)} x^{\beta+4\beta\eps_n-1}(1-x)^{\beta-1} \dd x\nonumber \\
    & = \int_{\alpha \sqrt{\frac{\log k}k}}^{\frac 1 2} \frac{1}{\mathrm B(\beta+4\beta\eps_n, \beta)} \left(\frac{1}{2} + t\right)^{\beta+4\beta\eps_n-1}\left(\frac{1}{2}-t\right)^{\beta-1} \dd t
\end{align}
where $\mathrm B(\beta+4\beta\eps_n, \beta) = \frac{\Gamma(\beta+4\beta\eps_n)\Gamma(\beta)}{\Gamma(2\beta+4\beta\eps_n)}$, and $\beta = \gamma k$.  We note that since $r<\frac{1}{2}$, $4\beta\eps_n = 4\gamma (n^r) a\frac{\sqrt{\log n}}{n^{1/2}} = o(1) < 1$ as $n$ grows large. 

\textbf{The case of $\Gamma^{\bm p}_n(k) < 0$.}
We first consider the case of $\Gamma^{\bm p}_n(k) < 0$.
We note that by monotonicity of $\Gamma(\cdot)$, assuming $\beta = \gamma k$ is an integer,  
\begin{align*}
    \mathrm B(\beta+4\beta\eps_n, \beta) = \frac{\Gamma(\beta+4\beta\eps_n)\Gamma(\beta)}{\Gamma(2\beta+4\beta\eps_n)} & \ge \frac{\Gamma(\beta)\Gamma(\beta)}{\Gamma(2\beta+1)} \\
    & = \frac{(\beta - 1)!(\beta -1)!}{(2\beta)!} \\
    & = \frac{\beta!\beta!}{(2\beta)!\beta^2}.
\end{align*}
By Stirling's approximation, $\frac{n!n!}{(2n)!} \ge \frac{\sqrt{\pi n}}{4^n}$, hence 
\begin{align*}
    \mathrm B(\beta+4\beta\eps_n, \beta) \ge \frac{\sqrt{\pi \beta}}{4^{\beta}\beta^2}. 
\end{align*}
Plugging into \Cref{eq:beta-integral}, 
\begin{align*}
    1 - F_n(\frac{1}{2} + \alpha \sqrt{\frac{\log k}k}) & \le \int_{\alpha \sqrt{\frac{\log k}k}}^{\frac 1 2} \frac{4^{\beta}\beta^2} {\sqrt{\pi \beta}}\left(\frac{1}{2} + t\right)^{\beta+4\beta\eps_n-1}\left(\frac{1}{2}-t\right)^{\beta-1} \dd t\\
    (\text{because } \frac{1}{2}+t \le 1)~~& \le \int_{\alpha \sqrt{\frac{\log k}k}}^{\frac 1 2} \frac{4^{\beta}\beta^2} {\sqrt{\pi \beta}}\left(\frac{1}{2} + t\right)^{\beta-1}\left(\frac{1}{2}-t\right)^{\beta-1} \dd t\\
    & = \int_{\alpha \sqrt{\frac{\log k}k}}^{\frac 1 2} \frac{4\beta^2}{\sqrt{\pi \beta}}\left(1 + 2t\right)^{\beta-1}\left(1-2t\right)^{\beta-1} \dd t\\
    & = \int_{\alpha \sqrt{\frac{\log k}k}}^{\frac 1 2} \frac{4\beta^2}{\sqrt{\pi \beta}}\left(1 - 4t^2\right)^{\beta-1} \dd t\\
    (\text{using } 1-x \le e^{-x})~~& \le \int_{\alpha \sqrt{\frac{\log k}k}}^{\frac 1 2} \frac{4\beta^2}{\sqrt{\pi \beta}} e^{-4t^2(\beta-1)} \dd t\\
    & \le \int_{\alpha \sqrt{\frac{\log k}k}}^{\frac 1 2} \frac{4e\beta^2}{\sqrt{\pi \beta}} e^{-4t^2\beta} \dd t\\
    (\text{let } u=\sqrt{8\beta}t)~~ & = \int_{\alpha \sqrt{8 \gamma \log k}}^{\frac 1 2 \sqrt{8\gamma k}} \frac{4e\beta}{\sqrt{8\pi}} e^{-\frac{u^2}{2}} \dd u\\
\end{align*}
Using $\int_x^\infty e^{-\frac{u^2}{2}}\dd u \le \frac{1}{x}e^{-\frac{x^2}{2}}$ (\Cref{lem:normal-CDF}), we get 
\begin{align*}
    1 - F_n(\frac{1}{2} + \alpha \sqrt{\frac{\log k}k}) \le \int_{\alpha \sqrt{8 \gamma \log k}}^{\infty} \frac{4e\beta}{\sqrt{8\pi}} e^{-\frac{u^2}{2}} \dd u & \le \frac{4e\beta}{\sqrt{8\pi}} \frac{1}{\alpha \sqrt{8 \gamma \log k}}e^{-\frac{(\alpha \sqrt{8 \gamma \log k})^2}{2}} \\
    & = \frac{e\gamma k}{2\alpha \sqrt{\pi \gamma \log k}} k^{-4\alpha^2\gamma} \\
    & = O\left( \frac{1}{\sqrt{\log k}} \frac{1}{k^{4\alpha^2\gamma-1}}\right) \\
    & = O\left( \frac{1}{\sqrt{\log n}} \frac{1}{n^{r(4\alpha^2\gamma-1)}}\right).
\end{align*}
To satisfy the CDF condition, it suffices to require 
\begin{align*}
    1 - F_n(\frac{1}{2} + \alpha \sqrt{\frac{\log k}k}) \le O\left( \frac{1}{\sqrt{\log n}} \frac{1}{n^{r(4\alpha^2\gamma-1)}}\right) \le \frac{1}{n^{1+\Omega(1)}}, 
\end{align*}
which is satisfied when 
\[ r(4\alpha^2\gamma-1) > 1+\Omega(1),\]
i.e., $\gamma > \frac{1}{4\alpha^2}\left(\frac{1+\Omega(1)}r + 1\right)$.

\textbf{The case of $\Gamma^{\bm p}_n(k) > 0$.}
Now we consider the case of $\Gamma^{\bm p}_n(k) > 0$.
We note that by monotonicity of $\Gamma(\cdot)$, assuming $\beta = \gamma k$ is an integer,  
\begin{align*}
    \mathrm B(\beta+4\beta\eps_n, \beta) = \frac{\Gamma(\beta+4\beta\eps_n)\Gamma(\beta)}{\Gamma(2\beta+4\beta\eps_n)} & \le \frac{\Gamma(\beta+1)\Gamma(\beta)}{\Gamma(2\beta)} \\
    & = \frac{\beta!(\beta -1)!}{(2\beta-1)!} \\
    & = \frac{\beta!\beta!}{(2\beta)!}\frac{2\beta}{\beta}.
\end{align*}
By Stirling's approximation, $\frac{n!n!}{(2n)!} \le \frac{\sqrt{\pi n}}{4^n(1-1/8n)} \le \frac{3}{2}\frac{\sqrt{\pi n}}{4^n}$, hence 
\begin{align*}
    \mathrm B(\beta+4\beta\eps_n, \beta) \le \frac{3\sqrt{\pi \beta}}{4^{\beta}}. 
\end{align*}
Plugging into \Cref{eq:beta-integral}, 
\begin{align*}
    1 - F_n(\frac{1}{2} + \alpha \sqrt{\frac{\log k}k}) & \ge \int_{\alpha \sqrt{\frac{\log k}k}}^{\frac 1 2} \frac{4^{\beta}}{3\sqrt{\pi \beta}}\left(\frac{1}{2} + t\right)^{\beta+4\beta\eps_n-1}\left(\frac{1}{2}-t\right)^{\beta-1} \dd t\\
    (4\beta\eps \le 1)~~& \ge \int_{\alpha \sqrt{\frac{\log k}k}}^{\frac 1 2} \frac{4^{\beta}} {3\sqrt{\pi \beta}}\left(\frac{1}{2} + t\right)^{\beta}\left(\frac{1}{2}-t\right)^{\beta} \dd t\\
    & = \int_{\alpha \sqrt{\frac{\log k}k}}^{\frac 1 2} \frac{1}{3\sqrt{\pi \beta}}\left(1 + 2t\right)^{\beta}\left(1-2t\right)^{\beta} \dd t\\
    & = \int_{\alpha \sqrt{\frac{\log k}k}}^{\frac 1 2} \frac{1}{3\sqrt{\pi \beta}}\left(1 - 4t^2\right)^{\beta} \dd t\\
  (\text{using } (1-\frac{x}{n})^n \ge e^{-x}(1-\frac{x^2}{n}) \text{ for } x\le n)~~  & \ge \int_{\alpha \sqrt{\frac{\log k}k}}^{\frac 1 2} \frac{1}{3\sqrt{\pi \beta}} e^{-4\beta t^2}(1 - 16\beta t^4) \dd t\\
    (\text{let } u=\sqrt{8\beta}t)~~ & = \int_{\alpha \sqrt{8 \gamma \log k}}^{\frac 1 2 \sqrt{8\gamma k}} \frac{1}{3\sqrt{8\pi}\beta} e^{-\frac{u^2}{2}}(1 - \frac{u^4}{4\beta})\dd u\\
    (1 - \frac{u^4}{4\beta} \ge \frac{3}{4} \text{ for } u\le \beta^{1/4})~~& \ge \int_{\alpha \sqrt{8 \gamma \log k}}^{(\gamma k)^{1/4}} \frac{1}{3\sqrt{8\pi}\beta} e^{-\frac{u^2}{2}}\frac{3}{4}\dd u \\
    & = \frac{1}{4\sqrt{8\pi}\beta} \int_{\alpha \sqrt{8 \gamma \log k}}^{(\gamma k)^{1/4}} e^{-\frac{u^2}{2}}\dd u
\end{align*}
Using $\int_x^y e^{-\frac{u^2}{2}}\dd u \ge (-\frac{u}{u^2+1})e^{-\frac{u^2}{2}}\Big|_x^y$ (see the proof of \Cref{lem:normal-CDF}), we get 
\begin{align*}
    1 - F_n(\frac{1}{2} + \alpha \sqrt{\frac{\log k}k}) & \ge \frac{1}{4\sqrt{8\pi}\beta} \int_{\alpha \sqrt{8 \gamma \log k}}^{(\gamma k)^{1/4}} e^{-\frac{u^2}{2}}\dd u \\
    & \ge \frac{1}{4\sqrt{8\pi}\beta} \left( \frac{\alpha\sqrt{8\gamma \log k}}{\alpha^2 8\gamma \log k + 1} e^{-\frac{\alpha^2 8 \gamma \log k}{2}} - \frac{(\gamma k)^{1/4}}{\sqrt{\gamma k} + 1} e^{-\frac{\sqrt{\gamma k}}{2}}\right) \\
    & = \frac{1}{4\sqrt{8\pi}\gamma k} \left( \frac{\alpha\sqrt{8\gamma \log k}}{\alpha^2 8\gamma \log k + 1} k^{-4\alpha^2 \gamma} - o\left(e^{-\frac{\sqrt{\gamma k}}{2}}\right)\right) \\
    & = \Omega\left(\frac{1}{\sqrt{\log k}}\frac{1}{k^{4\alpha^2 \gamma + 1}} \right) \\
    & = \Omega\left(\frac{1}{\sqrt{\log n}}\frac{1}{n^{r(4\alpha^2 \gamma + 1)}} \right) \\
\end{align*}
To satisfy the CDF condition, it suffices to require 
\begin{align*}
    1 - F_n(\frac{1}{2} + \alpha \sqrt{\frac{\log k}k}) \ge \Omega\left(\frac{1}{\sqrt{\log n}}\frac{1}{n^{r(4\alpha^2 \gamma + 1)}} \right) \ge \frac{k}{n} + \Omega\left(\sqrt{\frac{\log n}n}\right) = \frac{1}{n^{1-r}} + \Omega\left(\frac{\sqrt{\log n}}{n^{1/2}}\right), 
\end{align*}
which is satisfied when 
\[ r(4\alpha^2\gamma+1) < \min\{1-r, 1/2\} = 1/2\]
i.e., $\gamma < \frac{1}{4\alpha^2}\left(\frac{1}{2r} - 1\right)$.
\end{proof}

\end{document}